\theoremstyle{plain}
\newtheorem{theorem}{Theorem}
\newtheorem{lemma}{Lemma}
\newtheorem{definition}{Definition}
\newcommand{\E}{\mathbb{E}}
\newcommand{\Tr}{{\rm Tr}}
\title{Neural Thermodynamics: Entropic Forces in Deep and Universal Representation Learning}
\begin{document}

\author{Liu Ziyin$^{1,3,*}$, Yizhou Xu$^{2,*}$, Isaac Chuang$^1$\\
$^1$\textit{Massachusetts Institute of Technology}\\
$^2$\textit{École Polytechnique Fédérale de Lausanne}\\
$^3$\textit{NTT Research}
}
\maketitle
\def\thefootnote{*}\footnotetext{Equal contribution.}\def\thefootnote{\arabic{footnote}}

\begin{abstract}
With the rapid discovery of emergent phenomena in deep learning and large language models, understanding their cause has become an urgent need. Here, we propose a rigorous entropic-force theory for understanding the learning dynamics of neural networks trained with stochastic gradient descent (SGD) and its variants. Building on the theory of parameter symmetries and an entropic loss landscape, we show that representation learning is crucially governed by emergent entropic forces arising from stochasticity and discrete-time updates. These forces systematically break continuous parameter symmetries and preserve discrete ones, leading to a series of gradient balance phenomena that resemble the equipartition property of thermal systems. These phenomena, in turn, (a) explain the universal alignment of neural representations between AI models and lead to a proof of the Platonic Representation Hypothesis, and (b) reconcile the seemingly contradictory observations of sharpness- and flatness-seeking behavior of deep learning optimization. Our theory and experiments demonstrate that a combination of entropic forces and symmetry breaking is key to understanding emergent phenomena in deep learning.
\end{abstract}

\vspace{-2mm}
\section{Introduction}
\vspace{-1mm}

Modern neural networks trained with stochastic gradient descent (SGD) exhibit a complex plethora of emergent behaviors -- emergence of capabilities \cite{wei2022emergent,michaud2023quantization,arora2023theory}, progressive sharpening and flattening \cite{cohen2021gradient, cohen2024understanding}, phase-transition like behaviors \cite{ziyin2023zeroth, ziyin2024symmetry}, and universal representational alignment across models \cite{huh2024platonic,yu2024representation,tjandrasuwita2025understanding,shu2025large} -- that are difficult to explain through loss minimization alone. These behaviors mirror phenomena found in physical systems at finite temperature, suggesting that deep learning dynamics are shaped not just by explicit optimization but also by implicit forces arising from stochasticity and discrete updates. These implicit forces have long been associated with the phenomenon of ``implicit bias'' in deep learning \cite{arora2019implicit, galanti2022sgd, damian2022self,xie2024implicit}, but their precise mathematical nature remains elusive. In physics, such effects are often captured by \textit{entropic forces}—macroscopic forces that emerge from the system’s statistical tendencies rather than its energy landscape alone \cite{cai2010friedmann}. The power of this framework lies in the notion of an \textit{effective entropy}, which plays the role of a potential whose gradients define the entropic force. Identifying this effective entropy not only reveals what the system is implicitly optimizing, but also opens the door to leveraging theoretical tools from statistical physics to analyze and improve AI models.

\paragraph{Contributions.} We formalize this connection between stochastic learning dynamics and entropic forces through the lens of symmetry and representation learning to:
\begin{enumerate}[noitemsep,topsep=0pt, parsep=0pt,partopsep=0pt, leftmargin=15pt]
    \item Derive an entropic loss function and show that the entropic forces of SGD \textit{break continuous parameter symmetries while preserving discrete ones} (Section~\ref{sec: entropic loss}).
    \item Show that the symmetry breaking due to entropic forces gives rise to a family of \textit{equipartition theorems} that predict the gradient alignment phenomena (Section~\ref{sec: equipartition}).
    \item Explain and unify two seemingly disparate but universal observations -- \textit{progressive sharpening of the loss landscape} and the \textit{emergence of universal representations} -- as consequences of entropic forces (Section~\ref{sec: universal}).
\end{enumerate}

Our theory establishes a principled framework -- akin to a thermodynamics of deep learning -- that unifies several universal phenomena under a single formalism. The results suggest that the entropic loss landscape, shaped by both optimization and entropy, plays a foundational role in understanding learning dynamics and emergent phenomena. Full derivations and experimental validations are provided in the appendix.

\vspace{-2mm}
\section{Related Work}
\vspace{-1mm}
\paragraph{Modified Loss and Effective Landscape.}
The concept of modified or effective losses has emerged as a critical framework for understanding the implicit biases induced by stochastic gradient descent (SGD) in deep learning, which differs from another line of work \citep{yaida2018fluctuation,zhang2019algorithmic,ziyin2022strength,liu2021noise,ziyin2025noise} which leverages the property of stationarity to analyze the stationary distribution of SGD. Ref.~\citep{barrett2020implicit} introduced the notion of a modified loss to analyze the discrete-time dynamics of SGD, demonstrating how discretization implicitly alters the optimization landscape. Similarly,  Refs.~\citep{smith2021origin} and \cite{dandi2022implicit} extended the modified loss formulation to where there is a gradient noise due to minibatch sampling. These works conducted numerical simulations to show that training on the effective loss really approximates the original dynamics \cite{smith2021origin, geiping2021stochastic,kunin2023limiting} and leads to similar generalization performances. In this work, we refer to this type of losses as entropic losses for their associations with theoretical physics. Still, these entropic losses remain poorly understood, and their significance for understanding emergent phenomena in deep learning is not yet appreciated. Our work finds the crucial link between the entropic loss and symmetry-breaking dynamics, which is important for understanding the various intriguing nonlinear phenomena of representation learning. 

\vspace{-2mm}
\paragraph{Parameter Symmetry in Neural Networks.}
Parameter symmetries are shown to play a fundamental role in shaping neural network training dynamics and their emergent properties \citep{li2016symmetry, zhao2022symmetry, ziyin2024parameter, ziyin2024symmetry, ziyin2024remove,ziyin2025parameter}. A series of works showed that continuous symmetries in the loss function give rise to conservation laws, which imply that the learning result of SGD training is strongly initialization-dependent \cite{hidenori2021noether, zhao2022symmetry, marcotte2023abide}. More recent works showed how any stochasticity or discretization effect could break the symmetries in a systematic way such that the learned solution is no longer dependent on the initialization, a hint of universality
\citep{chen2023stochastic,ziyin2024parameter,ziyin2024symmetry,ziyin2025parameter}. Particularly, Ref.~\citep{ziyin2024parameter} developed the formalism of exponential symmetries and proved that any loss function with an exponential symmetry leads to a symmetry-breaking dynamics that converges to unique fixed points. This point can be seen as the dynamical equivalence of our Theorem~\ref{theo: symmetry breaking}, which states that there is essentially no continuous symmetry in the entropic loss. In comparison, our framework takes a different perspective: we study the symmetry from a loss landscape perspective and identify these symmetry-breaking tendencies as entropic forces. This unified perspective enables us to understand the universal learning phenomena with a greatly simplified analysis. 

\vspace{-2mm}
\section{Effective Energy for Stochastic Gradient Learning}\label{sec: entropic loss}
\vspace{-1mm}
Define $\ell(x,\theta)$ to be the per-sample loss function. We can define the empirical risk as 
\begin{equation}
    L(\theta) = \E_\mathcal{B}[\E_{x\in\mathcal{B}}\ell_\gamma(x,\theta)],
\end{equation}
where $\ell_\gamma(x,\theta):=\ell(x,\theta)+\gamma||\theta||^2$, $\mathcal{B}$ represents the minibatch and $\gamma$ represents the weight decay. From a dynamical-system perspective, for an infinitesimal learning rate, the loss function coincides with the Bregman Lagrangian of this dynamics, and so one can leverage the Lagrangian formalism to understand the training of gradient flow \cite{wibisono2016variational}. This is particularly attractive from a theory perspective because modern theoretical physics are also founded on the Lagrangian formalism and this connection allows one to borrow physics intuitions to understand deep learning.

However, simply studying this loss function is insufficient to understand the learning dynamics of SGD at various learning rates $\eta$ due to the stochastic discrete-time nature of SGD. This motivates the definition of an entropic loss $\phi_\eta$ such that running $n$ steps of update on $\phi_\eta(\theta)$ with learning rate $\eta/n$ is the same as running one step of update on $\ell_\gamma$ for any $x$. Taking the limit $n \to \infty$, one can obtain a ``renormalized" loss function for which running gradient flow is the same as running gradient descent for the original loss. With this loss, it becomes possible again to leverage Lagrangian formalism to understand SGD training with discrete-time and stochastic learning. Because $\phi_0$ coincides with running gradient flow on $\ell_\gamma$, one must have that 
\begin{equation}
    \phi_\eta := \ell_\gamma + \eta \phi_1 + \eta^2 \phi_2 + O(\eta^3).
\end{equation}
We can also consider the more general case where the learning rate is a fixed symmetric matrix $\Lambda$ with $||\Lambda||=\eta$. The following theorem derives the entropic loss for this case. Many common algorithms, such as Adam, natural gradient descent, and even a wide range of biologically plausible learning rules \cite{ziyin2025heterosynaptic} can be seen as having matrix learning rate.

\begin{theorem}\label{theo: effective loss} (Entropic Loss)
For fixed $x$, starting from $\theta_0$ run one-step gradient descent with $\Lambda$ on $\ell_\gamma(x,\theta)$ to obtain $\theta_1$. Run $n-$step gradient descent with $\Lambda/n$ on $\phi_\Lambda(x,\theta):=\ell_\gamma(x,\theta)+\phi_{1\Lambda}(x,\theta)+\phi_{2\Lambda}(x,\theta)$ to obtain $\theta_n'$. Then, assuming $||\nabla^3\ell_\gamma(x,\theta)||\leq M$,
\begin{enumerate}[noitemsep,topsep=0pt, parsep=0pt,partopsep=0pt, leftmargin=15pt]
    \item if $\phi_{1\Lambda}(x,\theta) = \frac{1}{4}\nabla\ell_\gamma(x,\theta)^T\Lambda\nabla\ell_\gamma(x,\theta)$, then, $\theta_n'=\theta_1+O(||\Lambda||^3+||\Lambda||^2/n)$;
    \item moreover, if $\phi_{2\Lambda}(x,\theta)=\frac{1}{2}\nabla\ell_\gamma(x,\theta)^T\Lambda\nabla^2\ell(x,\theta)\Lambda\nabla\ell_\gamma(x,\theta)$, then $\theta_n'=\theta_1+O(||\Lambda||^4+||\Lambda||^2/n+||\Lambda||^3/n+||\Lambda||^3M)$.
\end{enumerate}
\end{theorem}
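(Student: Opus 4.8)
The plan is to track the trajectory of $n$-step gradient descent with step $\Lambda/n$ on the modified loss $\phi_\Lambda$ and Taylor-expand it in powers of $\|\Lambda\|$, comparing term by term with the single step $\theta_1 = \theta_0 - \Lambda\nabla\ell(\theta_0)$. Write $g := \nabla\ell(\theta_0)$, $H := \nabla^2\ell(\theta_0)$. First I would expand the one-step target: $\theta_1 = \theta_0 - \Lambda g$ exactly. Then, for the multi-step process on $\phi_\Lambda = \ell + \phi_{1\Lambda} + \phi_{2\Lambda}$, I would set up the recursion $\theta_{k+1}' = \theta_k' - \tfrac{1}{n}\Lambda\nabla\phi_\Lambda(\theta_k')$ and unroll it. The key observation is that $\nabla\phi_{1\Lambda} = \tfrac12 \nabla^2\ell\,\Lambda\nabla\ell = \tfrac12 H\Lambda g + O(\|\Lambda\|\cdot\|\theta'-\theta_0\|)$, so the correction terms are already $O(\|\Lambda\|)$ and $O(\|\Lambda\|^2)$ smaller than the leading drift; this means that to the order claimed one may evaluate all the higher derivatives at $\theta_0$ and treat the intermediate iterates as $\theta_k' = \theta_0 - \tfrac{k}{n}\Lambda g + \text{(corrections)}$.

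For part 1, I would carry the expansion to second order in $\|\Lambda\|$. The $n$-step sum of the leading term $-\tfrac1n\Lambda g$ contributes $-\Lambda g$; the second-order contributions are of two kinds — (i) the Hessian acting along the partial trajectory, $\sum_{k} \tfrac1n \Lambda H (\tfrac{k}{n}\Lambda g) = \tfrac{1}{n^2}\binom{n}{2}\Lambda H\Lambda g \to \tfrac12\Lambda H\Lambda g$ as $n\to\infty$, and (ii) the gradient of $\phi_{1\Lambda}$, which sums to $-\Lambda\cdot\tfrac12 H\Lambda g = -\tfrac12\Lambda H\Lambda g$. These cancel, leaving $\theta_n' = \theta_1 + O(\|\Lambda\|^3 + \|\Lambda\|^2/n)$, where the $\|\Lambda\|^2/n$ records the discrepancy between the discrete sum $\tfrac{1}{n^2}\binom{n}{2}$ and its limit $\tfrac12$, and $O(\|\Lambda\|^3)$ collects all cubic-and-higher terms (here bounded using third derivatives, implicitly via the $C^3$ assumption). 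The choice $\phi_{1\Lambda} = \tfrac14 g^T\Lambda g$ with $\nabla = \tfrac12 \nabla^2\ell\,\Lambda\nabla\ell$ is exactly what makes (i) and (ii) cancel — this is the content one must verify.

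For part 2, I would push the same expansion to third order in $\|\Lambda\|$ and collect the new terms: the third-order Taylor term of the drift involving $\nabla^3\ell$ (bounded by $M$, giving the $O(\|\Lambda\|^3 M)$ term), the cross term between the Hessian drift and the $\phi_{1\Lambda}$ correction, the term from expanding $\nabla\phi_{1\Lambda}$ to one more order along the trajectory, and the leading ($O(\|\Lambda\|^3)$) contribution of $\nabla\phi_{2\Lambda} = \nabla^2\ell\,\Lambda\nabla^2\ell\,\Lambda\nabla\ell + (\text{terms with }\nabla^3\ell)$. The claim is that the principal $O(\|\Lambda\|^3)$ pieces not involving $\nabla^3\ell$ cancel once $\phi_{2\Lambda} = \tfrac12 g^T\Lambda H\Lambda g$, leaving only the genuinely uncontrollable $\nabla^3\ell$ contribution $O(\|\Lambda\|^3 M)$, the $O(\|\Lambda\|^4)$ tail, and the discretization remainders $O(\|\Lambda\|^2/n + \|\Lambda\|^3/n)$ from replacing Riemann-type sums by integrals.

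The main obstacle is the bookkeeping in part 2: one has to be careful that every intermediate iterate $\theta_k'$ is itself only known up to a correction, so substituting it into a term that is already $O(\|\Lambda\|^2)$ generates an $O(\|\Lambda\|^3)$ error that must be shown to be either cancelled or absorbed into one of the stated remainder terms, and one must verify that the $\nabla^3\ell$ terms hidden inside $\nabla\phi_{1\Lambda}$ and $\nabla\phi_{2\Lambda}$ (which appear because $\phi_{1\Lambda},\phi_{2\Lambda}$ themselves depend on $\nabla\ell$ and $\nabla^2\ell$) are all of the stated $O(\|\Lambda\|^3 M)$ size. Keeping the matrix $\Lambda$ non-commuting with the Hessians throughout — rather than specializing to scalar $\eta$ — requires writing every product in the correct order, but introduces no genuinely new difficulty beyond notation.
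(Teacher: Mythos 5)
Your treatment of part 1 is correct and is essentially the paper's own argument: unroll the $n$ updates, Taylor-expand the drift around $\theta_0$, and let the accumulated Hessian contribution $\frac{n-1}{2n}\Lambda H\Lambda g$ (with $g=\nabla\ell(\theta_0)$, $H=\nabla^2\ell(\theta_0)$) cancel against $-\Lambda\nabla\phi_{1\Lambda}=-\frac{1}{2}\Lambda H\Lambda g$, leaving exactly the stated $O(\|\Lambda\|^3+\|\Lambda\|^2/n)$ remainder. Nothing to add there.

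For part 2 your bookkeeping plan is the right one, and it is actually more complete than the paper's own proof, which retains only $\frac{\Lambda}{2}\nabla^2\phi_{1\Lambda}\Lambda\nabla\ell$ and $-\Lambda\nabla\phi_{2\Lambda}$ at third order and silently drops precisely the contributions you insist on collecting: the cross term in which $\nabla^2\ell$ acts on the $\phi_{1\Lambda}$-induced displacement (worth $\frac14\Lambda H\Lambda H\Lambda g$), and the twice-iterated Hessian term from $\nabla^2\ell$ acting on the second-order part of the trajectory (worth $-\frac16\Lambda H\Lambda H\Lambda g$); it also evaluates $\nabla^2\phi_{1\Lambda}\Lambda\nabla\ell$ as $H\Lambda H\Lambda g$ when the correct value is $\frac12 H\Lambda H\Lambda g+O(M\|\Lambda\|^2)$. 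The difficulty is that once all of these terms are collected, the advertised cancellation fails. In the $n\to\infty$ limit the iterates follow the time-one flow of $v=-\Lambda\nabla\phi_\Lambda$, which expands as $\theta_0+v+\frac{1}{2}(Dv)v+\frac{1}{6}\left((Dv)^2v+D^2v[v,v]\right)+O(\|\Lambda\|^4)$; taking $\phi_{2\Lambda}=c\,\nabla\ell^T\Lambda\nabla^2\ell\Lambda\nabla\ell$, the $M$-independent third-order pieces sum to $\left(-2c+\frac{1}{2}-\frac{1}{6}\right)\Lambda H\Lambda H\Lambda g$, which vanishes only for $c=\frac{1}{6}$, not the stated $c=\frac{1}{2}$. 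A one-dimensional quadratic makes this concrete: for $\ell=\frac{1}{2}a\theta^2$ and scalar step $\eta$ one has $M=0$, $\theta_1=(1-\eta a)\theta_0$, and $\phi_\eta=\frac{1}{2}a\theta^2\left(1+\frac{\eta a}{2}+2c\eta^2a^2\right)$, whose exact time-one flow gives $\theta_n'=\theta_1+\left(\frac{1}{3}-2c\right)\eta^3a^3\theta_0+O(\eta^4+\eta^2/n)$ --- a genuine $\Theta(\eta^3)$ discrepancy at $c=\frac{1}{2}$ that the claimed $O(\eta^4+\eta^3M+\cdots)$ bound cannot absorb. So you face a fork: adopt the paper's truncated expansion and inherit its error, or execute your own (correct) plan and end up refuting part 2 as stated; the statement should have $\phi_{2\Lambda}=\frac{1}{6}\nabla\ell^T\Lambda\nabla^2\ell\Lambda\nabla\ell$.
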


This $\phi_\eta$ also needs to hold in expectation with respect to the sampling of data points, and so one can define the expected entropic loss $F_{\eta,\gamma}(\theta) = \E[\phi_\eta(x, \theta)]$, 
which is, up to the first order in $\Lambda$ and $\gamma$:
\begin{align}
\label{eq: free energy}
F_{\eta,\gamma}(\theta)&=\underbrace{\mathbb{E}_x\ell(x,\theta)}_{\text{learning, symmetry}}+\underbrace{\gamma||\theta||^2}_{\text{regularization}} +\underbrace{\frac{1}{4}\mathbb{E}_\mathcal{B}||\sqrt{\Lambda}\mathbb{E}_{x\in\mathcal{B}}  \nabla \ell(x,\theta)||^2 }_{\text{effective entropy due to discretization error, noise, $:=S(\theta)$}} +  O(||\Lambda||^2).
\end{align}
Specializing to the first order and with a scalar learning rate, this equation reduces to the ``modified loss" previously derived in different contexts \cite{barrett2020implicit, smith2021origin, dandi2022implicit}. The derivation has a thermodynamic flavor as it essentially computes the degree to which the dynamics is irreversible, and the entropy term is the part that cannot be microscopically reversed. In Theorem~\ref{theo: effective loss}, the first-order term in $\Lambda$ encourages the model to have a small gradient fluctuation. The second-order term, on top of gradient regularization, encourages the model to move to flatter solutions; this term has been found to play a role in the edge of stability phenomenon \cite{cohen2024understanding}. However, since the first-order term is not yet well-understood, our work focuses on the first-order term in $\Lambda$. We also focus on a scalar learning rate $\Lambda =\eta I$ and will comment on the differences when the difference is essential. 

Now, treating the original loss plus regularization as an energy, the dynamics of gradient flow on $F$ contains an energy force and and an entropic force: $\dot{\theta} = -\eta (\nabla L + \gamma \theta + {\nabla S})$. The  $\nabla S$ term, the gradient of the effective entropy, will be called the ``entropic force." See Figure~\ref{fig:resnet} for an illustration of the entropic force and an example of how entropy evolves during training.





\begin{figure}[t!]
    \centering
    \includegraphics[width=0.59\linewidth]{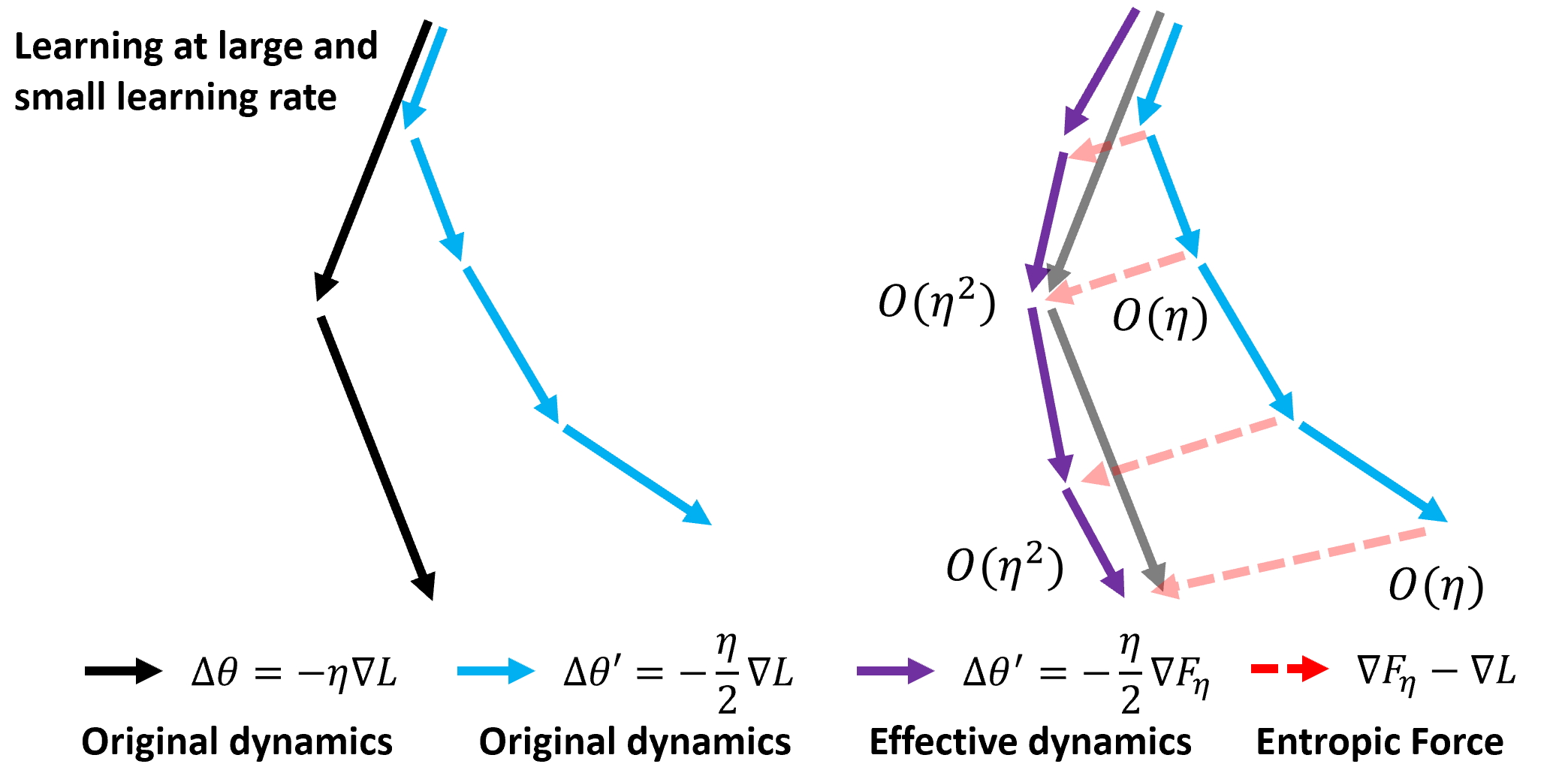}
    \includegraphics[width=0.39\linewidth]{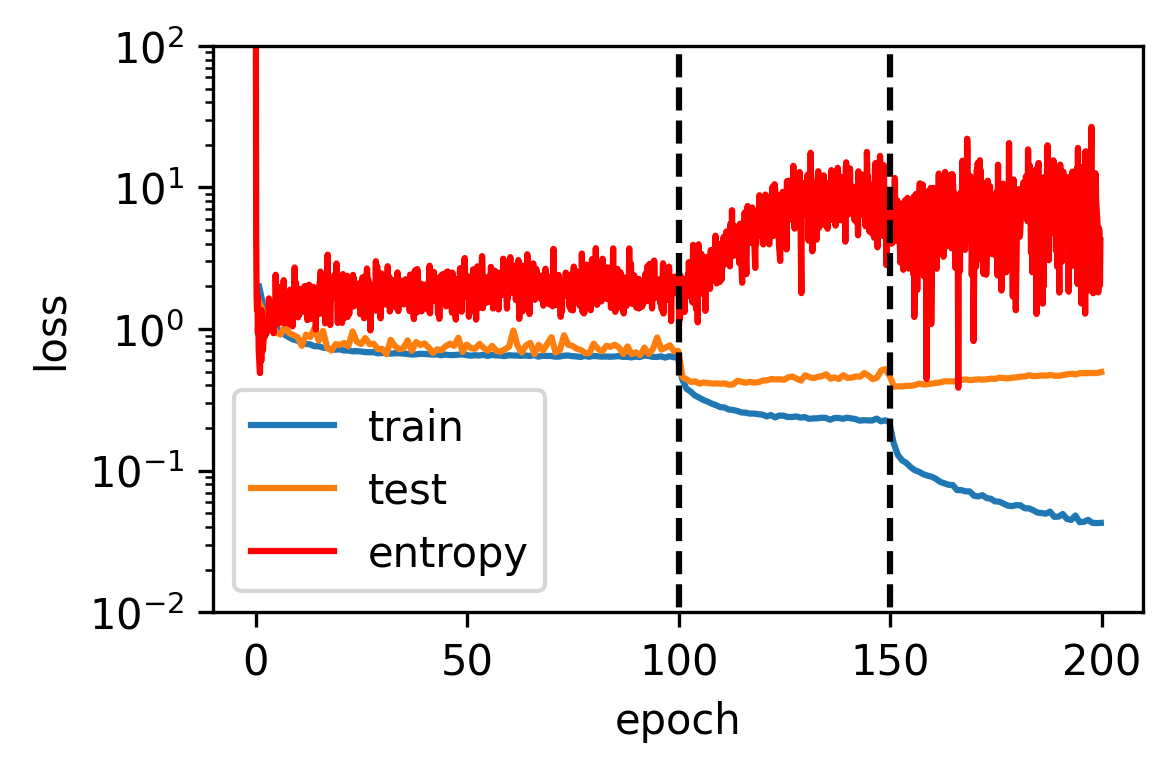}
    
    \caption{\small Entropic forces due to discretization error and stochasticity. \textbf{Left}: The learning dynamics of SGD at a large learning rate (LLR) and a small learning rate (SLR) is different. One can view the difference between LLR and SLR training as coming from an entropy term, which is an order $\eta$ force. After entropic correction, the difference between SLR and LLR is reduced to $O(\eta^2)$ and it becomes possible to analyze LLR SGD with gradient flow. \textbf{Right}: An example of entropic effect in neural network training. ResNet18 trained on CIFAR-10 with learning rate decay at $100$ and $150$ epochs. At the first learning rate drop (black dashed lines), the gradient (entropy) increases. This is unexpected and can only be explained by the entropic loss, where a large learning rate penalizes the entropy, and thus decreasing the learning rate leads to an increase in entropy. The second drop does not create too much effect because the learning rate is too small after the first drop.}
    \label{fig:resnet}
\end{figure}

We prove that the entropic force term breaks almost any continuous symmetry of $L$, a key result that we will leverage to study progressive sharpening and universal representation learning.
\begin{definition}
    A loss function $\ell(x, \theta)$ is said to be $K$-invariant if items 1-3 are satisfied:
     \begin{enumerate}[noitemsep,topsep=0pt, parsep=0pt,partopsep=0pt]
         \item locality: $K(\theta, \lambda ) = \theta + \lambda Q(\theta) + O(\lambda^2)$ for a differentiable $Q$;
         \item consistency: $K(K(\theta,\lambda), \lambda')= K(\theta, \lambda +\lambda')$;
         \item invariance: $\ell(x, K(\theta,\lambda)) = \ell(x, \theta)$ for all $x$, $\theta$ and $\lambda \in \mathbb{R}$.
         
     \end{enumerate}
     An entropic loss $F_{\eta,\gamma}$ is said to have the (\textbf{robust}) $K$-invariance if there exists a neighborhood around $\eta,\gamma$ such that $F_{\eta,\gamma}$ is $K$-invariant. 
\end{definition}

\begin{theorem}\label{theo: symmetry breaking} (Symmetry Breaking Under the Entropic Loss)
Let $\ell$ be $K$-invariant. If $F$ is also robustly $K$-invariant, then, (1) $\|K(\theta, \lambda)\| =\|\theta\|$ and (2) $\nabla^T\ell \nabla Q(\theta) \nabla \ell = 0$ for all $\theta$. 
\end{theorem}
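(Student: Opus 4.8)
The plan is to exploit the additive structure of the expected entropic loss, $F_{\eta,\gamma}(\theta)=L(\theta)+\gamma\|\theta\|^2+S(\theta)+O(\eta^2)$ with $L(\theta)=\mathbb{E}_x\ell(x,\theta)$ and $S(\theta)=\frac14\mathbb{E}_{\mathcal{B}}\|\sqrt{\Lambda}\,\mathbb{E}_{x\in\mathcal{B}}\nabla\ell(x,\theta)\|^2$ as in Theorem~\ref{theo: effective loss} and Eq.~\eqref{eq: free energy}, and to extract conclusion (1) from the $\gamma$-term and conclusion (2) from the $S$-term, with \emph{robust} $K$-invariance playing the role of decoupling the two. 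First I would note that $K$-invariance of $\ell$ gives $\ell(x,K(\theta,\lambda))=\ell(x,\theta)$ for every $x$, hence $L(K(\theta,\lambda))=L(\theta)$: the learning part of $F$ is automatically $K$-invariant and cancels in the identity $F_{\eta,\gamma}(K(\theta,\lambda))=F_{\eta,\gamma}(\theta)$, leaving $\gamma\big(\|K(\theta,\lambda)\|^2-\|\theta\|^2\big)+\big(S(K(\theta,\lambda))-S(\theta)\big)+O(\eta^2)=0$.

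For conclusion (1): since $F_{\eta,\gamma}$ is $K$-invariant for $\gamma$ in a whole interval and (per Eq.~\eqref{eq: free energy}) its only $\gamma$-dependence at the orders in play is the explicit term $\gamma\|\theta\|^2$, differentiating the invariance identity in $\gamma$ shows $\|\theta\|^2$ is itself $K$-invariant, i.e. $\|K(\theta,\lambda)\|=\|\theta\|$. For conclusion (2): treating $F_{\eta,\gamma}$ through its expansion $\ell+\eta\phi_1+\eta^2\phi_2+\cdots$ in $\eta$ (plus the regularizer), $K$-invariance on an $\eta$-interval forces each coefficient to be $K$-invariant; for scalar $\Lambda=\eta I$ one has $S=\eta\,s$ with $s(\theta)=\frac14\mathbb{E}_{\mathcal{B}}\|\mathbb{E}_{x\in\mathcal{B}}\nabla\ell(x,\theta)\|^2$, so $s$ is $K$-invariant. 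Then $\frac{d}{d\lambda}s(K(\theta,\lambda))=0$, and evaluating at $\lambda=0$ with $\partial_\lambda K(\theta,\lambda)\big|_{\lambda=0}=Q(\theta)$ (locality) gives $\nabla s(\theta)^{T}Q(\theta)=0$ for all $\theta$.

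To convert this into the stated gradient-balance identity, I would differentiate the orbit constraint twice. Differentiating $\ell(x,K(\theta,\lambda))=\ell(x,\theta)$ in $\lambda$ at $\lambda=0$ gives $\nabla\ell(x,\theta)^{T}Q(\theta)=0$; differentiating \emph{this} in $\theta$ and using symmetry of $\nabla^2\ell$ gives $\nabla^2\ell(x,\theta)\,Q(\theta)=-\big(\nabla Q(\theta)\big)^{T}\nabla\ell(x,\theta)$, and the same identity holds for the minibatch loss: $H_{\mathcal{B}}Q=-(\nabla Q)^{T}g_{\mathcal{B}}$ with $g_{\mathcal{B}}=\mathbb{E}_{x\in\mathcal{B}}\nabla\ell$ and $H_{\mathcal{B}}=\nabla g_{\mathcal{B}}$ the symmetric minibatch Hessian. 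Since $\nabla s=\frac12\mathbb{E}_{\mathcal{B}}[H_{\mathcal{B}}g_{\mathcal{B}}]$, we get $0=\nabla s^{T}Q=\frac12\mathbb{E}_{\mathcal{B}}[g_{\mathcal{B}}^{T}H_{\mathcal{B}}Q]=-\frac12\mathbb{E}_{\mathcal{B}}\big[g_{\mathcal{B}}^{T}(\nabla Q)^{T}g_{\mathcal{B}}\big]=-\frac12\mathbb{E}_{\mathcal{B}}\big[g_{\mathcal{B}}^{T}\,\nabla Q\, g_{\mathcal{B}}\big]$, the last step because a scalar equals its transpose. Taking the full-batch limit (or running the same argument for a single-sample dataset, which is legitimate because $K$ is a symmetry for every $x$) yields $\nabla^{T}\ell\,\nabla Q(\theta)\,\nabla\ell=0$, which is conclusion (2).

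I expect the main obstacle to be the careful handling of \emph{robustness}: justifying that $K$-invariance of $F_{\eta,\gamma}$ on a full neighborhood of $(\eta,\gamma)$ — not merely at one point — forces the $\gamma$-linear term and the order-$\eta$ term of the entropic loss to be $K$-invariant \emph{separately}, rather than only in their weighted combination. The clean formalization is to view $F_{\eta,\gamma}$ as its asymptotic expansion in $\eta$ (Theorem~\ref{theo: effective loss}) together with the regularizer and argue coefficient by coefficient; one must also check that the higher-order ($O(\eta^2)$) entropic corrections do not reintroduce $\gamma$-dependence that would spoil the $\gamma$-differentiation step. Everything after the decoupling — the two applications of the chain rule to the orbit constraint, the symmetric-Hessian identity $\nabla^2\ell\,Q=-(\nabla Q)^{T}\nabla\ell$, and the scalar-transpose manipulation — is routine.
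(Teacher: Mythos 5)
Your proposal is correct and takes essentially the same approach as the paper: both exploit the additive form of $F$ in Eq.~\eqref{eq: free energy}, use robustness in $(\eta,\gamma)$ to decouple the weight-decay term from the gradient-penalty term, and extract the conclusions from a first-order computation along the $K$-orbit. The only differences are organizational --- you decouple the terms before differentiating in $\lambda$ and compute the Lie derivative of $S$ via the Hessian identity $\nabla^2\ell\,Q=-(\nabla Q)^T\nabla\ell$, whereas the paper expands $F(K(\theta,\lambda))=F(\theta)$ to first order in $\lambda$ using $\nabla_{K}\ell=(I-\lambda\nabla Q+O(\lambda^2))\nabla\ell$ and invokes robustness only at the end; the two computations are equivalent.
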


This means that any symmetry or invariance that $F_{\eta,\gamma}$ has must be norm-preserving transformations. Essentially, this means that any invariance that is not rotation invariance must disappear. 
The following theorem shows that orthogonal discrete symmetries are preserved.

\begin{theorem}
\label{theo:orthogonal}
Let $OO^T = I$. If $\ell(x, O\theta) = \ell(x,\theta)$ for any $\theta$ and $x$, then $F_{\eta,\gamma}(O\theta) = F_{\eta,\gamma}(\theta)$ for any $\theta$.
\end{theorem}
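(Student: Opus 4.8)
The plan is to verify the invariance term by term on the explicit expression for $F_{\eta,\gamma}$ (taking $\Lambda=\eta I$ as in the rest of the section), using one simple lemma: if $\ell(x,O\theta)=\ell(x,\theta)$ for all $\theta$ and $OO^T=I$, then differentiating $k$ times in $\theta$ gives the equivariance relation $(\nabla^k\ell)(x,O\theta)=O^{\otimes k}(\nabla^k\ell)(x,\theta)$; in particular $(\nabla\ell)(x,O\theta)=O(\nabla\ell)(x,\theta)$. Granting this, the three leading pieces of $F_{\eta,\gamma}$ are handled immediately: (i) $\mathbb{E}_x\ell(x,O\theta)=\mathbb{E}_x\ell(x,\theta)$ by hypothesis; (ii) $\gamma\|O\theta\|^2=\gamma\,\theta^T O^T O\theta=\gamma\|\theta\|^2$; and (iii) for the entropy term, $S(O\theta)=\tfrac{\eta}{4}\mathbb{E}_\mathcal{B}\big\|\mathbb{E}_{x\in\mathcal{B}}(\nabla\ell)(x,O\theta)\big\|^2=\tfrac{\eta}{4}\mathbb{E}_\mathcal{B}\big\|O\,\mathbb{E}_{x\in\mathcal{B}}(\nabla\ell)(x,\theta)\big\|^2=S(\theta)$, where the last step uses that $O$ is an isometry and that the minibatch distribution does not depend on $\theta$. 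This already gives the claim to first order in $\eta$.

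For the exact statement (including the $O(\eta^2)$ remainder in Eq.~\eqref{eq: free energy}), I would appeal to the structure of the entropic loss supplied by Theorem~\ref{theo: effective loss} and its higher-order continuation: $\phi_\eta(x,\theta)=\ell(x,\theta)+\sum_{k\ge 1}\eta^k\phi_k(x,\theta)$, where each $\phi_k$ is a universal polynomial built by fully contracting the derivative tensors $\nabla\ell,\nabla^2\ell,\dots$ with copies of the (scalar) learning-rate matrix — e.g. $\phi_1=\tfrac14\nabla\ell^T\nabla\ell$, $\phi_2=\tfrac{\eta}{2}\nabla\ell^T\nabla^2\ell\,\nabla\ell$. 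By the equivariance lemma, each such contraction is orthogonally equivariant, and being a full contraction down to a scalar it is in fact orthogonally \emph{invariant}: $\phi_k(x,O\theta)=\phi_k(x,\theta)$. Hence $\phi_\eta(x,O\theta)=\phi_\eta(x,\theta)$ for every $x$; taking $\mathbb{E}_x$ and adding the invariant weight-decay term yields $F_{\eta,\gamma}(O\theta)=F_{\eta,\gamma}(\theta)$.

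A cleaner way to organize the same argument — and the one I would ultimately present — is via equivariance of the update map itself. One-step gradient descent on an $O$-invariant loss commutes with $\theta\mapsto O\theta$: writing $\psi=O\theta$ and $\tilde\ell(x,\psi):=\ell(x,O^T\psi)=\ell(x,\psi)$, one checks $\psi-\eta\nabla_\psi\tilde\ell(x,\psi)=O\big(\theta-\eta\nabla_\theta\ell(x,\theta)\big)$. Since $\phi_\eta$ is defined purely through iterated gradient-descent maps (Theorem~\ref{theo: effective loss}), this commutation is inherited order by order in $\eta$, forcing $\phi_\eta(x,O\theta)=\phi_\eta(x,\theta)$, and the result follows as above. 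I expect the only genuine subtlety is precisely this last point: making rigorous that $\phi_\eta$ is a function of the dynamics (so equivariance of the dynamics forces invariance of $\phi_\eta$ to all orders), rather than merely invoking the truncated formula; the rest is routine. For a matrix learning rate $\Lambda$ the same reasoning shows invariance only under those orthogonal $O$ that additionally satisfy $O^T\Lambda O=\Lambda$, which is why the scalar case is singled out.
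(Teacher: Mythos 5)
Your first paragraph is essentially the paper's own proof: the paper likewise verifies the three terms of $F_{\eta,\gamma}$ separately, using $\|O\theta\|=\|\theta\|$ and the gradient equivariance $\nabla_{O\theta}\ell(x,O\theta)=O\nabla_\theta\ell(x,\theta)$ obtained by differentiating the invariance, so the entropy term is preserved because $O$ is an isometry. Your additional treatment of the $O(\eta^2)$ and higher-order terms (via invariance of full tensor contractions of $\nabla^k\ell$, or via equivariance of the gradient-descent update map) goes beyond the paper, which stops at the truncated first-order expression; that extension is correct, and your closing observation that a matrix learning rate only preserves invariance under $O$ with $O^T\Lambda O=\Lambda$ matches the paper's remark that a generic $\Lambda$ breaks orthogonal invariance.
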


Together with the previous result, this shows that when gradient noise or regularization is taken into account, the only relevant remaining symmetries are discrete symmetries. This implies that the results that are based on conservation laws for understanding SGD are questionable and can only hold in the toy setting of an infinitesimal learning rate. The reason is simple: $F_{\eta, \gamma}$ does not have \textit{robust} invariances at $\eta=\gamma =0$. Also, note that had we used a generic matrix learning rate (e.g., with Adam), the orthogonal invariances would also be broken. The meaning of these discrete symmetries can be understood through a framework similar to that proposed in Ref.~\cite{ziyin2024symmetry} and is left to a future work.




\vspace{-2mm}
\section{Emergence of Gradient Balance and Equipartition Property}\label{sec: equipartition}
\vspace{-1mm}

Lie group symmetries exist abundantly in nature and in modern neural networks\footnote{See Ref.~\cite{ziyin2025parameter} for a detailed review.} \cite{li2016symmetry, zhao2022symmetry, ziyin2024parameter, ziyin2024symmetry, hidenori2021noether}. In thermodynamics, the existence of symmetries is a crucial fact that leads to the emergence of hierarchical phenomena and phase transitions between them. In a sense, symmetry can be argued to be the ``first-order" approximation of the level of hierarchies in the system \cite{anderson1972more}. 
The following theorem states that the entropic loss $F$ breaks any nontrivial noncompact Lie group symmetries (also known as exponential symmetries \cite{ziyin2024parameter}). For formality, we say that $\ell$ (and $L$) has a $A$-\textbf{exponential symmetry} if for any $\lambda \in \mathbb{R}$, any $x$ and $\theta$, and any matrix $A$, $\ell(x,\theta)$ obeys $\ell(x,e^{\lambda A}\theta)=\ell(x,\theta)$. 
\begin{theorem}
\label{theo:exp symmetry}
(Master Balance Theorem) If $\ell(x,\theta)$ has an $A$-exponential symmetry,
then any local minimum $\theta^*$ of Eq.\eqref{eq: free energy} satisfies
\begin{equation}
-\eta\mathbb{E}_\mathcal{B
}[\mathbb{E}_{x\in\mathcal{B}}(\nabla_\theta\ell(x,\theta^*))]^T\tilde{A}[\mathbb{E}_{x\in\mathcal{B}}\nabla_\theta\ell(x,\theta^*)]+4\gamma(\theta^*)^T\tilde{A}\theta^*=0,
\label{eq:local_minimum}
\end{equation}
where $\tilde{A}:=\frac{A+A^T}{2}$. In addition, either (1) $F_{\eta,\gamma}(e^{\lambda A}\theta^*)=F_{\eta,\gamma}(\theta^*)$ for all $\lambda$, or (2) there exists no $\lambda \neq 0$ such that $e^{\lambda A}\theta^*$ is a local minimum.
\end{theorem}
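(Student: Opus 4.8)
The plan is to restrict $F_{\eta,\gamma}$ to a single symmetry orbit $\lambda\mapsto e^{\lambda A}\theta$ and exploit two exact consequences of the $A$-exponential symmetry. First, $\ell(x,e^{\lambda A}\theta)=\ell(x,\theta)$ for all $\lambda$ forces $\E_x\ell(x,\cdot)$ to be constant along every orbit. Second, applying $\nabla_\theta$ to the batch-averaged invariance $\ell_\mathcal{B}(e^{\lambda A}\theta)=\ell_\mathcal{B}(\theta)$ (with $\ell_\mathcal{B}:=\E_{x\in\mathcal{B}}\ell(x,\cdot)$) and using the chain rule gives the equivariance $\nabla\ell_\mathcal{B}(e^{\lambda A}\theta)=e^{-\lambda A^T}\nabla\ell_\mathcal{B}(\theta)$ (a further $\theta$-derivative gives the equivalent identity $\nabla^2\ell_\mathcal{B}(\theta)\,A\theta=-A^T\nabla\ell_\mathcal{B}(\theta)$). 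For the first-order entropic loss of Eq.~\eqref{eq: free energy} with scalar $\Lambda=\eta I$, these identities give, for any base point $\theta$, the closed form \[ G(\lambda):=F_{\eta,\gamma}(e^{\lambda A}\theta)=\E_x\ell(x,\theta)+\gamma\|e^{\lambda A}\theta\|^2+\tfrac{\eta}{4}\E_\mathcal{B}\big\|e^{-\lambda A^T}\nabla\ell_\mathcal{B}(\theta)\big\|^2, \] which is real-analytic in $\lambda$.

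Taking $\theta=\theta^*$: if $\theta^*$ is a local minimum of $F_{\eta,\gamma}$ then $\lambda=0$ is a local minimum of $G$, so $G'(0)=0$. Differentiating the three terms at $\lambda=0$, using $\tfrac{d}{d\lambda}\|e^{\lambda A}\theta^*\|^2|_0=2(\theta^*)^TA\theta^*=2(\theta^*)^T\tilde A\theta^*$ and $\tfrac{d}{d\lambda}\|e^{-\lambda A^T}v\|^2|_0=-2v^TA^Tv=-2v^T\tilde A v$, gives $2\gamma(\theta^*)^T\tilde A\theta^*-\tfrac{\eta}{2}\E_\mathcal{B}\big[(\nabla\ell_\mathcal{B}(\theta^*))^T\tilde A\,\nabla\ell_\mathcal{B}(\theta^*)\big]=0$; multiplying by $2$ and recalling $\nabla\ell_\mathcal{B}(\theta^*)=\E_{x\in\mathcal{B}}\nabla_\theta\ell(x,\theta^*)$ this is exactly Eq.~\eqref{eq:local_minimum}.

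For the dichotomy, suppose in addition that $e^{\lambda_0 A}\theta^*$ is a local minimum for some $\lambda_0\neq 0$. Then $\lambda_0$ is also a local minimum of the same analytic $G$, so $G'(\lambda_0)=0$ (equivalently, the balance identity above holds at the two orbit points $0$ and $\lambda_0$). It then suffices to show that $G$ has two distinct local minima only if it is constant. For the rescaling-type symmetries that arise in neural networks, $A$ is conjugate by an orthogonal matrix to a direct sum of a symmetric and an antisymmetric block, so each of $\|e^{\lambda A}\theta^*\|^2$ and $\|e^{-\lambda A^T}\nabla\ell_\mathcal{B}(\theta^*)\|^2$ is a nonnegative combination of exponentials $e^{\pm 2a_i\lambda}$ (plus a term constant in $\lambda$ from the antisymmetric block), hence convex; thus $G$ is convex, two distinct minima force $G$ constant on the segment between them, and real-analyticity propagates this to all $\lambda\in\mathbb{R}$ — this is alternative (1). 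If $G$ is non-constant, no such $\lambda_0$ exists — alternative (2).

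The main obstacle is this last step for a fully general generator $A$, where $G$ need not be convex: the first two parts are essentially bookkeeping once the equivariance identity is in hand, but being a local minimum of $F_{\eta,\gamma}$ on all of parameter space is strictly stronger than being a local minimum of the one-dimensional restriction $G$, so one must feed in the transverse second-order information — $\nabla F_{\eta,\gamma}=0$ and $\nabla^2F_{\eta,\gamma}\succeq 0$ at both orbit minima — to exclude the several-isolated-minima configurations that an analytic $G$ could otherwise support. Pinning down the precise structural hypothesis on $A$ (or $\ell$) under which this argument closes in general is where I expect the real work to lie.
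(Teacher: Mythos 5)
Your argument is the paper's argument: restrict $F_{\eta,\gamma}$ to the orbit, use the gradient equivariance $\nabla\ell_\mathcal{B}(e^{\lambda A}\theta)=e^{-\lambda A^T}\nabla\ell_\mathcal{B}(\theta)$, and set $G'(0)=0$; your derivation of Eq.~\eqref{eq:local_minimum} is complete and correct. For the dichotomy, the paper also does exactly what you sketch for the normal case: it shows $I(\lambda):=G'(\lambda)$ is a sum of terms of the form $-\tfrac{\eta}{2}\mu_i e^{-2\lambda\mu_i}(n_i^T\Sigma n_i)+2\gamma\mu_i e^{2\lambda\mu_i}(n_i^T\theta^*)^2$ over the eigenpairs $(\mu_i,n_i)$ of $\tilde A$, so that $I'(\lambda)\ge 0$ termwise, and each summand of $I'$ is either identically zero or strictly positive for all $\lambda$; hence $I\equiv 0$ (case 1) or $I$ is strictly increasing with its unique zero at $\lambda=0$ (case 2). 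This is precisely your convexity-plus-analyticity argument, and your instinct that transverse Hessian information might be needed is a red herring — the paper's proof never leaves the one-dimensional orbit.

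The step you correctly identify as the crux — passing from $\|e^{\lambda A}\theta^*\|^2$ and $\|e^{-\lambda A^T}\nabla\ell_\mathcal{B}\|^2$ to pure combinations of $e^{\pm 2\lambda\mu_i}$ — is handled in the paper by importing the identities $\Tr[\Sigma(\theta_\lambda)\tilde A]=\Tr[e^{-2\lambda\tilde A}\Sigma(\theta^*)\tilde A]$ and $\theta_\lambda^T\tilde A\theta_\lambda=(\theta^*)^Te^{2\lambda\tilde A}\tilde A\theta^*$ from Lemma B.1 of the cited exponential-symmetry paper, not by deriving them. As you note, these identities are immediate when $A$ is symmetric (or normal, so that $\tilde A$ commutes with the antisymmetric part), and that covers every generator actually used downstream (layer, neuron, polynomial rescalings and the attention double-rotation are all symmetric). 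For a genuinely non-normal $A$ the reduction is not obvious from what is written here, so your flagged gap is real but is a gap you share with the in-paper exposition rather than a divergence from it; the fix is to either assume $\tilde A$ and $\tfrac{A-A^T}{2}$ commute or to verify the cited lemma's hypotheses, not to bring in second-order transverse conditions.
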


If we had chosen a symmetric $A$, we would have $\tilde{A}=A$. For an anti-symmetric $A$, this theorem is trivial, consistent with Theorem~\ref{theo: symmetry breaking}. Therefore, this theorem is a statement about non-compact Lie group symmetries that extends to infinity. The fact that every point is connected to a point that satisfies Eq~\eqref{eq:local_minimum} means that SGD can reach this condition easily. The meaning of this theorem is a general gradient balance and alignment phenomenon. When $\gamma=0$, this equation states that the gradient along the positive spectrum of $\tilde{A}$ must balance with the gradient along the negative spectrum. When $\gamma \neq 0$, there is, additionally, a tradeoff between gradient balance and weight balance. We apply this result to various neural networks in this section.

Many works have shown that when training with weight decay or when the model has a small initialization, the weights of the layers become balanced, especially in homogeneous networks \cite{du2018algorithmic, polyakov2023homogeneous, rangamani2022neural}. Theorem~\ref{theo:exp symmetry} shows that the SGD in discrete-time or with stochasticity leads to a completely different bias where the gradient noise between all layers must be balanced. 

\paragraph{ReLU Layers} Consider a deep ReLU network trained on an arbitrary task:
\begin{equation}
    f(x) = W_{D} R_{D-1} ... R_1 W_1 x,
\end{equation}
where $R(x)$ is a piece-wise constant the zero-one activation matrix functioning as the ReLU activation. The entropy term can be written as
\begin{equation}
     S(\theta)= \sum_{i=1}^D\Tr \E [g_ig_i^\top],
\end{equation}
where $g_i=\mathbb{E}_{x\in\mathcal{B}}\nabla_{W_i}\ell(\theta,x)$ and $\mathbb{E}$ is a shorthand of $\mathbb{E}_\mathcal{B}$. Namely, we can group the gradient covariance according to layer index $i$. The following theorem states that all layers must have a balanced gradient. The proof shows that while the learning term $L$ is invariant to a class of symmetry transformations, the entropy term is not -- and this creates a systematic tendency for the parameters to reduce the entropy.

\begin{theorem}
\label{theo:layer_balance}
    (Layer Balance) For all local minimum of Eq.~\eqref{eq: free energy}, 
    \begin{equation}
         \eta(\mathbb{E}\Tr[g_ig_i^T-g_jg_j^T])=4\gamma(\Tr[W_iW_i^T-W_jW_j^T]).
    \end{equation}
    
\end{theorem}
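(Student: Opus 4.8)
The plan is to exhibit the relevant noncompact Lie-group symmetry of the ReLU network and then invoke Theorem~\ref{theo:exp symmetry} directly. For a pair of layer indices $1 \le i < j \le D$, let $\Pi_k$ denote the coordinate projector of the full parameter vector $\theta$ onto the block occupied by $W_k$, and set $A = \Pi_i - \Pi_j$. Then $e^{\lambda A}$ is exactly the rescaling $W_i \mapsto e^{\lambda} W_i$, $W_j \mapsto e^{-\lambda} W_j$ with all other weights fixed. Note $A$ is symmetric, so $\tilde A = A$ and the conclusion of Theorem~\ref{theo:exp symmetry} is nontrivial rather than vacuous.

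First I would verify that this is an $A$-exponential symmetry of $\ell$, i.e. $\ell(x, e^{\lambda A}\theta) = \ell(x,\theta)$ for all $x,\theta,\lambda$, which reduces to $f(x; e^{\lambda A}\theta) = f(x;\theta)$. This follows from the positive homogeneity of ReLU: scaling the pre-activation of layer $i$ by $e^{\lambda}>0$ leaves the zero-one pattern $R_i$ unchanged and multiplies its output by $e^{\lambda}$; this factor threads unchanged through the intervening linear maps and through the (again positively homogeneous) activations $R_{i+1},\dots,R_{j-1}$, so the pre-activation entering $W_j$ is scaled by exactly $e^{\lambda}$, which the $e^{-\lambda}$ in $W_j$ cancels. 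Positive homogeneity holds even at the kink, so the identity is exact for every $x$ with no exceptional set; the only care needed is that Eq.~\eqref{eq: free energy} and the third-derivative bound in Theorem~\ref{theo: effective loss} be read a.e., since inputs with a vanishing pre-activation form a measure-zero set that does not affect any of the expectations.

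Next I would apply Theorem~\ref{theo:exp symmetry} with this $A$ at an arbitrary local minimum $\theta^*$ of Eq.~\eqref{eq: free energy} and evaluate the two quadratic forms. Writing $v(\mathcal B) = \mathbb{E}_{x\in\mathcal B}\nabla_\theta\ell(x,\theta^*)$, the restriction of $v(\mathcal B)$ to the $W_k$ block is exactly $g_k$, so $v(\mathcal B)^T \tilde A\, v(\mathcal B) = \|g_i\|_F^2 - \|g_j\|_F^2 = \Tr[g_i g_i^T - g_j g_j^T]$; taking $\mathbb{E}_{\mathcal B}$ gives the first term. Likewise $(\theta^*)^T\tilde A\,\theta^* = \|W_i\|_F^2 - \|W_j\|_F^2 = \Tr[W_iW_i^T - W_jW_j^T]$. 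Substituting into Eq.~\eqref{eq:local_minimum} yields $-\eta\,\mathbb{E}\Tr[g_ig_i^T - g_jg_j^T] + 4\gamma\Tr[W_iW_i^T - W_jW_j^T] = 0$, which is the claim; the case $i>j$ follows by replacing $A$ with $-A$. If one prefers to use only the adjacent rescaling $A = \Pi_i - \Pi_{i+1}$, the general statement follows by telescoping, since both sides are differences of a per-layer quantity.

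The substantive content is already packaged inside Theorem~\ref{theo:exp symmetry} — the computation that the entropic force $\nabla S$ has a nonzero component along the orbit direction $A\theta$, which is what produces the gradient-balance term — so the remaining work here is bookkeeping: checking that the rescaling is a symmetry \emph{of $\ell$ for every fixed $x$} and not merely of $L$, tracking the $e^{\lambda}$ factor through the nonsmooth activations, and confirming the quadratic-form identifications for the flattened weight matrices. I expect the only delicate point to be the interaction of ReLU's measure-zero non-differentiability with the definition of the entropic loss, which I would dispatch with a short remark rather than new machinery.
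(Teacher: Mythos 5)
Your proposal is correct and follows essentially the same route as the paper's own proof: the paper likewise takes the generator $A$ to be $+1$ on the $W_i$ block and $-1$ on the $W_j$ block (your $\Pi_i-\Pi_j$) and reads the result off Theorem~\ref{theo:exp symmetry} via Eq.~\eqref{eq:local_minimum}. Your additional care in checking that the rescaling is a per-sample symmetry of $\ell$ (using positive homogeneity of ReLU) and in spelling out the quadratic-form identifications is detail the paper leaves implicit, but it is not a different argument.
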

Specifically, for $\gamma=0$ we have gradient balance $\E \Tr[g_ig_i^\top] =  \E \Tr[g_jg_j^\top]$. For $\eta=0$, we have the standard weight balance $\Tr[W_iW_i^T]=\Tr[W_jW_j^T]$. Otherwise, the solution interpolates between gradient balance and weight balance.

Similarly, within every two neighboring layers, one can group the parameters into neurons:
\begin{equation}
    \Tr[g_ig_i^\top] + \Tr[g_{i+1}g_{i+1}^\top] = \sum_j \left(\Tr[g_{i,j,:}g_{i,j,:}^\top] + \Tr[g_{i+1,:, j}g_{i+1,:,j}^\top] \right),
\end{equation}
where $g_{i,j,:}$ is the incoming weights to the $j$-the neuron of the $i$-th layer, and $g_{i+1,:,j}$ is the outgoing weights of the same neuron. The gradient for each neuron must also be balanced because there is a rescaling symmetry in every neuron.

\begin{theorem}\label{theo: neuron balance}
    (Neuron Balance) For all local minimum of 
    Eq.~\eqref{eq: free energy} and any $i,\ j$, 
    \begin{equation}
\eta\mathbb{E}\Tr[g_{i,j,:}g_{i,j,:}^\top-g_{i+1,:, j}g_{i+1,:,j}^\top]=4\gamma\Tr[w_{i,j,:}w_{i,j,:}^T-w_{i+1,:,j}w_{i+1,:,j}^T].
    \end{equation}
\end{theorem}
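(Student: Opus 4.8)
The plan is to derive Neuron Balance as a direct specialization of the Master Balance Theorem (Theorem~\ref{theo:exp symmetry}) to the per-neuron rescaling symmetry of a ReLU network, so that the work reduces to (i) exhibiting the right exponential symmetry and (ii) reading off the two quadratic forms. Fix a neuron $j$ in a layer $i$ with $1\le i\le D-1$, and let $A$ be the diagonal matrix (indexed by the flattened parameter vector $\theta$) equal to $+1$ on the coordinates of the incoming weights $w_{i,j,:}$, equal to $-1$ on the coordinates of the outgoing weights $w_{i+1,:,j}$, and zero elsewhere. Since $A$ is diagonal it is symmetric, so $\tilde A=A$. The map $K(\theta,\lambda)=e^{\lambda A}\theta$ multiplies $w_{i,j,:}$ by $e^{\lambda}$, multiplies $w_{i+1,:,j}$ by $e^{-\lambda}$, and fixes all other parameters; locality (with $Q(\theta)=A\theta$) and consistency ($e^{\lambda A}e^{\lambda' A}=e^{(\lambda+\lambda')A}$) are immediate.

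The substantive step is the invariance check, $\ell(x,e^{\lambda A}\theta)=\ell(x,\theta)$ for every $x$ and every $\lambda\in\mathbb{R}$, and here I would use the positive homogeneity of ReLU together with the scale-invariance of its zero-one activation pattern. Scaling the $j$-th row of $W_i$ by $e^{\lambda}$ multiplies the $j$-th pre-activation of layer $i$ by $e^{\lambda}>0$; since $\sgn$ is invariant under multiplication by a positive scalar, the piecewise-constant activation matrix $R_i$ evaluated at the rescaled parameters is unchanged, and the $j$-th post-activation is exactly $e^{\lambda}$ times its original value. Scaling the $j$-th column of $W_{i+1}$ by $e^{-\lambda}$ cancels this factor when forming $W_{i+1}R_i(\cdots)$, so the input to layer $i+1$ --- and hence $R_{i+1},\ldots,R_{D-1}$ and $f(x)$ --- is unchanged; since $\ell(x,\theta)$ depends on $\theta$ only through $f(x)$, it is invariant. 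I would also note that the weight-decay term $\gamma\|\theta\|^2$ is \emph{not} invariant under $K$, which is precisely why the conclusion is a nontrivial balance rather than a flat direction; Theorem~\ref{theo:exp symmetry} already accounts for this.

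With the $A$-exponential symmetry in hand, Theorem~\ref{theo:exp symmetry} applies at any local minimum $\theta^*$ of Eq.~\eqref{eq: free energy} and gives $-\eta\,\mathbb{E}_\mathcal{B}[\mathbb{E}_{x\in\mathcal{B}}\nabla_\theta\ell(x,\theta^*)]^{T}A[\mathbb{E}_{x\in\mathcal{B}}\nabla_\theta\ell(x,\theta^*)]+4\gamma(\theta^*)^{T}A\theta^*=0$. Because $A$ is the signed indicator above, $(\theta^*)^{T}A\theta^*=\Tr[w_{i,j,:}w_{i,j,:}^{T}]-\Tr[w_{i+1,:,j}w_{i+1,:,j}^{T}]$, and writing $g:=\mathbb{E}_{x\in\mathcal{B}}\nabla_\theta\ell(x,\theta^*)$ with blocks $g_{i,j,:}$ and $g_{i+1,:,j}$, one has $g^{T}Ag=\Tr[g_{i,j,:}g_{i,j,:}^{T}]-\Tr[g_{i+1,:,j}g_{i+1,:,j}^{T}]$. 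Substituting these and keeping the outer batch expectation on the gradient term yields exactly the claimed identity.

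I expect the only genuine obstacle --- everything else being bookkeeping --- to be the invariance check, specifically the claim that the piecewise-constant activation matrices do not ``jump'' under the rescaling. This requires noting that $K$ scales pre-activations only by strictly positive factors, so the argument of each ReLU never crosses its kink; the symmetry would fail if one tried a sign flip instead, consistent with Theorem~\ref{theo:orthogonal} saying that the surviving discrete symmetries are the orthogonal ones.
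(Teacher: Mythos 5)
Your proposal is correct and follows essentially the same route as the paper: both instantiate the Master Balance Theorem with the diagonal generator that is $+1$ on the incoming weights $w_{i,j,:}$ and $-1$ on the outgoing weights $w_{i+1,:,j}$, and read off the two quadratic forms. The only difference is that you spell out the ReLU positive-homogeneity argument for the invariance check, which the paper leaves implicit.
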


From a physics perspective, we have proved an equipartition theorem (ET). The elements in the matrix $\E [g_ig_i^T]$ can be seen as the temperature (or, the average energy) felt by each parameter. The trace $ \Tr \E [g_ig_i^T]$ thus gives the temperature of the layer. That different layers emerge to have the gradient second momentum is an explicit form of the ET and means that the entropy $S$ must be evenly spread out across every layer. Because the standard ET in physics is a property of thermal equilibrium, our result may be seen as an extension of the physical law to the out-of-equilibrium dynamics of learning. See Figure~\ref{fig:balances} for the emergence of layer and neuron balances in a ReLU network. We train on the MNIST dataset, but the labels are generated by a teacher ReLU network and trained with an MSE loss. Also, see Appendix~\ref{app:exp} for examples of training trajectories and for an example with a self-attention net.

\begin{figure}
    \centering
    \includegraphics[width=0.235\linewidth]{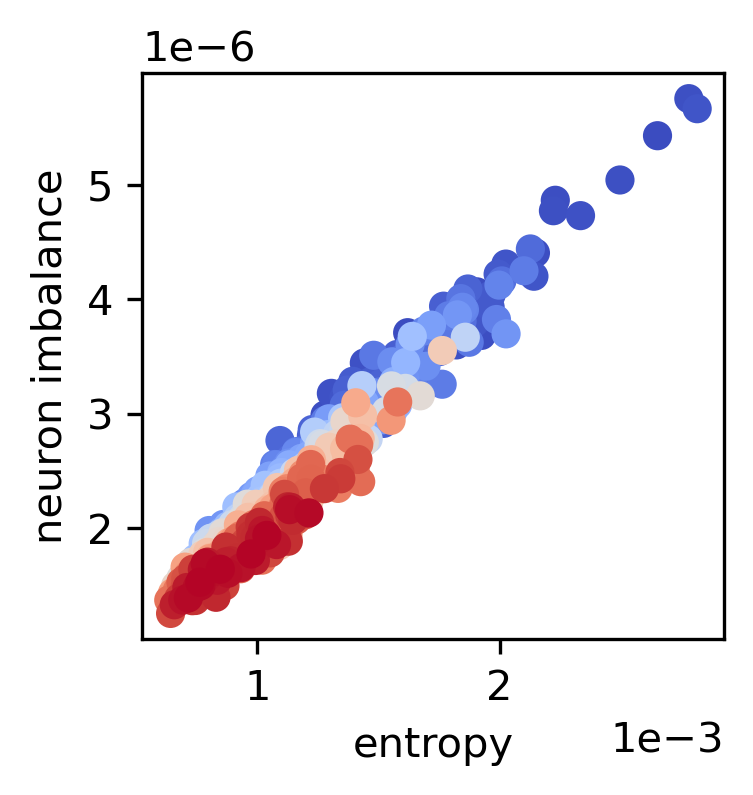}
    \includegraphics[width=0.235\linewidth]{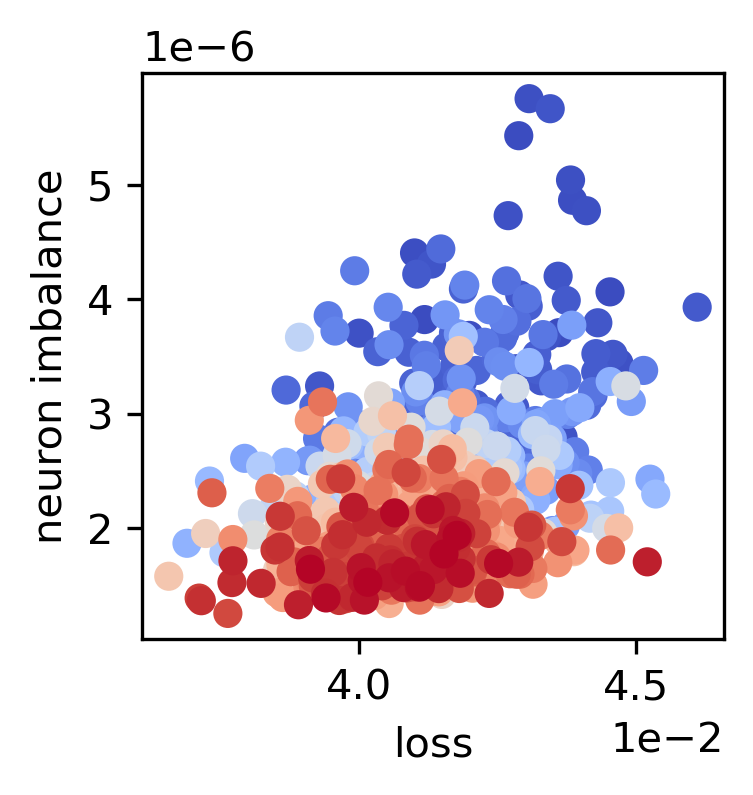}
    \includegraphics[width=0.235\linewidth]{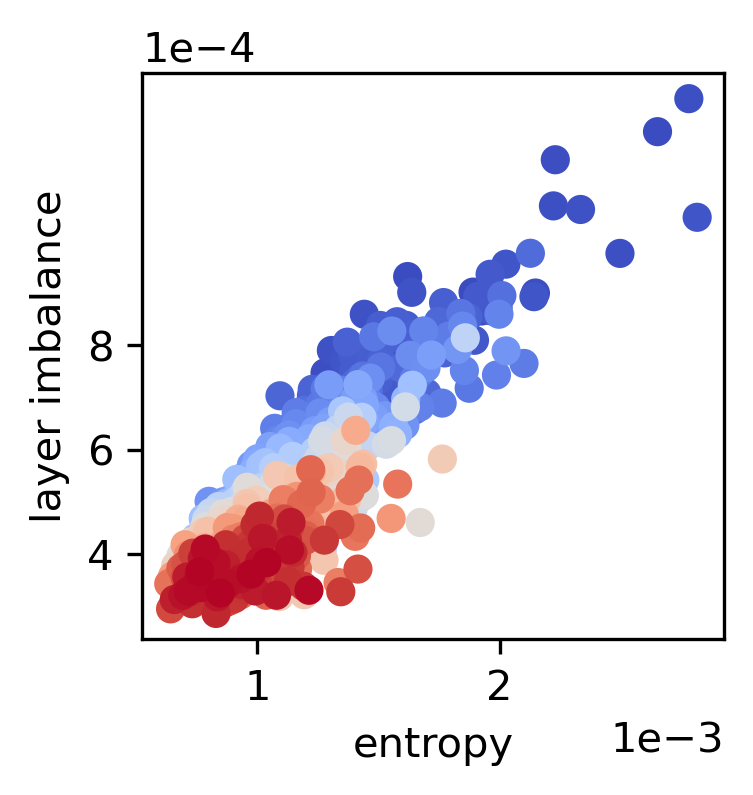}
    \includegraphics[width=0.235\linewidth]{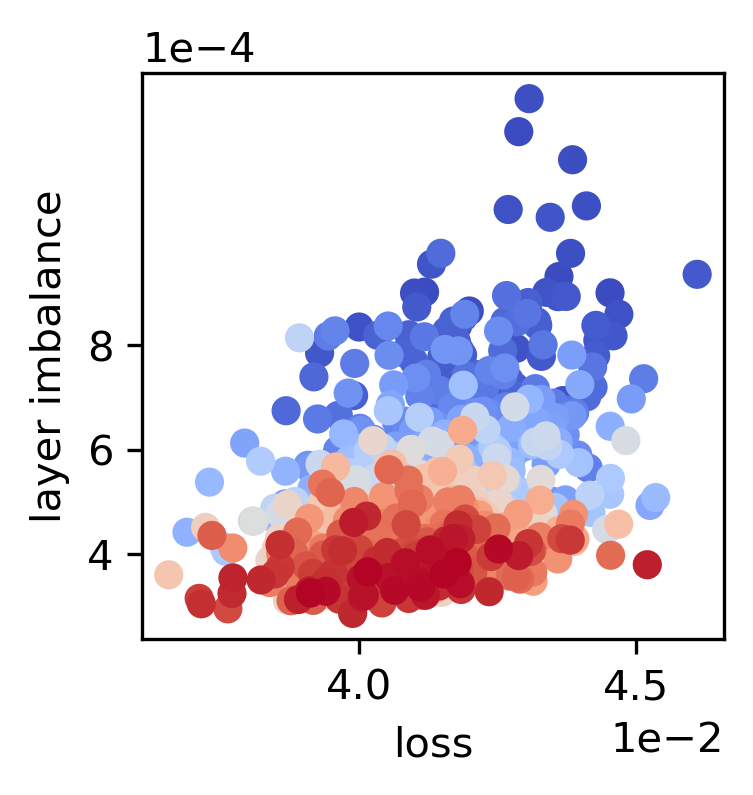}
    \vspace{-2mm}
    \caption{\small Layer and neuron gradient balance during training of a two-layer ReLU network. Here, every dot is a fixed time during training, where bluer dots are closer to the initialization, and redder dots are closer to convergence. \textbf{Left 1-2}: The entropy is strongly correlated with the neuron balance. As entropy decreases, the neuron balance improves. In contrast, the loss is not correlated to entropic effects at all. \textbf{Right}: Similarly, the layer balance is also correlated with entropy and not with loss function value.}
    \label{fig:balances}
\end{figure}

\vspace{-1mm}
\paragraph{Polynomial Network}

Now, consider the case where $R(h)$ is a diagonal matrix such that $R_{ii}(h)= h^d$ corresponds to a polynomial activation. This type of network is also a variant of homogeneous networks \cite{du2018algorithmic}. For these networks, one can show that the converge to a state where every layer's gradient norm is $d$ times that of the previous layer, leading to a gradient exploding or vanishing problem (See Appendix~\ref{app sec: polynomial net}).

\vspace{-1mm}
\paragraph{Self Attention}

Consider the case when a model has a generic form: $\ell(x,W, U) = \ell(x,WU)$, where $W$ and $U$ are matrices. The loss can contain other trainable parameters, which we ignore. Define $G_W = \E_{x\in\mathcal{B}}\nabla_W \ell(x,W,U),\quad G_U =\E_{x\in\mathcal{B}}\nabla_U\ell(x,W,U)$, and one can prove the following relation:
\begin{theorem}
\label{theo:attention}
(Gradient Alignment) For all local minimum of Eq.\eqref{eq: free energy}, we have
\begin{equation}
\eta\E[G_W^T G_W-G_U G_U^T]=4\gamma(W^TW-UU^T).
\end{equation}

\end{theorem}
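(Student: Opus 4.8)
The plan is to treat Theorem~\ref{theo:attention} as a special case of the Master Balance Theorem (Theorem~\ref{theo:exp symmetry}): the factorized loss $\ell(x,W,U)=\ell(x,WU)$ has a general-linear exponential symmetry, and contracting the stationarity condition of $F_{\eta,\gamma}$ against its generator yields the claimed identity. Concretely, for any conformable matrix $C$ and $\lambda\in\mathbb{R}$ the map $(W,U)\mapsto(We^{-\lambda C},\,e^{\lambda C}U)$ fixes $WU$, hence fixes $\ell(x,W,U)$ for every $x$; it is local and composes correctly, so $\ell$ has an exponential symmetry whose generator on the stacked parameter $\Theta=(W,U)$ is the linear map $\mathcal{A}\Theta=(-WC,\,CU)$, with Frobenius adjoint $\mathcal{A}^{\top}(V_W,V_U)=(-V_WC^{\top},\,C^{\top}V_U)$. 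I would carry out the computation directly in matrix form rather than vectorizing, to keep the transpose bookkeeping transparent; the structure mirrors the proofs of Theorems~\ref{theo:layer_balance}--\ref{theo: neuron balance}.

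The key steps are as follows. At a local minimum $\Theta^{*}$ of $F_{\eta,\gamma}$, write $\nabla F_{\eta,\gamma}(\Theta^{*})=0$ and take the Frobenius inner product with the generator direction $\mathcal{A}\Theta^{*}$. The $\mathbb{E}_x\ell$ term contributes $\langle\nabla L(\Theta^{*}),\mathcal{A}\Theta^{*}\rangle$, which vanishes by differentiating $L(e^{\lambda\mathcal{A}}\Theta)=L(\Theta)$ at $\lambda=0$. The weight-decay term contributes $2\gamma\langle\Theta^{*},\mathcal{A}\Theta^{*}\rangle = 2\gamma\,\Tr\!\big((U^{*}U^{*\top}-W^{*\top}W^{*})C\big)$, using cyclicity $\Tr(U^{\top}CU)=\Tr(UU^{\top}C)$. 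For the entropy term $S=\tfrac{\eta}{4}\mathbb{E}_{\mathcal{B}}\|G\|^{2}$ with $G=\mathbb{E}_{x\in\mathcal{B}}\nabla_\Theta\ell$, one has $\nabla S=\tfrac{\eta}{2}\mathbb{E}_{\mathcal{B}}[H_{\mathcal{B}}G]$ where $H_{\mathcal{B}}=\mathbb{E}_{x\in\mathcal{B}}\nabla^{2}_\Theta\ell$; the crucial input is the second-order symmetry identity obtained by differentiating the per-sample first-order identity $\nabla_\Theta\ell(x,\Theta)^{\top}\mathcal{A}\Theta=0$, namely $\nabla^{2}_\Theta\ell(x,\Theta)\,\mathcal{A}\Theta=-\mathcal{A}^{\top}\nabla_\Theta\ell(x,\Theta)$, which averages to $H_{\mathcal{B}}\,\mathcal{A}\Theta=-\mathcal{A}^{\top}G$. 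Hence $\langle\nabla S,\mathcal{A}\Theta^{*}\rangle=-\tfrac{\eta}{2}\mathbb{E}_{\mathcal{B}}\langle G,\mathcal{A}^{\top}G\rangle=\tfrac{\eta}{2}\,\Tr\!\big(\mathbb{E}_{\mathcal{B}}[G_W^{\top}G_W-G_UG_U^{\top}]\,C^{\top}\big)$. Since $U^{*}U^{*\top}-W^{*\top}W^{*}$ is symmetric, the weight-decay contribution also equals $2\gamma\,\Tr\!\big((U^{*}U^{*\top}-W^{*\top}W^{*})C^{\top}\big)$, so summing the three pieces gives $\Tr\!\big(\big[\,2\gamma(U^{*}U^{*\top}-W^{*\top}W^{*})+\tfrac{\eta}{2}\mathbb{E}_{\mathcal{B}}(G_W^{\top}G_W-G_UG_U^{\top})\,\big]C^{\top}\big)=0$. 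Because $C$ ranges over all conformable matrices and the bracketed matrix is symmetric, it must vanish; multiplying by $2$ and rearranging yields $\eta\,\mathbb{E}[G_W^{\top}G_W-G_UG_U^{\top}]=4\gamma(W^{\top}W-UU^{\top})$, as claimed.

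The main obstacle is not conceptual but bookkeeping: keeping straight left- versus right-multiplication by $C$, tracking which transposes land where when passing between the Frobenius inner product and traces, and reconciling the $C$ appearing in the weight-decay contribution with the $C^{\top}$ appearing in the entropy contribution — this reconciliation works precisely because $W^{\top}W$, $UU^{\top}$, $G_W^{\top}G_W$, and $G_UG_U^{\top}$ are all symmetric, so $\Tr(MC)=\Tr(MC^{\top})$ for each such $M$. The one genuinely substantive point, shared with the other balance theorems, is that the Hessian identity used inside the entropy term requires \emph{per-sample} invariance of $\ell$, not merely invariance of $L=\mathbb{E}_x\ell$; this holds here because $\ell$ depends on $(W,U)$ only through the product $WU$.
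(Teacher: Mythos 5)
Your proposal is correct and follows essentially the same route as the paper: the paper's proof likewise instantiates the Master Balance Theorem (Theorem~\ref{theo:exp symmetry}) with the generator of the $GL$ symmetry $(W,U)\mapsto(We^{-\lambda C}, e^{\lambda C}U)$, only writing the generator entrywise as a symmetrized index matrix rather than contracting against a general $C$ in matrix form as you do. Your inlined derivation of the stationarity identity (loss term vanishing by first-order invariance, the Hessian identity $H_{\mathcal{B}}\mathcal{A}\Theta=-\mathcal{A}^{\top}G$ for the entropy term, and the trace bookkeeping) reproduces exactly the content of the paper's argument.
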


This theorem is thus applicable to matrix factorization, deep linear networks, and, more importantly, self-attention layers. The self-attention logit is computed as $a_{ij} = X_i^T W U X_j$, where $W$ is the key matrix, $U$ is the query matrix. The loss function is a function of $a_{ij}$ viewed as a matrix: $\ell(\{a_{ij}\})$. Let $V = W_2W_1$. Then, applying this theorem reveals an intriguing relation:
\begin{equation}
    W_{1} \E[ G_V^T G_V] W_1^T = W_{2}^T \E[ G_V G_V^T] W_2.
\end{equation}

\paragraph{Interpolating Weight Balance and Gradient Balance}

For all the theorems above, we have also studied how weight decay affects the balance conditions. We see that the weight decay creates something analogous: instead of gradient balance, weight decay encourages weight balance, and this effect often cannot be achieved together with gradient balance. Thus, there is a tradeoff between gradient balance and weight balance. In reality, the network is somewhere in between, where the weight balance and gradient balance has to ``balance" with each other. Also, this is a generalized form of an equipartition theorem. If we regard the sum of the regularization and $S$ as a ``total" entropic potential $\Gamma$, then this means that every layer will contribute an equal amount to $\Gamma$.\footnote{Now, it is helpful to clarify the role of the $L_2$ regularization term. Conceptually, it can either be regarded as a part of the loss function $\ell$, which we have done so far, or as a part of the entropic term. Treating it as a part of entropy is sometimes conceptually preferred because, like the entropy, it also functions as a regularization of the parameter space, limiting the accessible states.}


\begin{figure}[t!]
    \centering
    \includegraphics[width=0.4\linewidth]{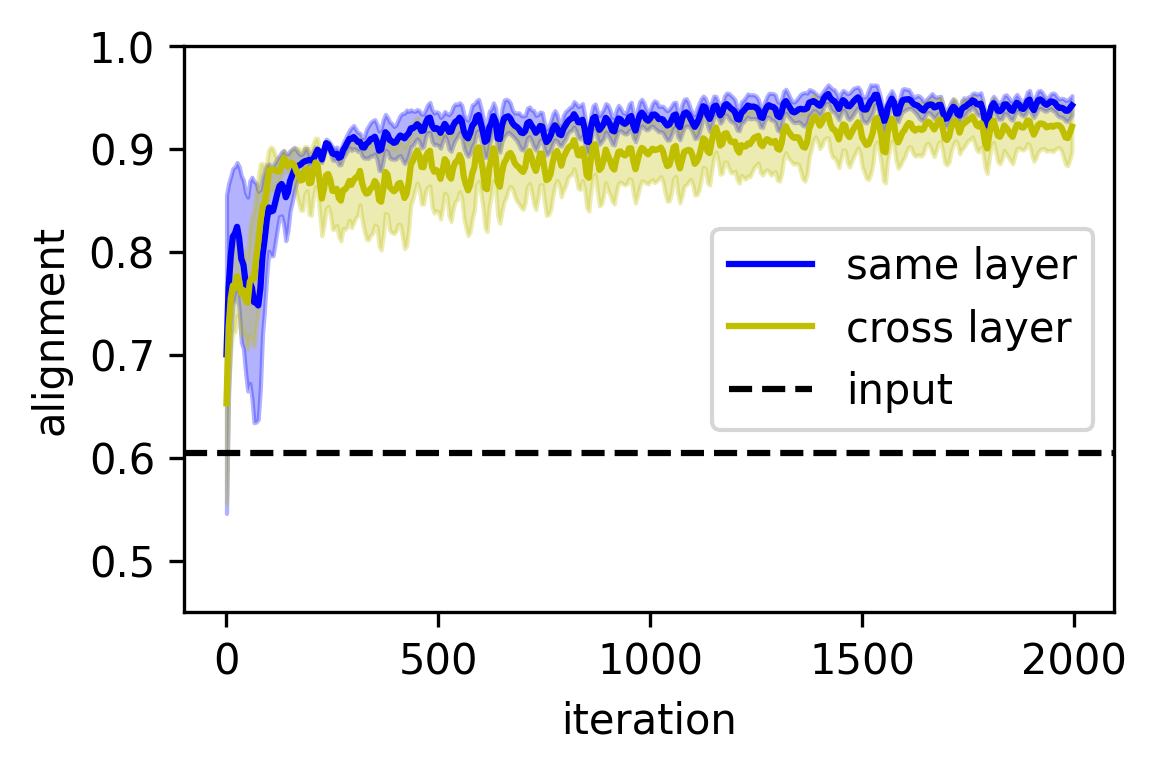}
    \includegraphics[width=0.4\linewidth]{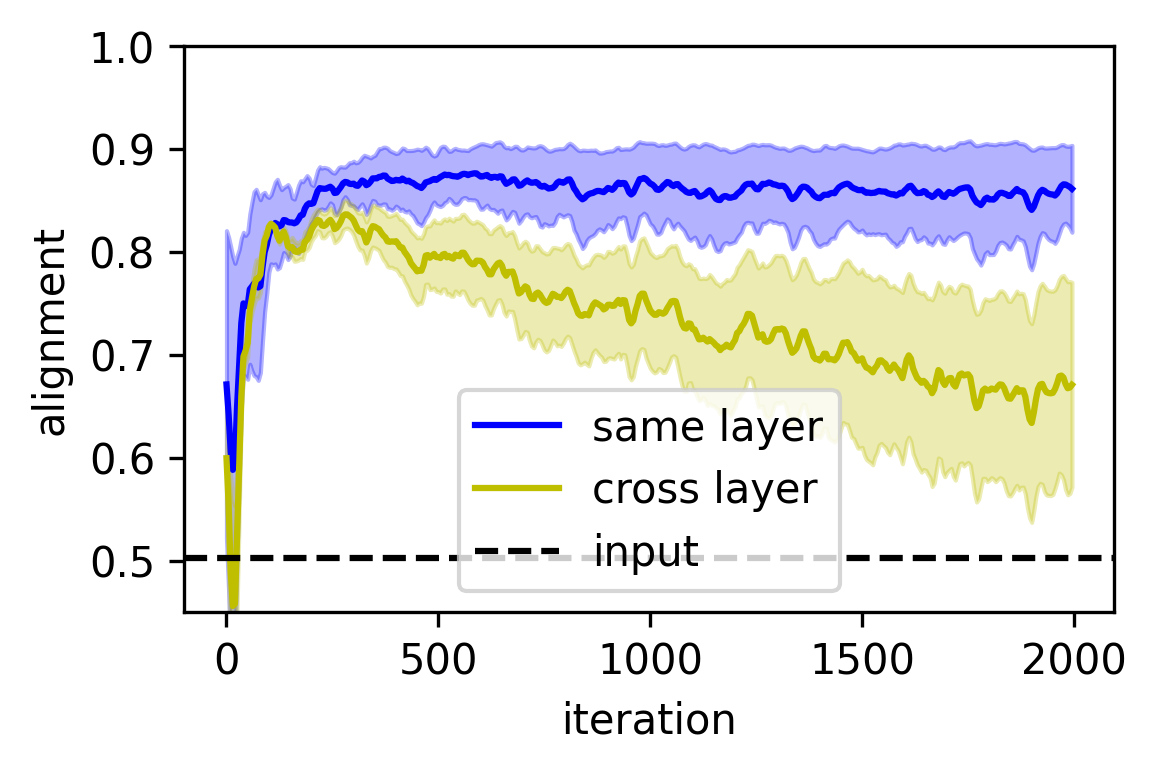}
    \vspace{-2mm}
    \caption{\small The representations of two $6$-layer networks independently trained on randomly transformed MNIST become perfectly aligned for \textbf{every} pair of layers. The figure shows the average alignment between the same or different layers of two networks. This alignment does not weaken even if the input is arbitrarily transformed (Theorem~\ref{theo: universal representation}). The black dashed line shows the average alignment to the input data, which is significantly weaker. \textbf{Left}: linear network. \textbf{Right}: tanh network.}
    \label{fig:representation alignment}

    \centering
    \vspace{1mm}
    \includegraphics[width=0.8\linewidth]{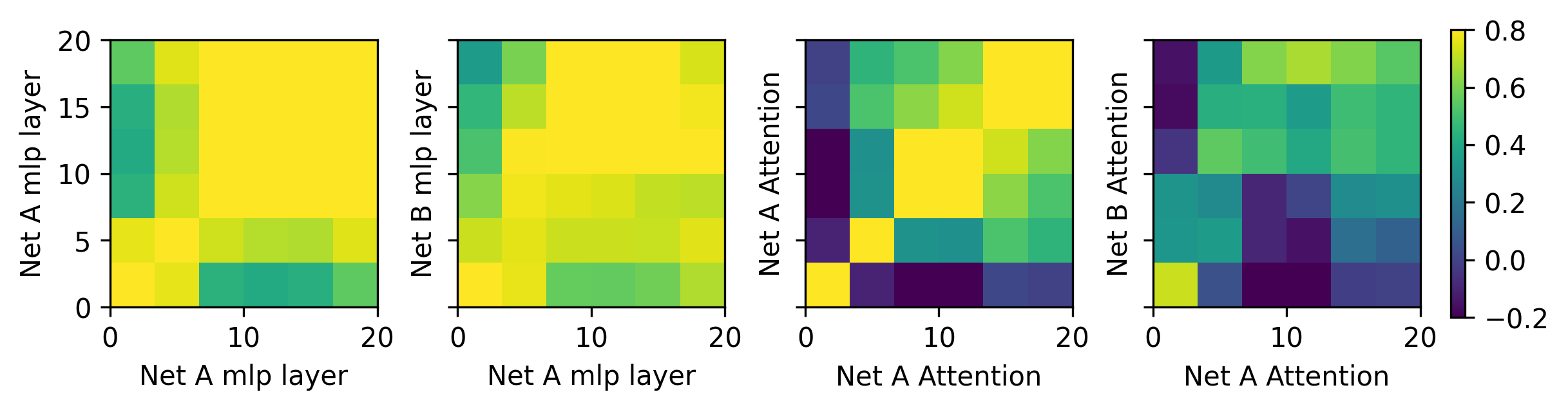}
    \vspace{-2mm}
    \caption{\small Alignment of representations of two ViT models pretrained on ImageNet. Net A: ViT-B (\#param: 86M). Net-B: ViT-H (300M). We see that both mlp layers and the self-attention layers have mutually aligned representations both with itself and the other with each other. In particular, the alignment with itself is slightly better than with the different model, and the alignment of later layers is better than that of the first layers. A similar result with larger models is shown in Appendix~\ref{app sec: vit}.}
    \label{fig:vit alignment}
\end{figure}

\vspace{-1mm}
\section{Implications}\label{sec: universal}
\vspace{-1mm}
Next, we apply these results to study the emergence of universal representations in neural networks, and the progressive sharpening phenomena in deep learning optimization.
\vspace{-1mm}
\subsection{Universal Representation Learning}
\vspace{-1mm}
Recent works found that the representations of learned models are almost universally aligned to different models trained on similar datasets \cite{bansal2021revisiting, kornblith2019similarity}, and even to the biological brains \cite{yamins2014performance}. This interesting phenomenon has a rather philosophical undertone and has been termed ``Platonic Representation Hypothesis" \cite{huh2024platonic}. Here, we say that the two neural networks have learned a universal representation if for all $x_1,\ x_2$, 
\begin{equation}\label{eq: universal alignment}
    h_A(x_1)^T h_A(x_2) = h_B(x_1)^T h_B(x_2),
\end{equation}
where $h_A$ is the activation of network $A$ in one of the hidden layers, and  $h_B$ for network B. This is an idealization of what people have observed -- and the difference between the two sides is the ``degree of alignment." We leverage the entropic force formalism to identify an exact solution to the embedded deep linear network (EDLN) model: 
\begin{equation}
\ell(\theta, x)=||M_1W_D\cdots W_1M_2 x-y(x)||^2.
\label{eq:loss_deep_linear}
\end{equation}
on datasets $\mathcal{D}_{M_3} = \{(M_3x_i, y_i)\}_i$, where $M_1,M_2,M_3$ are fixed but arbitrary invertible matrices. They have the following meaning:
\begin{itemize}[noitemsep,topsep=0pt, parsep=0pt,partopsep=0pt, leftmargin=15pt]
    \item $M_1$ can be seen as a model of the layers coming after the deep linear network;
    \item $M_2$ models the layers coming after the embedded network;
    \item $M_3$ models different views of the data, which is common in multimodal learning -- therefore, the two models are trained on two different (but related) datasets.
\end{itemize}
In Theorem~\ref{theo: universal representation}, we will train two different models, each with their own and potentially different $M_1$, $M_2$, $M_3$. The arbitrariness of these three matrices implies \textit{universality}.

The data is generated by $y_i=Vx_i+\epsilon_i$ for i.i.d. noise $\epsilon_i$. Assuming that $\E x_i=\E\epsilon_i=0$ and $\Sigma_\epsilon:=\E\epsilon_i\epsilon_i^T$, $\Sigma_x:=\E x_ix_i^T$, 
the following theorem characterizes the global minimum of the entropic loss for this network in the case $\gamma=0$ and $\eta \to 0_+$.


\begin{theorem}[Perfect Platonic Representation Hypothesis]\label{theo: universal representation}
Consider two deep linear networks A and B with weights of arbitrary dimensions larger than $\text{rank}(\sqrt{\Sigma_\epsilon}V\sqrt{\Sigma_x})$. Let model A train on $\mathcal{D}_{M_3}$ and model B on $\mathcal{D}_{M_3'}$. Moreover, the outputs of models are multiplied by $M_1,M_1'$, and the inputs are multiplied by $M_2,M_2'$, respectively. Then, at the global minimum of Eq. \eqref{eq: free energy}, every hidden layer of A is \textbf{perfectly} aligned with every hidden layer of B for any $x$, in the sense that
\begin{equation}
h_A^{L_A}(x)=c_0Rh_B^{L_B}(x)
\end{equation}
for $1\leq L_A<D_A$ and $1\leq L_B<D_B$ and any $x$, where
$c_0$ is a scalar and $R=U_1U_2^T$ satisfying $U_1^TU_1=U_2^TU_2=I$. $h_A^{L_A}(x):=\Pi_{i=1}^{L_A}W_i^AM_2M_3x$, $h_A^{L_A}(x):=\Pi_{i=1}^{L_A}W_i^AM_2'M_3'x$ denote the output of the $L_A,L_B-$the layer of network A and B, respectively.

\end{theorem}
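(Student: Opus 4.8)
The plan is to reduce the problem to an application of the Master Balance Theorem (Theorem~\ref{theo:exp symmetry}) and Theorem~\ref{theo:attention}, using the abundant exponential symmetries of the deep linear loss \eqref{eq:loss_deep_linear}. First I would record that the loss $\ell(\theta,x)=\|M_1 W_D\cdots W_1 M_2 M_3 x - y\|^2$ has, for each adjacent pair $(W_{i+1},W_i)$, the $\mathrm{GL}$-type exponential symmetry $W_{i+1}\mapsto W_{i+1}e^{-\lambda A}$, $W_i\mapsto e^{\lambda A}W_i$ for arbitrary $A$, and similar symmetries coupling $W_1$ to $M_2 M_3$ (absorbed) and $W_D$ to $M_1$. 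Applying Theorem~\ref{theo:attention} (with $W\leftarrow W_{i+1}$, $U\leftarrow W_i$, and $\gamma=0$) to each adjacent pair gives, at the global minimum, $\mathbb{E}[G_{W_{i+1}}^\top G_{W_{i+1}}] = \mathbb{E}[G_{W_i} G_{W_i}^\top]$ as matrix equalities, not just traces. The second step is to compute these gradients explicitly: since the map is linear, $G_{W_i} = (\text{product of later weights and }M_1)^\top \cdot E \cdot (\text{product of earlier weights times } M_2 M_3 \Sigma_x)$ where $E$ is the residual (and in the $\eta\to 0_+$ limit the optimum of $S$ among the zero-loss manifold is what we characterize), so the balance conditions become a recursion relating singular values and singular directions of consecutive partial products $\Pi_{j\le i} W_j$.

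Next I would solve this recursion. The key claim is that the matrix balance conditions force each partial product $P_i^A := \Pi_{j=1}^{i} W_j^A$ (times the fixed input transformation) to have singular value decomposition whose right singular vectors are fixed across $i$ (tied to the data geometry $\sqrt{\Sigma_x}$, $V$, $\Sigma_\epsilon$) and whose singular values form a geometric-type progression determined by the rank-$r$ structure, $r=\mathrm{rank}(\sqrt{\Sigma_\epsilon}V\sqrt{\Sigma_x})$; in particular the hidden representation $h_A^{L_A}(x) = P_{L_A}^A M_2 M_3 x$ lies in an $r$-dimensional subspace and equals, up to a global scalar $c_0$ and an orthogonal factor $U_1$, a fixed ``canonical'' representation $\sqrt{\Lambda}\, (\text{fixed vectors})^\top M_2 M_3 x$ that depends only on $V,\Sigma_x,\Sigma_\epsilon$ and the data-view matrices. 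Because model $B$ satisfies the same canonical form with its own orthogonal factor $U_2$ and the same $V,\Sigma_x,\Sigma_\epsilon$ (the data distribution, not the views, is shared), one gets $h_A^{L_A}(x) = c_0 U_1 U_2^\top h_B^{L_B}(x)$ with $R=U_1U_2^\top$ a partial isometry as claimed. I would also need to verify that the residual-error term and the entropy term, jointly minimized, indeed select this symmetric/balanced solution rather than merely being consistent with it — this uses the ``either symmetry-preserving or isolated minimum'' dichotomy in Theorem~\ref{theo:exp symmetry} together with the fact that along the zero-loss manifold $S$ is a sum of convex-like terms whose unique balanced minimizer is the geometric progression.

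The main obstacle I anticipate is the bookkeeping of the non-square, rank-deficient case: the weights have arbitrary dimensions exceeding $r$, so the partial products are generically rank $r$ and the SVDs involve non-unique null directions. I must show the balance conditions pin down the representation only on the ``active'' $r$-dimensional subspace and that the degrees of freedom in the null space are exactly the orthogonal ambiguity $U_1$, which then cancels between $A$ and $B$ up to $R$. A secondary subtlety is handling $M_1, M_1'$ (post-composition): these are not symmetry directions that can be freely absorbed, so I need the balance condition at the top layer $W_D$ to interact with $M_1$ correctly — concretely, the relevant exponential symmetry is $W_D \mapsto M_1^{-1} e^{\lambda A} M_1 W_D$ paired with an appropriate conjugated transformation, and I should check that Theorem~\ref{theo:exp symmetry} still applies after this change of variables (it does, since conjugation preserves the exponential-symmetry structure, only replacing $A$ by $M_1 A M_1^{-1}$ and $\tilde A$ by its symmetrization). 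Finally I would assemble these pieces to conclude the stated alignment $h_A^{L_A}(x)=c_0 R h_B^{L_B}(x)$ for all interior layers $1\le L_A<D_A$, $1\le L_B<D_B$, and all $x$.
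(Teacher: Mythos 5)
Your proposal is correct and takes essentially the same route as the paper: the paper likewise applies the adjacent-layer exponential symmetry (via Theorem~\ref{theo:attention}) to obtain matrix-valued gradient balance conditions, turns these into a recursion on the partial products (absorbing $M_1,M_2,M_3$ and $\sqrt{\Sigma_x},\sqrt{\Sigma_\epsilon}$ into redefined first and last weights), solves it to get the canonical SVD form $W_i=U_i\Sigma_i U_{i-1}^T$ with data-determined singular values and arbitrary orthonormal-column $U_i$, and then compares the resulting partial products of networks A and B to read off $c_0$ and $R=U_{L_A}U_{L_B}^T$. The only cosmetic difference is your conjugated symmetry for handling $M_1$; the paper simply defines $W_D'=\sqrt{\Sigma_\epsilon}M_1W_D$ and works with the standard right-multiplication symmetry.
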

Here, a perfect alignment means that $h_A^{L_A}(x)$ differs from $h_B^{L_B}(x)$ only by a scaling and a rotation. Because of symmetry, SGD converges to a state where all possible pairs of the intermediate layers of two different networks are mutually aligned, independent of the initialization. This is an extraordinary fact because there exist infinitely many solutions that are not perfectly aligned, also due to symmetry. For example, take $h_B$ to be any hidden layer, and transform its incoming weight by $A$ and its outgoing weight by $A^{-1}$. This remains a global minimum for $L$, but it is no longer the case that there exists an orthogonal transformation $R$ such that $h_A = h_B$. Therefore, for almost all global minima of $L$, there is no universal alignment between layers -- yet, SGD prefers a universal solution due to the entropy term. The following theorem shows that weight decay will lead to a nonuniversal representation.

From a physics and thermodynamics perspective, it is quite reasonable that the irreversibility of the dynamics leads to the emergence of universal structures. A state is not universal if it contains information about its initial condition. Therefore, the irreversibility of the learning dynamics helps erase information about the parameter initialization, thereby enabling the learning of a universal solution. The following hypothesis can summarize this perspective:
\begin{quote}
    \centering \textit{Irreversibility enables universal representation learning.}
\end{quote}
Now, it is worthwhile to remark on the connection and difference between this result and the original PRH \cite{huh2024platonic}. First, our theory provides strong support for the PRH. The original PRH only hypothesizes a positive similarity between models, and it is unclear whether the alignment score can reach 1 (which implies a perfect alignment) or will only be a small positive value. Our result shows that, in principle, it is possible to reach the perfect alignment limit. Secondly, this result also offers an alternative perspective on the representation alignment phenomenon to that of the original PRH paper. In the old perspective, one regards having no alignment as the default expectation, and positive alignment as something to be explained and understood. In our new perspective, the perfect alignment is the default, and breaking away from it is something to be explained and understood -- which is exactly what we will demonstrate in the next theorem. We also refer to Ref.~\cite{ziyin2025proof} for a more instructive derivation of the proof and for a detailed discussion of different ways to break the perfect PRH.

\begin{theorem}
\label{theo:deep_linear_wd}
Consider the deep linear network \eqref{eq:loss_deep_linear} with widths larger than $d:=\text{rank}(V)$. Let $\eta=0$ and $\gamma\to0_+$. At the global minimum of \eqref{eq: free energy}, we have
\begin{equation}
W_i=U_iP_i\Sigma U_{i-1}^T
\end{equation}
for $i=1,\cdots,D$. $U_D$ and $U_0$ are given by the SVD of $M_1^{-1}VM_3^{-1}M_2^{-1}:=U_DSU_0$, where $S\in\mathbb{R}^{d\times d}$ contains the singular values. For $i=2,\cdots,D-1$, $U_i$ are arbitrary matrices satisfying $U_i^TU_i=I_{d\times d}$. Moreover, $\Sigma=S^{1/D}$. $\{P_i\}_{i=1}^D\subset\mathbb{R}^{d\times d}$ are diagonal matrices containing $\pm1$ and satisfy $\Pi_{i=1}^DP_i=I_d$.
\end{theorem}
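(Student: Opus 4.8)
The plan is to reduce the claim to a deterministic matrix-factorization problem and solve it with the balancedness conditions forced by first-order optimality. At $\eta=0$ the effective entropy $S$ in \eqref{eq: free energy} vanishes along with the $O(\|\Lambda\|^2)$ remainder, so $F_{0,\gamma}(\theta)=\E_x\ell(x,\theta)+\gamma\|\theta\|^2$ for the deep linear loss \eqref{eq:loss_deep_linear}. Writing $\Pi:=W_D\cdots W_1$ and using $y=Vx+\epsilon$ with $\epsilon$ independent of $x$ and $\E\epsilon=0$, the data term expands as $\E_x\ell=\|(M_1\Pi M_2M_3-V)\sqrt{\Sigma_x}\|_F^2+\Tr\Sigma_\epsilon$; taking $\Sigma_x\succ0$ (implicit, since $\Sigma_x$ is absent from the conclusion), the minimizers of $\E_x\ell$ form exactly the closed set $\{\theta:\Pi=M_1^{-1}VM_3^{-1}M_2^{-1}=:T\}$, where $\mathrm{rank}(T)=\mathrm{rank}(V)=d$. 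A standard Tikhonov argument---comparing with a feasible point bounds $\|\theta_\gamma\|$, and every limit point of $\theta_\gamma$ minimizes $\E_x\ell$---shows that as $\gamma\to0_+$ the global minimizers of $F_{0,\gamma}$ converge to the minimizers of $\sum_{i=1}^D\|W_i\|_F^2$ subject to $W_D\cdots W_1=T$; so it remains to characterize the minimum-Frobenius-norm factorizations of the fixed rank-$d$ matrix $T$, with thin SVD $T=U_DSU_0^\top$.

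A minimizer of $\sum\|W_i\|_F^2$ on the constraint set exists (coercivity plus closedness). For each internal index $i\in\{1,\dots,D-1\}$, the reparametrization $(W_i,W_{i+1})\mapsto(gW_i,W_{i+1}g^{-1})$ with $g$ invertible and near $I$ preserves $\Pi$---hence feasibility---and fixes the other blocks, so $g=I$ is a critical point of $g\mapsto\|gW_i\|_F^2+\|W_{i+1}g^{-1}\|_F^2$; its first-order condition is the balancedness relation $W_iW_i^\top=W_{i+1}^\top W_{i+1}$ for $i=1,\dots,D-1$.

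Balancedness forces $\mathrm{rank}(W_i)=\mathrm{rank}(W_{i+1})$, hence a common rank $r$; writing thin SVDs $W_i=A_i\Sigma_iB_i^\top$, the identity $A_i\Sigma_i^2A_i^\top=B_{i+1}\Sigma_{i+1}^2B_{i+1}^\top$ is the eigendecomposition of one PSD matrix, so all $\Sigma_i$ equal a single positive diagonal $\Sigma$ and $B_{i+1}=A_iP_i$ with $P_i$ a $\pm1$ diagonal matrix. Substituting and commuting diagonal factors gives $W_i=U_iP_i\Sigma U_{i-1}^\top$ with $U_i^\top U_i=I$, and the product telescopes to $W_D\cdots W_1=U_D(\prod_iP_i)\Sigma^DU_0^\top$; matching to $T=U_DSU_0^\top$ forces $r=d$, $\Sigma^D=S$ (so $\Sigma=S^{1/D}$), $\prod_iP_i=I_d$, and $U_0,U_D$ to equal the SVD factors of $T$, while the middle $U_i$ cancel out of the product and are free orthonormal-column matrices. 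Conversely every tuple of this form is feasible with total norm $D\Tr\Sigma^2=D\Tr S^{2/D}$, independent of the free choices; since a minimizer of this form exists by the previous steps, that common value is the minimum and hence every such tuple is a global minimizer, which gives the exhaustive characterization claimed.

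The delicate part is the degenerate case of repeated singular values of $T$: there $B_{i+1}=A_iC_i$ only for block-orthogonal $C_i$, and the SVD of $T$ is itself unique only up to such rotations, so bringing a general balanced minimizer to the stated $\pm1$-diagonal form requires absorbing the block rotations---possibly at the cost of also rotating the free middle factors $U_i$---and checking that the total norm is unchanged. A minor additional point is verifying in the reduction step that the $\gamma\to0_+$ limit lands exactly on $\{W_D\cdots W_1=T\}$ (this uses $\Sigma_x\succ0$) and that the label noise $\epsilon$ genuinely decouples from the population gradient.
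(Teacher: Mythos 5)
Your proof is correct and lands on the same structural core as the paper's—the per-layer balancedness $W_iW_i^\top=W_{i+1}^\top W_{i+1}$ followed by SVD telescoping—but you reach that balancedness by a genuinely different route. The paper's three-line proof simply sets $\eta=0$ in Theorem~\ref{theo:attention}, which gives $4\gamma(W^\top W-UU^\top)=0$ at any local minimum of the entropic loss for finite $\gamma$, and then asserts the interpolation condition $\prod_iW_i=M_1^{-1}VM_2^{-1}$ at the global minimum, leaving the $\gamma\to0_+$ limit, the sign matrices $P_i$, the exponent $\Sigma=S^{1/D}$, and the degenerate-spectrum case entirely implicit (it also drops the $M_3^{-1}$ factor, apparently a typo, which your target matrix $T$ correctly retains). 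You instead first pass to the limit via a Tikhonov argument, reducing to minimum-Frobenius-norm factorization of $T$ subject to $W_D\cdots W_1=T$, and then derive balancedness from first-order optimality under the reparametrization $(W_i,W_{i+1})\mapsto(gW_i,W_{i+1}g^{-1})$—which is the same $GL$ symmetry that underlies Theorem~\ref{theo:attention}, just applied to the constrained limiting problem rather than to the entropic loss. What your version buys is a clean justification of the order of limits (why the minimizer interpolates exactly and why weight decay selects the min-norm factorization), an explicit accounting of the $P_i$ and of why all such factorizations attain the common value $D\Tr S^{2/D}$, and an honest flag of the repeated-singular-value degeneracy; the cost is re-deriving a balance identity the paper already has as a standing theorem. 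Both arguments share the same implicit assumptions ($\Sigma_x\succ0$ and existence of limit points), so I see no gap.
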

The universal representation property (Theorem \ref{theo: universal representation}) does not hold anymore. Therefore, gradient balances lead to universal representations, whereas weight balances do not. See Figure~\ref{fig:representation alignment} for an experiment with deep linear and nonlinear networks. A surprising aspect is that every layer can be aligned with every other layer. Because any deep nonlinear network is approximated by a linear network for a small weight norm \cite{ziyin2022exact}, one could say that any nonlinear network is, to first order in $\|\theta\|$, a universal representation learner. In Figure~\ref{fig:vit alignment}, we compare the alignment between different self-attention layers of two differently sized vision transformers pretrained on ImageNet. Note that different layers of the same network and different networks also have significantly positive alignments, consistent with the solution for the embedded deep linear net. That using weight decay does not lead to universal representations is supported by the result in Figure~\ref{fig:representation alignment2}.

Theorem~\ref{theo: universal representation} is a direct (perhaps the first) proof of the Platonic representation hypothesis, implying that for any $x_1, x_2$, Eq.~\eqref{eq: universal alignment} holds. Importantly, the mechanism does not belong to any previously conjectured mechanisms (capacity, simplicity, multitasking \cite{huh2024platonic}). This example has nothing to do with multitasking. The result holds for any deep linear network, all having the same capacity and the same level of simplicity, because all solutions parametrize the same input-output map. Here, the cause of the universal representation is symmetry alone: in the degenerate manifold of solutions, the training algorithm prefers a particular and universal one. This example showcases how symmetry is indeed an overlooked fundamental mechanism in deep learning.

\begin{figure}
    \centering
    \includegraphics[width=0.33\linewidth]{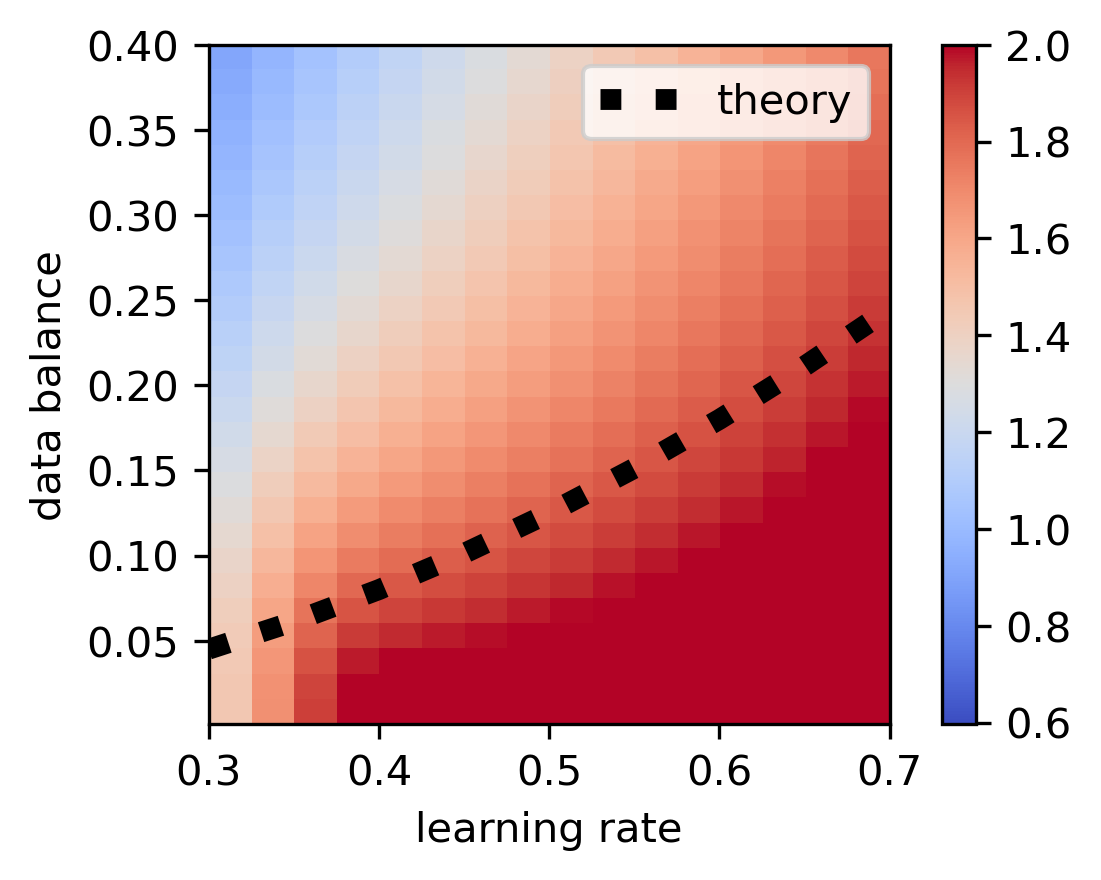}
    \includegraphics[width=0.29\linewidth]{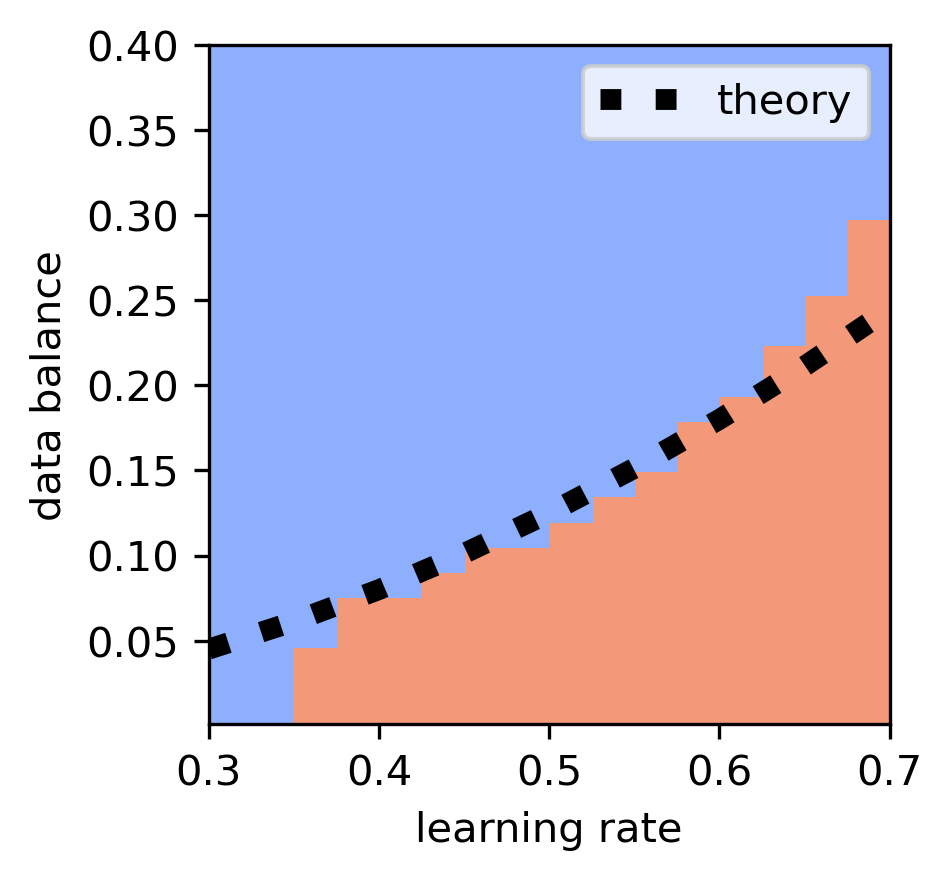}
    \includegraphics[width=0.35\linewidth]{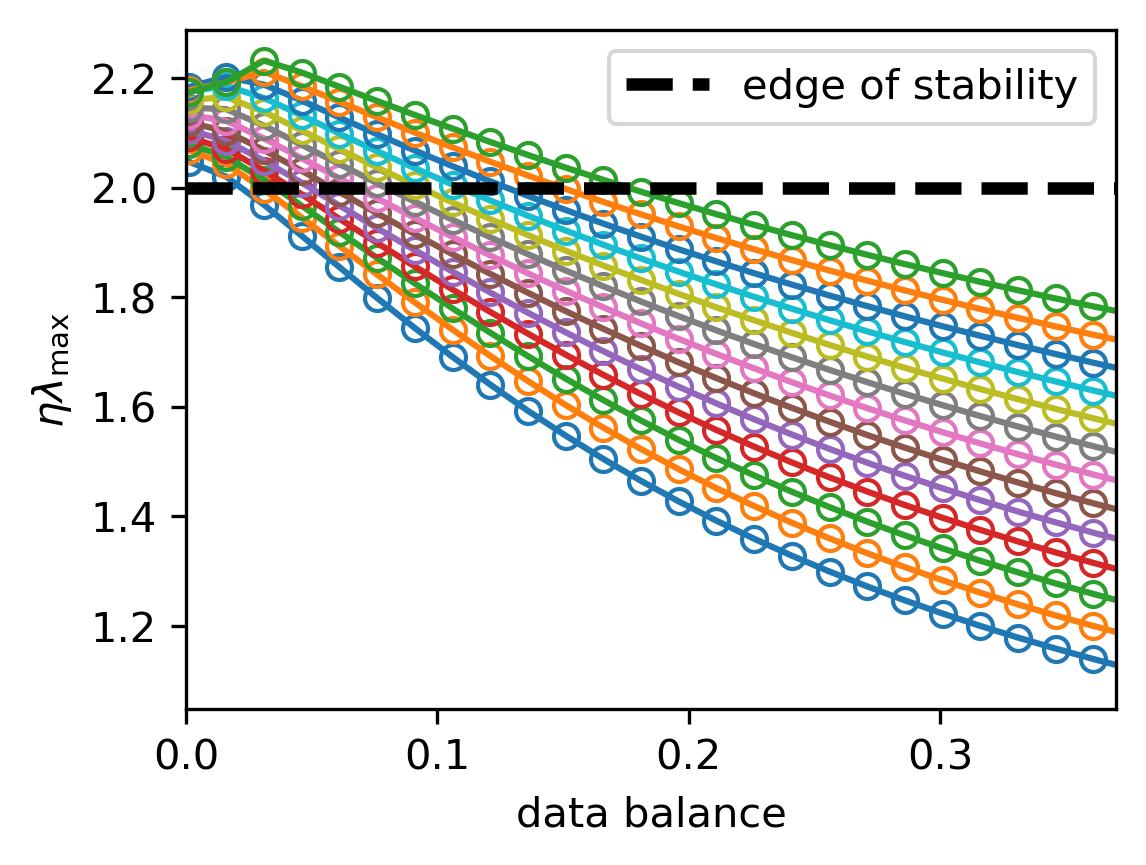}
    \vspace{-2mm}
    \caption{\small The entropic theory predicts the boundary for the edge of stability (EOS) phenomenon \cite{cohen2021gradient}. The theory shows that the imbalance of features and the uncertainty of labels make the model converge to sharper solutions. We run a two-layer linear network trained on a regression task. The \textbf{Left} panel plots the quantity $\eta \lambda_{\rm max}$ at convergence. For stability, $\eta \lambda_{\rm max}$ must stay (approximately) below $2$, and the black dotted line plots the theoretical boundary for $\eta \lambda_{\rm max}=2$. \textbf{Middle}: The same figure that emphasizes the phase boundary. The blue-red boundary empirically defined by the condition $\eta \lambda_{\rm max} = 2 -\epsilon$ with $\epsilon=0.1$ -- due to random sampling, the actual edge of stability is slightly smaller than 2 \cite{liu2021noise}. \textbf{Right}: We control the learning rate and balance of the label noise for this experiment. As predicted, as the data noise becomes more balanced, the sharpness metric $\eta \lambda_{\rm max}$ gets smaller, indicating better dynamical stability during training.}
    \label{fig:eos phase diagram}

    \vspace{1em}
    \centering
    \includegraphics[width=0.24\linewidth]{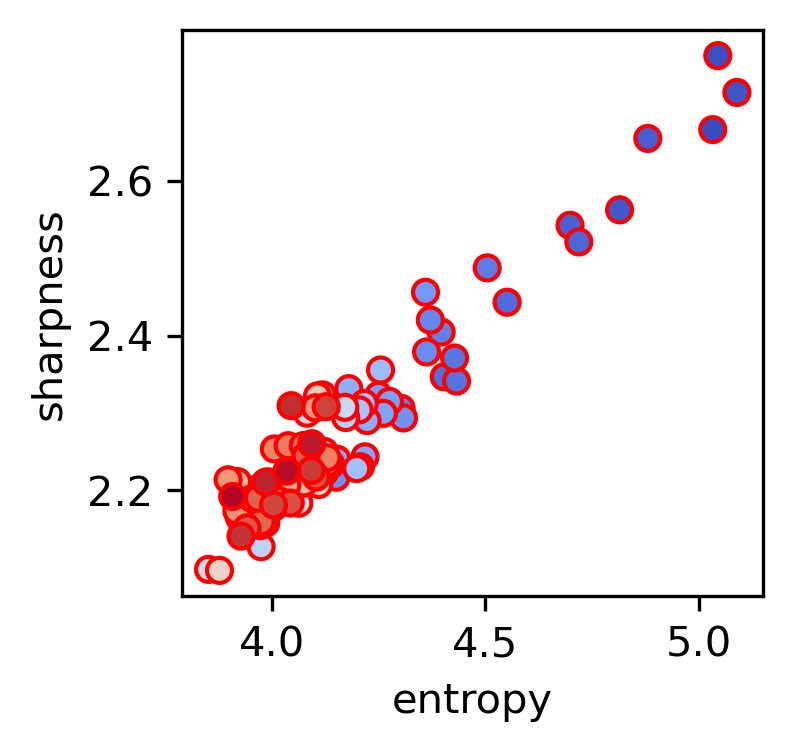}
    \includegraphics[width=0.24\linewidth]{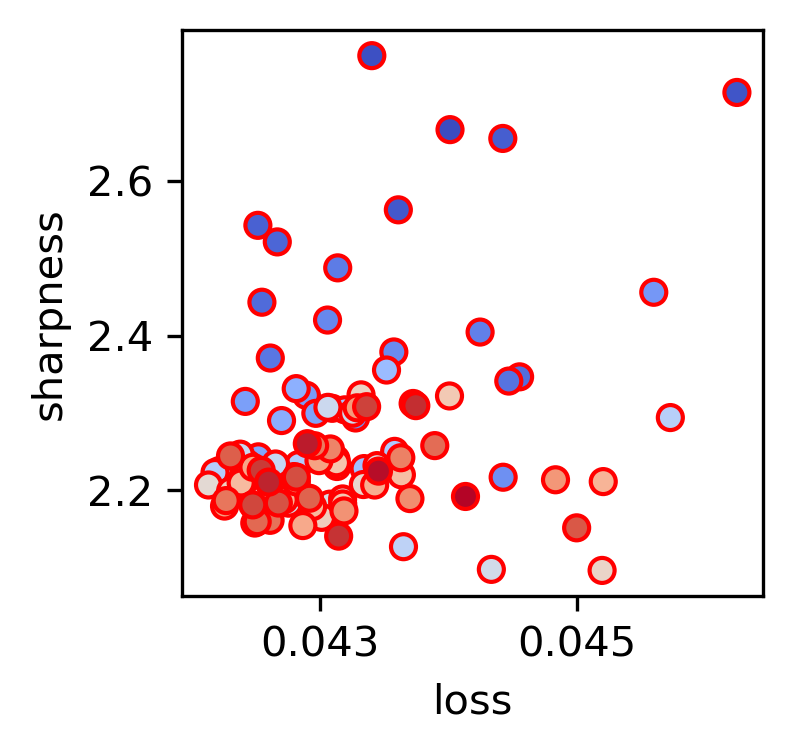}
    \includegraphics[width=0.24\linewidth]{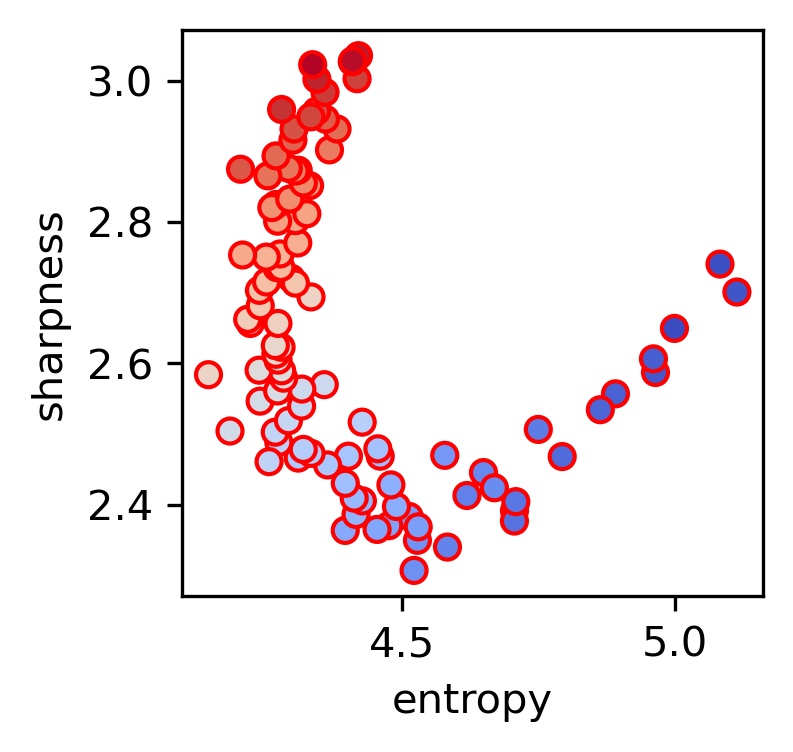}
    \includegraphics[width=0.24\linewidth]{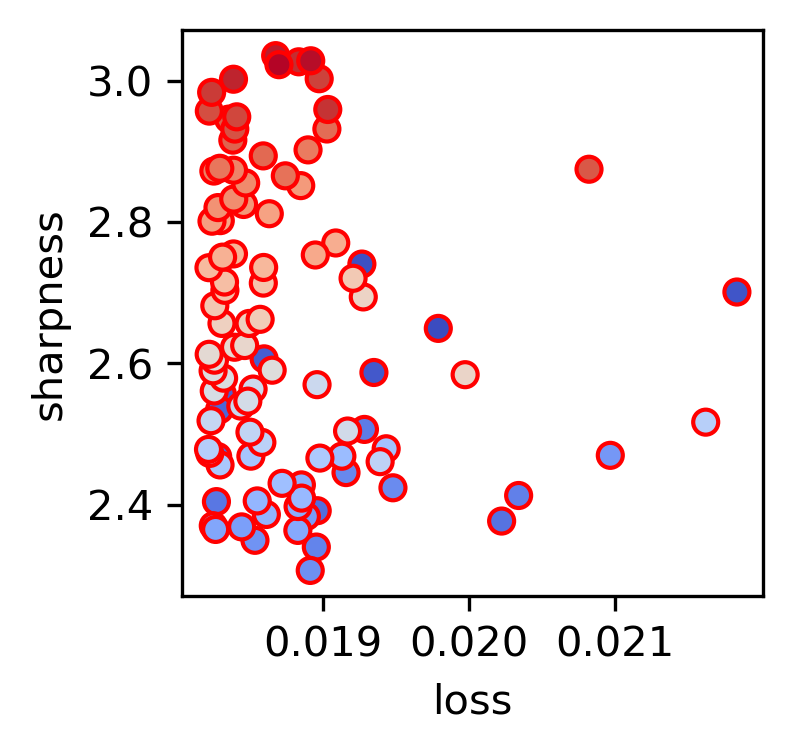}    
    \vspace{-2mm}
    \caption{\small Example of a trajectory of training during the progressive flattening (\textbf{Left 1, 2}) and sharpening (\textbf{Right 1, 2}) of the experiment in Figure~\ref{fig:eos phase diagram}. Here, blue dots correspond to earlier in the training, and red dots correspond to later in the training. During progressive flattening, the decrease in sharpness correlates directly with the entropy term (Left 1), whereas the training loss is independent of the sharpness (Left 2). For progressive sharpening (Right), the picture is more complicated. The training trajectory follows three phases. Phase I: flattening correlates with a decrease in entropy; phase II: sharpening correlates with a decreasing entropy; phase III: sharpening correlates with an increasing entropy. The phase III cannot be explained by the leading-order entropic loss because, as the landscape becomes sharper, higher-order effects in $\eta$ start to dominate training. At the same time, the loss is never correlated with these effects (Right 4).}
    \label{fig:eos dynamics}
\end{figure}

\vspace{-1mm}
\subsection{The Sharpness Paradox}
\vspace{-1mm}

The tendency towards learning universal representations can imply a curse for training. For example, the sharpness of the loss landscape really depends on the distribution of the data, whereas the solution the model finds is independent of these distributions -- this could imply that these solutions can be quite bad in terms of, say, optimization properties. Meanwhile, a paradox of the sharpness-seeking behavior of SGD has become explicit. On the one hand, the edge of stability (EOS) states that learning \textit{typically} leads to sharper solutions, whereas a vast majority of works have shown that SGD training leads to flatter solutions \cite{damian2021label, wu2022alignment}. These two cannot happen simultaneously -- and the solution must be that the sharpness-seeking behavior of SGD is situation-dependent. This section formalizes this intuition and shows that universal representation learning can be intrinsically related to the edge of stability phenomenon, which is also ubiquitous in deep learning.

\begin{definition}
The total sharpness is defined as $T(\theta)=\Tr\E\nabla^2\ell(x,\theta)$.
\end{definition}
This definition is chosen for analytical tractability and has been used in prior works \cite{li2021happens, ziyin2024parameter}. $T$ upper bounds the largest eigenvalue, and $T/d$ lower bounds it, so it is a good metric of stability and sufficient for the theorem we will prove. The following lemma shows that if there is an exponential symmetry in $\ell$, every local minimum of $\ell$ connects without barrier to a local minimum with arbitrarily high sharpness. It can be seen as a generalization of the result of Ref.~\cite{Dinh_SharpMinima} to general symmetries.

\begin{lemma}
\label{lemma:sharpness}
    (Sharpness Lemma) Assume that $A$ is a symmetric matrix and $\ell(x,e^{\lambda A}\theta)=\ell(x,\theta)$. Moreover, assume that $A\mathbb{E}\nabla^2\ell(x,\theta)\neq0$. Then, $\limsup_{|\lambda|\to+\infty}|T(e^{\lambda A}\theta)|=+\infty$.
\end{lemma}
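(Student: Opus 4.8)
The plan is to turn the orbit $\lambda\mapsto T(e^{\lambda A}\theta)$ into an explicit finite sum of real exponentials in $\lambda$ and then argue that such a sum cannot remain bounded as $|\lambda|\to+\infty$ unless $A\,\E\nabla^2\ell(x,\theta)=0$.

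First I would differentiate the invariance $\ell(x,e^{\lambda A}\theta)=\ell(x,\theta)$ twice in $\theta$. Writing $\psi_\lambda(\theta)=e^{\lambda A}\theta$, the chain rule gives $\nabla^2\ell(x,\theta)=(e^{\lambda A})^{T}\,\nabla^2\ell(x,e^{\lambda A}\theta)\,e^{\lambda A}$, the Hessian-of-$\psi_\lambda$ term dropping because $\psi_\lambda$ is linear. Since $A=A^{T}$ this rearranges to $\nabla^2\ell(x,e^{\lambda A}\theta)=e^{-\lambda A}\,\nabla^2\ell(x,\theta)\,e^{-\lambda A}$. Taking $\E_x$, setting $H:=\E\nabla^2\ell(x,\theta)$, taking the trace, and using cyclicity together with the fact that $e^{-\lambda A}$ commutes with itself, I get the clean identity $T(e^{\lambda A}\theta)=\Tr[H e^{-2\lambda A}]$.

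Next, diagonalize the symmetric matrix $A=\sum_k \mu_k P_k$ with distinct real eigenvalues $\mu_k$ and orthogonal spectral projections $P_k$ summing to $I$. Then $e^{-2\lambda A}=\sum_k e^{-2\lambda\mu_k}P_k$, so $T(e^{\lambda A}\theta)=\sum_k c_k\, e^{-2\lambda\mu_k}$ with $c_k:=\Tr[H P_k]$ --- a finite combination of exponentials with pairwise distinct rates $-2\mu_k$. Such a function stays bounded on $\lambda\to+\infty$ and on $\lambda\to-\infty$ if and only if $c_k=0$ for every $k$ with $\mu_k\neq0$. So it suffices to produce one index $k^\star$ with $\mu_{k^\star}\neq0$ and $c_{k^\star}\neq0$: given that, pick among all indices with $c_k\neq0$ the one with most negative $\mu_k$ (resp.\ most positive, if none is negative); by distinctness of the rates that exponential strictly dominates all others as $\lambda\to+\infty$ (resp.\ $\lambda\to-\infty$), forcing $|T(e^{\lambda A}\theta)|\to+\infty$ and hence $\limsup_{|\lambda|\to+\infty}|T(e^{\lambda A}\theta)|=+\infty$.

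The main obstacle is the remaining step: deducing from $A H\neq 0$ that some $c_k=\Tr[HP_k]$ with $\mu_k\neq0$ is nonzero. Since $AH=\sum_k\mu_k P_k H$, the hypothesis gives some $k$ with $\mu_k\neq0$ and $P_kH\neq0$ (equivalently $HP_k\neq0$, as $P_kH=(HP_k)^{T}$); but for an indefinite $H$, $HP_k\neq0$ need not imply $\Tr[P_kHP_k]\neq0$. This is where I would invoke positive semidefiniteness of the expected Hessian: in the setting where the lemma is applied $\theta$ is a local minimum, so $H\succeq0$, and then $c_k=\Tr[P_kHP_k]=\|H^{1/2}P_k\|_F^{2}$, which vanishes only if $H^{1/2}P_k=0$, i.e.\ $HP_k=0$, a contradiction. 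Hence $c_{k^\star}>0$ for some $\mu_{k^\star}\neq0$ and the argument above closes. (Some positivity input is genuinely needed: at a flow fixed point $A\theta=0$ with $H$ chosen to anticommute with $A$, $T$ is constant along the orbit while $AH\neq0$; so I would state the lemma for local minima accordingly.)
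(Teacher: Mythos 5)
Your argument follows exactly the same route as the paper's proof: differentiating the invariance twice to get $\nabla^2\ell(x,e^{\lambda A}\theta)=e^{-\lambda A}\nabla^2\ell(x,\theta)e^{-\lambda A}$, hence $T(e^{\lambda A}\theta)=\Tr[e^{-2\lambda A}H]$ with $H:=\E\nabla^2\ell(x,\theta)$, then expanding over the spectrum of $A$ as a finite sum of exponentials $\sum_k c_k e^{-2\lambda\mu_k}$ (the paper writes the coefficients as $n_i^T H n_i$) and letting the extremal nonzero rate dominate. The one point where you depart from the paper is precisely the step you flag as the ``main obstacle,'' and you are right to flag it: the paper simply asserts that $AH\neq0$ yields some $i$ with $\mu_i\neq0$ and $n_i^T H n_i\neq0$, which is false for indefinite $H$. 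Your worry is not hypothetical --- take $\ell(x,\theta)=\theta_1\theta_2$ and $A=\mathrm{diag}(1,-1)$; the loss is $e^{\lambda A}$-invariant, $H=e_1e_2^T+e_2e_1^T$ satisfies $AH\neq0$, yet $T(e^{\lambda A}\theta)=e^{-2\lambda}\cdot 0+e^{2\lambda}\cdot 0\equiv0$, so the lemma as stated fails. Your patch --- requiring $H\succeq0$ (e.g., at a local minimum, which is where the lemma is invoked) so that $c_k=\Tr[P_kHP_k]=\|H^{1/2}P_k\|_F^2>0$ whenever $HP_k\neq0$ --- is a correct and minimal way to close the gap. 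In short: same decomposition and same dominance argument as the paper, but your version is the complete one; the paper's proof, and arguably the lemma's hypotheses, need the positive-semidefiniteness assumption you supply.
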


One can analytically solve for the sharpness of two-layer linear networks, and identify a precise cause of the progressive sharpening effect. 
\begin{theorem}\label{theo: sharpness}
For a two-layer linear network $\ell(x,\theta)=||y(x)-W_1W_2x||^2$ with $y(x)=Vx+\epsilon\in\mathbb{R}^{d_y}$ and $\E x=\E\epsilon=0$, $\Sigma_x=\E xx^T$, $\Sigma_\epsilon=\E\epsilon\epsilon^T$. Denote $\tilde{U}S'\tilde{V}$ to be the SVD of $V':=\sqrt{\Sigma_\epsilon}V\sqrt{\Sigma_x}$ and assume that the width of the network is larger than $\text{rank}(V')$. Then we have
\begin{equation}
T(\theta)=d_y\sqrt{\frac{\Tr[\Sigma_x]}{\Tr[\Sigma_\epsilon]}}\Tr[S']+\sqrt{\Tr[\Sigma_x]\Tr[\Sigma_\epsilon]}\Tr[\Sigma_\epsilon^{-1}\tilde{U}S'\tilde{U}^T]
\end{equation}
at the global optimum of \eqref{eq: free energy}. Meanwhile, the minimal sharpness of the global minimum of $\ell$ is
\begin{equation}
\min T(\theta)=2\sqrt{d_y\Tr\Sigma_x}\Tr\hat{S},
\label{eq:S}
\end{equation}
where $\hat{S}$ is the singular values of $V\sqrt{\Sigma_x}$.
\end{theorem}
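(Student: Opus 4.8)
The plan is to reduce the statement to two linear-algebra minimizations, after first obtaining a closed form for $T(\theta)$ in terms of $(W_1,W_2)$. Differentiating $\ell(x,\theta)=\|Vx+\epsilon-W_1W_2x\|^2$ twice in $\theta=(W_1,W_2)$, the mixed block $\partial^2\ell/\partial W_1\partial W_2$ is traceless, while $\partial^2_{W_1}\ell=2\,I_{d_y}\otimes(W_2x)(W_2x)^\top$ and $\partial^2_{W_2}\ell=2\,(W_1^\top W_1)\otimes xx^\top$. Taking the trace and then $\E_x$ gives $T(\theta)=2d_y\Tr[W_2\Sigma_xW_2^\top]+2\Tr[\Sigma_x]\Tr[W_1W_1^\top]$ (the prefactor depends only on the normalization of $\ell$). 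So the theorem is entirely about where the two quantities $\Tr[W_2\Sigma_xW_2^\top]$ and $\Tr[W_1W_1^\top]$ sit at the two kinds of optima.

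For the entropic optimum ($\gamma=0$, $\eta\to0_+$) I would first observe that the global minimizer of \eqref{eq: free energy} lies on the zero-loss manifold, on which $W_1W_2$ agrees with $V$ on $\mathrm{range}(\Sigma_x)$, so the residual is exactly the label noise: $r=y-W_1W_2x=\epsilon$ almost surely. Hence $\nabla_{W_1}\ell=-2\epsilon(W_2x)^\top$, $\nabla_{W_2}\ell=-2(W_1^\top\epsilon)x^\top$, and using $\E\epsilon=0$ with $\epsilon\perp x$, the entropy on this manifold is (up to the fixed prefactor) $\Tr\Sigma_\epsilon\,\Tr[W_2\Sigma_xW_2^\top]+\Tr\Sigma_x\,\Tr[W_1^\top\Sigma_\epsilon W_1]$; equivalently Theorem~\ref{theo:attention} with $W=W_1$, $U=W_2$, $\gamma=0$ yields the matrix balance $\Tr\Sigma_\epsilon\cdot W_2\Sigma_xW_2^\top=\Tr\Sigma_x\cdot W_1^\top\Sigma_\epsilon W_1$. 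I would then whiten, setting $\hat W_1=\sqrt{\Sigma_\epsilon}W_1$ and $\hat W_2=W_2\sqrt{\Sigma_x}$, so the constraint becomes $\hat W_1\hat W_2=V'=\sqrt{\Sigma_\epsilon}V\sqrt{\Sigma_x}$ and the functional becomes $\Tr\Sigma_\epsilon\|\hat W_2\|_F^2+\Tr\Sigma_x\|\hat W_1\|_F^2$. Minimizing a weighted sum of two squared Frobenius norms over a fixed matrix product is classical: with the SVD $V'=\tilde U S'\tilde V$, the inequality $\|\hat W_1\|_F\|\hat W_2\|_F\ge\|\hat W_1\hat W_2\|_*=\Tr S'$ combined with the rescaling $\hat W_1\mapsto s\hat W_1,\hat W_2\mapsto s^{-1}\hat W_2$ forces the minimizer to be the rank-$(\mathrm{rank}\,V')$ balanced factorization $\hat W_1=c\,\tilde U\sqrt{S'}\,O$, $\hat W_2=c^{-1}O^\top\sqrt{S'}\,\tilde V$, where $c$ equalizes the two weighted terms and $O$ is an arbitrary compatible orthogonal factor. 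Finally I substitute $\Tr[W_2\Sigma_xW_2^\top]=\|\hat W_2\|_F^2$ and $\Tr[W_1W_1^\top]=\Tr[\Sigma_\epsilon^{-1}\hat W_1\hat W_1^\top]$ back into $T$; both are independent of $O$, and the $\hat W_2^\top\hat W_2$ factor produces the $\tilde V^\top S'\tilde V$ term while $\Sigma_\epsilon^{-1}\hat W_1\hat W_1^\top$ produces the $\Sigma_\epsilon^{-1}\tilde U S'\tilde U^\top$ term, giving the stated expression up to the loss normalization.

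For the minimal sharpness over all global minima of the loss, drop the balance condition and minimize $T(\theta)=2d_y\Tr[W_2\Sigma_xW_2^\top]+2\Tr\Sigma_x\,\Tr[W_1W_1^\top]$ subject only to $W_1W_2=V$ on $\mathrm{range}(\Sigma_x)$. Absorbing $\sqrt{\Sigma_x}$ into $W_2$ as before so that $W_1\hat W_2=V\sqrt{\Sigma_x}$, AM--GM applied to the scaling $W_1\mapsto sW_1,\hat W_2\mapsto s^{-1}\hat W_2$ gives $T\ge 4\sqrt{d_y\Tr\Sigma_x}\,\|W_1\|_F\|\hat W_2\|_F\ge 4\sqrt{d_y\Tr\Sigma_x}\,\|V\sqrt{\Sigma_x}\|_*$, with equality attained by the balanced symmetric factorization of $V\sqrt{\Sigma_x}$; this is \eqref{eq:S} with $\hat S$ the singular values of $V\sqrt{\Sigma_x}$ (again up to the loss normalization).

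The main obstacle is the step from the balance condition to the explicit minimizer: the scalar content of the balance only constrains traces, so one needs its full matrix form to pin $W_1,W_2$ down up to the residual orthogonal gauge, and then one must check that $T$ — which in the unwhitened coordinates involves $\Tr[W_1W_1^\top]$ rather than the whitened $\Tr[W_1^\top\Sigma_\epsilon W_1]$ — is constant along that gauge orbit. Two subsidiary points also need care: that the minimizer has rank exactly $\mathrm{rank}(V')$ when the widths are strictly larger (mass on the extra directions strictly increases a Frobenius norm without relaxing the constraint), and the bookkeeping of the several powers of $\Sigma_x$, $\Sigma_\epsilon$ and their inverses that whitening introduces and that must cancel correctly when one returns to $T$.
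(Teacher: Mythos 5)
Your overall architecture matches the paper's: you express the trace-Hessian as a weighted sum of the two layers' squared norms, characterize the entropic optimum as a balanced factorization of the whitened matrix $V'=\sqrt{\Sigma_\epsilon}V\sqrt{\Sigma_x}$, substitute, and then get the unconstrained minimum by AM--GM plus a nuclear-norm bound over the rescaling orbit. The paper outsources both ingredients --- it invokes Theorem~\ref{theo:deep_linear} for the optimum and cites an external proposition for the sharpness formula $d_y\|W_2\Sigma_x\|_F^2+\|W_1\|_F^2\Tr[\Sigma_x]$ --- whereas you re-derive the $D=2$ optimum from the matrix balance of Theorem~\ref{theo:attention} (which is indeed a full matrix identity, so your worry about only having trace information is already resolved) and compute the Hessian directly. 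That self-containedness is a genuine improvement in principle.

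However, there is a concrete gap at the final substitution. Your directly computed trace, $T=2d_y\Tr[W_2\Sigma_xW_2^\top]+2\Tr[\Sigma_x]\Tr[W_1W_1^\top]$, does \emph{not} produce the theorem's first term: at the balanced optimum $W_2\sqrt{\Sigma_x}=U_1\sqrt{S'}\tilde V$ one has $W_2\Sigma_xW_2^\top=\hat W_2\hat W_2^\top=U_1S'U_1^\top$, so the $\tilde V$ cancels and the first term evaluates to $2d_y\Tr[S']$, not $d_y\Tr[\Sigma_x\tilde V^\top S'\tilde V]$. The stated expression requires the quantity $d_y\Tr[W_2\Sigma_x^2W_2^\top]=d_y\|W_2\Sigma_x\|_F^2$, which is the formula the paper imports; your claim that ``the $\hat W_2^\top\hat W_2$ factor produces the $\tilde V^\top S'\tilde V$ term'' is therefore false under your own $T$. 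The same mismatch propagates to the minimum: your bound yields a multiple of the nuclear norm of $V\sqrt{\Sigma_x}$ (as you yourself write), while \eqref{eq:S} is stated with the singular values of $V\Sigma_x$, exactly the discrepancy one expects from the missing power of $\Sigma_x$. So either your Hessian trace or the theorem's displayed formulas is off by a factor of $\sqrt{\Sigma_x}$ in the $W_2$ term (and a global factor of $2$), and the proposal papers over this by asserting agreement the algebra does not deliver. To close the proof you must either locate the extra $\Sigma_x$ in the Hessian computation or carry your own (correct-looking) formula through consistently and reconcile it with the statement; as written, the last step does not follow.
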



This result implies that SGD has no inherent preference for flatter minima. See Figure~\ref{fig:eos phase diagram}-\ref{fig:eos dynamics}, where we train a two-layer linear network on a linear regression task with a 2d label $y\in\mathbb{R}^2$. The labels $y= V^*x +\epsilon$, for a ground truth matrix $V^*$ and iid zero-mean noise $\epsilon$ such that $\Sigma_\epsilon={\rm diag}(1, \phi_x)$, where $\phi_x \in (0, 1)$ is called the ``data balance." In Appendix~\ref{app sec: eos}, we also train a deep nonlinear network, and we see the same trend where improving balance in the label noise leads to flatter solutions.

As an example, we can choose $\Sigma_x=I$, $V=I$ ($d_x=d_y=d$), which gives $V'=\sqrt{\Sigma_\epsilon}=\tilde{U}S'\tilde{U}^T$, and thus
\begin{equation}
T(\theta)=d^{3/2}\Tr[\Sigma_\epsilon]^{-1/2}\Tr[\Sigma_\epsilon^{1/2}]+d^{1/2}\Tr[\Sigma_\epsilon]^{1/2}\Tr[\Sigma_\epsilon^{-1/2}],
\end{equation}
which can be arbitrarily large. Recall that the minimum of $T(\theta)$, on the other hand, does not depend on $\Sigma_\epsilon$, the label noise covariance. Thus, the imbalance of the noise spectrum can lead to arbitrarily high sharpness. This could especially be a problem for language model training because there is a large variation in the randomness of tokens. Some words, like ``the," could have a very low conditional entropy, while nouns or verbs can have high entropy, especially when there exist synonyms. Another example is to choose $\Sigma_\epsilon=I$, which gives $V'=V\sqrt{\Sigma_x}=\tilde{U}S'\tilde{U}^T$, and thus
\begin{equation}
T(\theta)=2\sqrt{d}\Tr[\Sigma_x]^{1/2}\Tr[V\Sigma_x^{1/2}],
\end{equation}
which is exactly the same as the minimal sharpness. This suggests that without the imbalance of the label noise alone, SGD indeed converges to the flattest solution.

Thus, an imbalance in the input feature can lead to different sharpness-seeking behaviors. At the same time, if the loss function has scale invariance ($\ell(x,\theta)=\ell(x, \lambda\theta)$ for any $\lambda\in\mathbb{R}$), the learning dynamics leads to a flattening of the curvature during training (See Section~\ref{app sec: scale inv}). Thus, entropic forces and symmetry are strong factors deciding the sharpness-seeking behavior of SGD.\footnote{Viewed together with the result in Section~\ref{sec: universal}, one reaches an interesting and surprising conjecture: progressive sharpening and universal representation alignment, with entirely different phenomenology,  may have the same underlying cause, and could be two sides of the same coin.}

\vspace{-1mm}
\section{Conclusion}
\vspace{-1mm}

In this work, we have proposed an entropic-force perspective on neural network learning. We derive an entropic loss, which breaks continuous symmetries and preserves discrete ones, leading to universal behaviors such as gradient balance and alignment. The entropic loss suggests a potential unifying perspective for understanding training: learning algorithms prefer solutions with the minimal gradient fluctuation. This perspective provides a unifying framework that explains several emergent phenomena in deep learning, including sharpness-seeking behavior and universal feature structure, as consequences of underlying symmetries and entropic forces. The framework offers predictive and explanatory power across architectures and scales, and points toward a more principled, physics-inspired understanding of learning dynamics and emergent phenomena. Future work may extend this foundation to encompass higher-order corrections, richer architecture structures, and nonequilibrium dynamics of modern training procedures. A major limitation of our work is that we focused only on problems with explicit symmetries; an important future direction is to extend the results to cases with only approximate symmetries. Our experiments show that symmetry-free systems qualitatively agree with symmetry-preserving systems, suggesting that there are underlying, hidden concepts yet to be discovered.

An implication of Theorem~\ref{theo: symmetry breaking} is that discrete symmetries such as $Z_2$ still remain in the loss function. This enables the possibilities of spontaneous symmetry breaking and phase transitions in neural networks. Prior works have studied phase transitions \cite{ziyin2024symmetry} without the entropy term and are inherently zero-temperature phase transitions. It may be possible to develop a theory of phase transitions based on entropic loss, similar to the classical Landau theory. Also, a striking aspect of our construction is its similarity to the actual formalism of thermodynamics in physics; our result motivates the development of a robust thermodynamics theory of deep learning.

\section*{Acknowledgment}
The authors thank Prof. Phillip Isola and Prof. Tomaso Poggio for discussion. ILC acknowledges support in part from the Institute for Artificial Intelligence and Fundamental Interactions (IAIFI) through NSF Grant No. PHY-2019786. This work was also supported by the Center for Brains, Minds and Machines (CBMM), funded by NSF STC award  CCF - 1231216.


\bibliographystyle{unsrt}
\bibliography{neurips_2025}

\clearpage
\appendix

\section{Experiment}
\label{app:exp}
\subsection{ResNet}
For Figure \ref{fig:resnet}, we train ResNet18 on CIFAR 10 using SGD with momentum $0.9$, batchsize $128$ and weight decay $5\times10^{-4}$. The learning rate is $0.1$ at the beginning, $0.01$ after the $100-$th epoch and $0.001$ after the $150-$th epoch. The entropy is calculated by summing the gradient norm of all parameters. We obtain training accuracy $98\%$ and test accuracy $88\%$ at the end.

\subsection{Gradient Balance}
For Figure \ref{fig:balances}, we train on the MNIST dataset but the labels are generated by a teacher ReLU network and trained with an MSE loss. Namely, the loss is
\begin{equation}
    \|f(\theta) -y(x) - \epsilon_x\|^2,
\end{equation}
where $y(x)$ is a parameterized by a random ReLU teacher network and $\epsilon_x$ is an i.i.d. Gaussian noise with $0.2$ standard deviation. The training proceeds with SGD for $10^4$ steps with a learning rate of $0.01$ and batchsize of $200$. Layer balance is calculated by $|\E\Tr [g_ig_i^T-g_{i+1}^Tg_{i+1}]|$ and neuron balance is calculated by $\sum_j|\E\Tr[g_{i,j,:}g_{i,j,:}^\top-g_{i+1,:, j}g_{i+1,:,j}^\top]|$ for the $i-$th layer.

Additional experiments on layer balance and neuron balance are presented in Figures \ref{fig:balance_linear}, \ref{fig:balance_relu} and \ref{fig:balance_attention} for 4-layer linear networks, ReLU networks and simple self-attention networks, still for the teacher-student setting. The hidden dimensions are $256,128,64$ for linear and ReLU networks, and $256$ for self-attention. We present the evolution of layer and neuron balance along training. Figures \ref{fig:balance_linear} and \ref{fig:balance_relu} suggest that layer balance and neuron balance approach zero during training, which verifies Theorems \ref{theo:layer_balance} and \ref{theo: neuron balance}.

\begin{figure}[t!]
    \centering
    \includegraphics[width=0.4\linewidth]{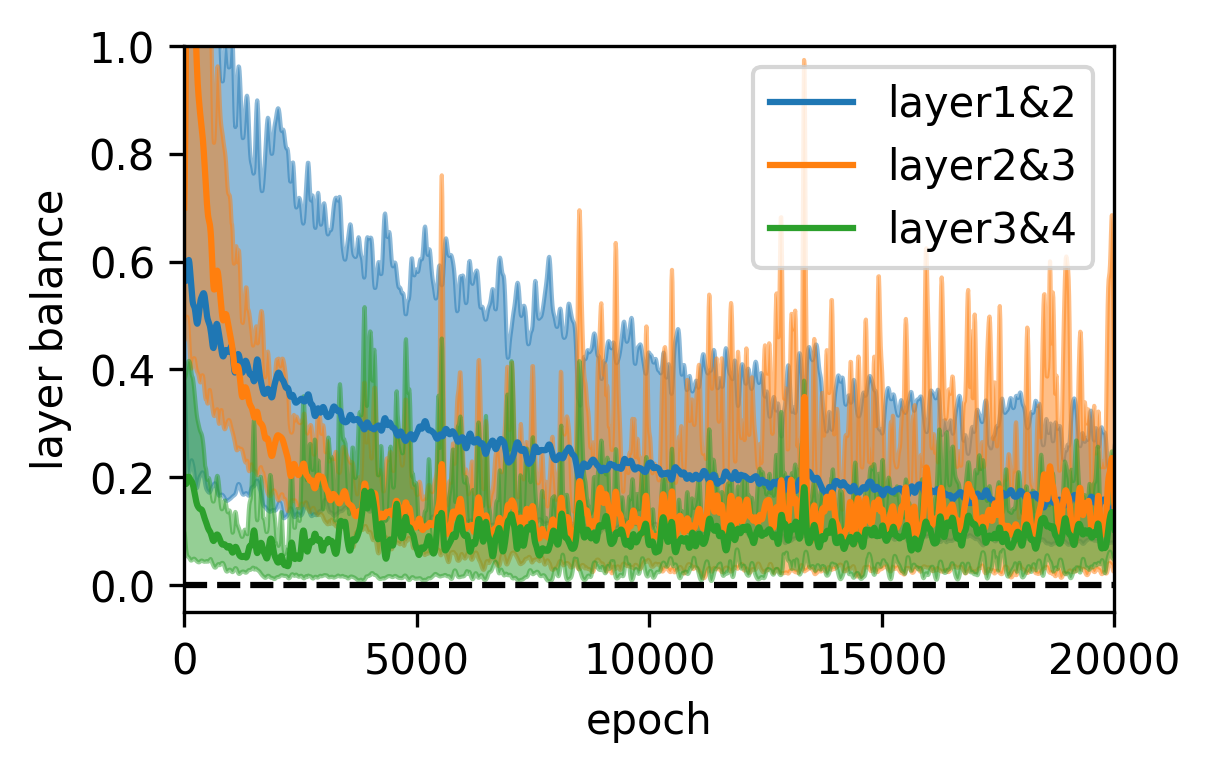}
    \includegraphics[width=0.4\linewidth]{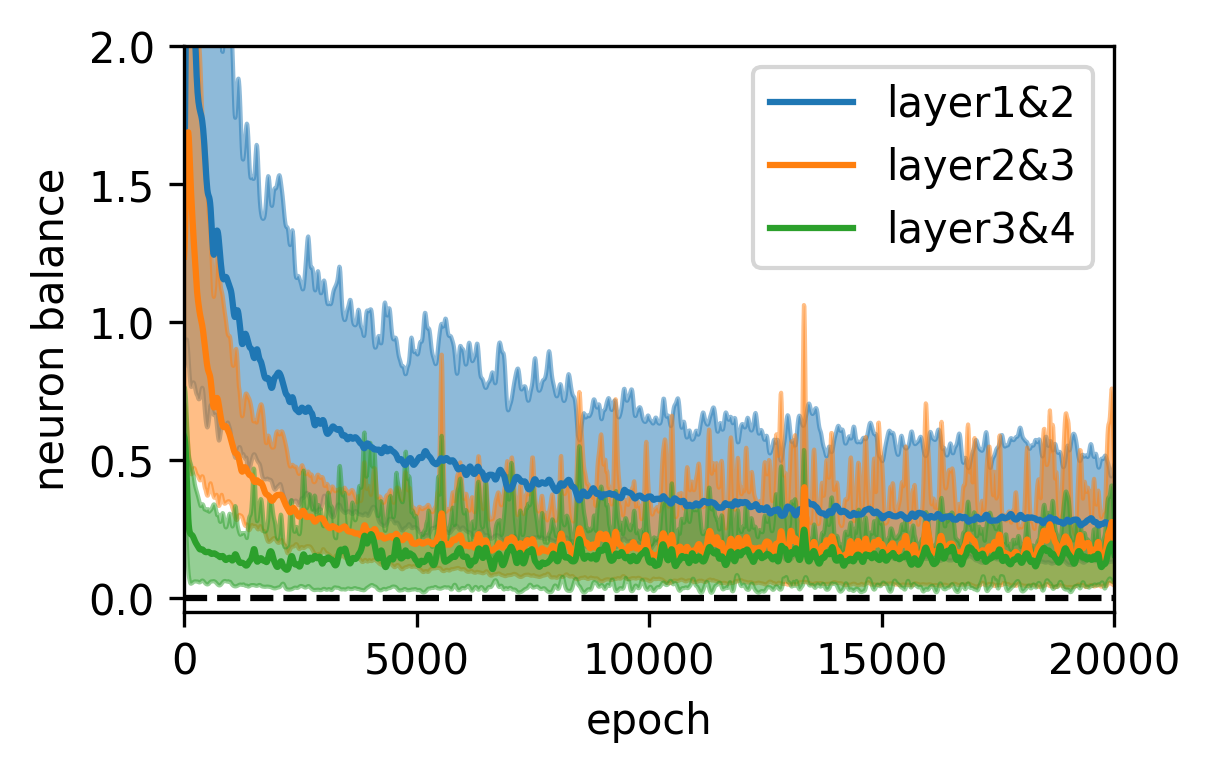}

    \caption{\small 4-layer linear networks and no weight decay trained on a teacher-student setting. \textbf{Left}: Layer imbalance for each layer, which verifies Theorem \ref{theo:layer_balance}. \textbf{Middle}: Neuron balance for each layer, which verifies Theorem \ref{theo: neuron balance}. The curves are smoothed and averaged over $5$ runs for better visualization. \textbf{Right}: Loss and entropy.}
    \label{fig:balance_linear}
\end{figure}

\begin{figure}[t!]
    \centering
    \includegraphics[width=0.4\linewidth]{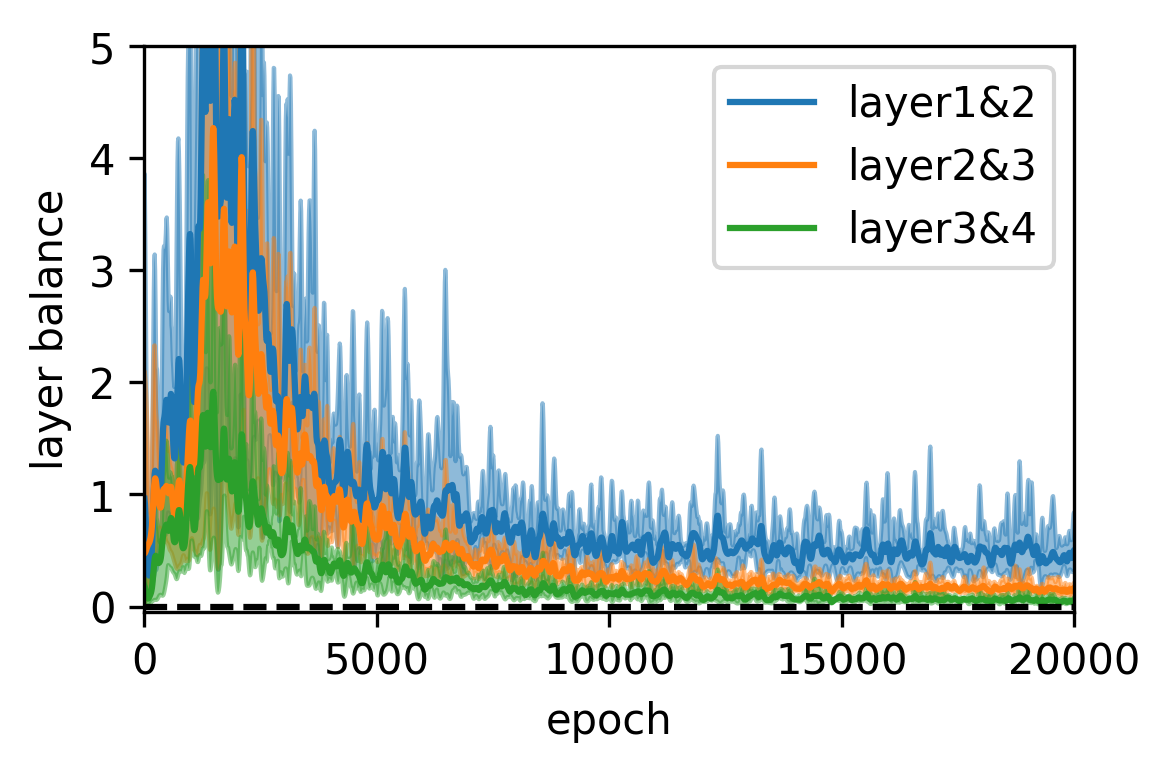}
    \includegraphics[width=0.4\linewidth]{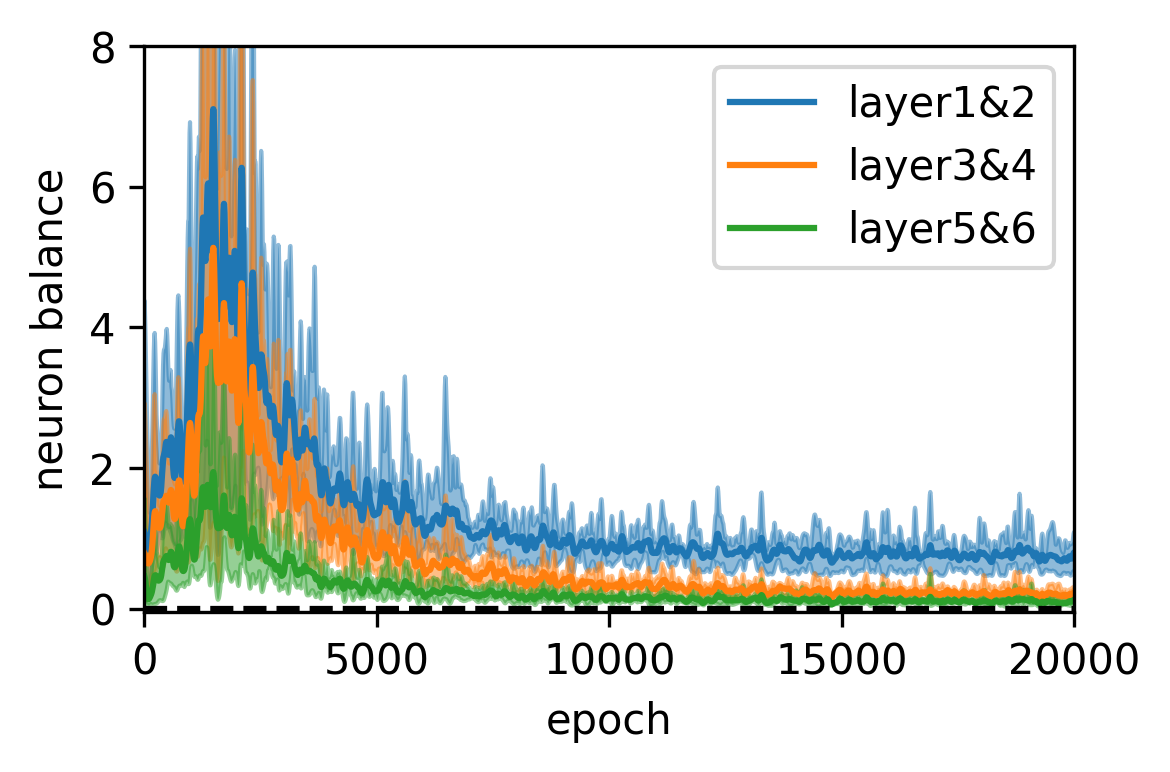}

    \caption{The same setting as Figure \ref{fig:balance_linear}, but for ReLU activation.}
    \label{fig:balance_relu}
\end{figure}

\clearpage

\subsection{Gradient Balance in Self-Attention Nets}

Figure \ref{fig:balance_attention} suggests that $G_U^TG_U-G_VG_V^T$ approaches zero during training, and it is correlated with the entropy rather than the loss, which verifies Theorem \ref{theo:attention}.

\begin{figure}[t!]
    \centering
    \includegraphics[width=0.3\linewidth]{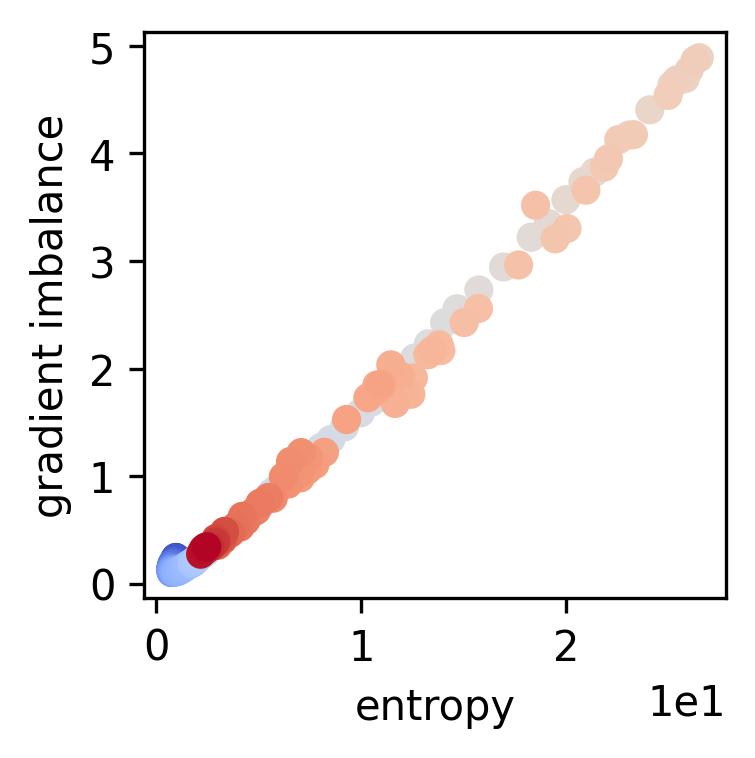}
    \includegraphics[width=0.3\linewidth]{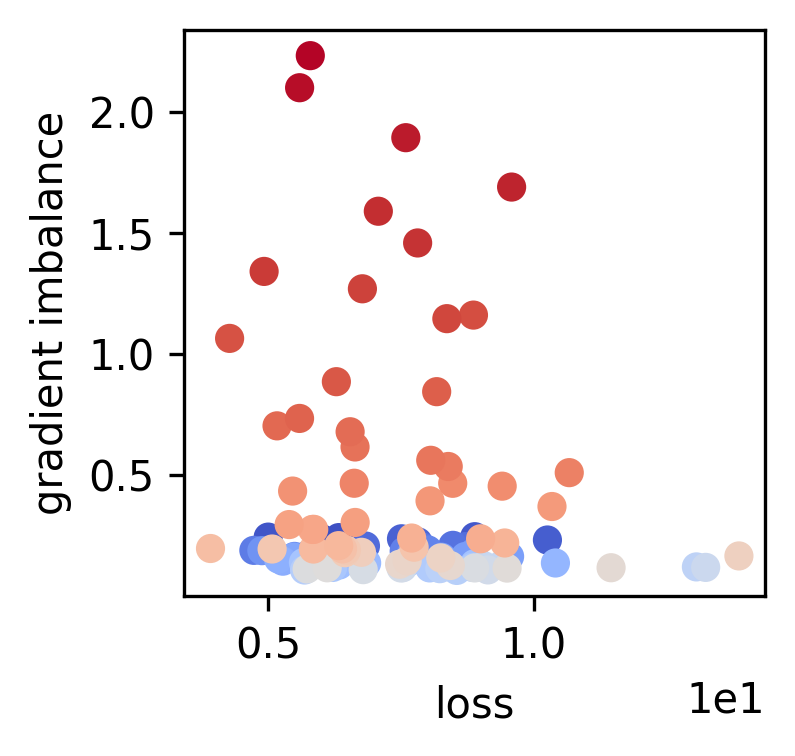}
    \includegraphics[width=0.35\linewidth]{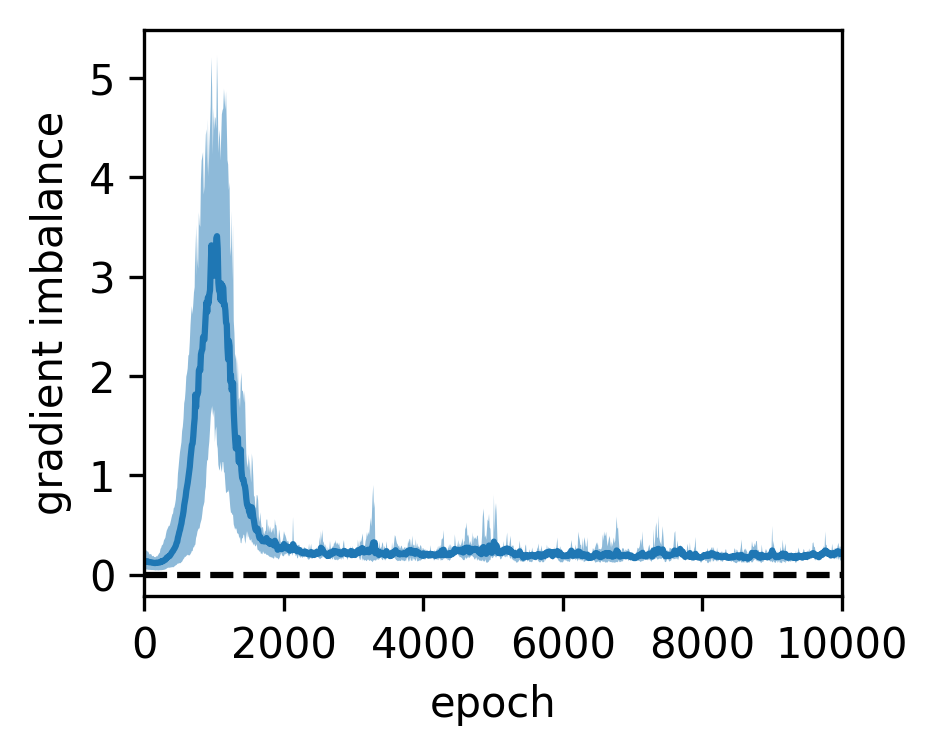}
    
    \caption{\small A self attention networks $y=(x^TUVx)w^Tx$ and no weight decay trained on a teacher-student setting. The gradient imbalance is evaluated by $||G_U^TG_U-G_VG_V^T||_F$. \textbf{Left}: Gradient imbalance is strongly correlated with the entropy. \textbf{Middle}: Gradient imbalance is weakly correlated with the entropy. \textbf{Right}: The evolution of entropy along training, which is averaged over $5$ runs.}
    \label{fig:balance_attention}
\end{figure}

\clearpage

\begin{figure}[t!]
    \centering
    \includegraphics[width=0.4\linewidth]{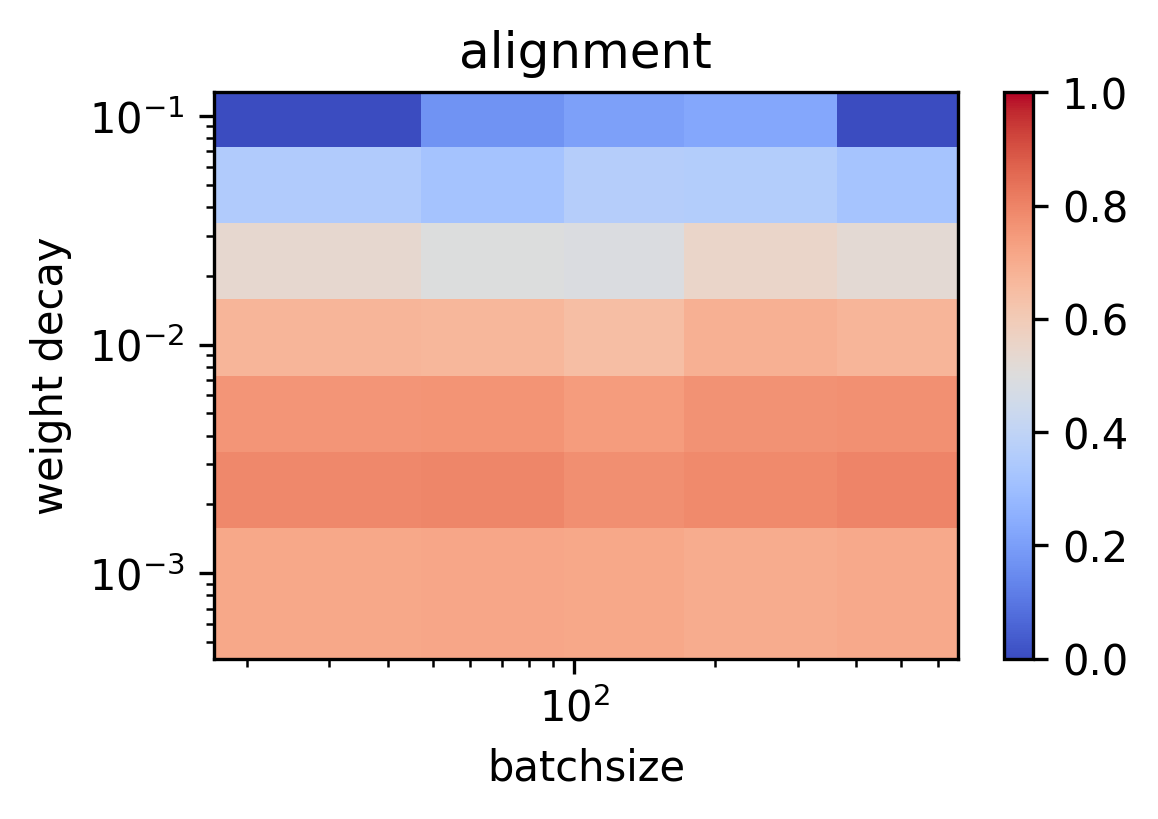} \includegraphics[width=0.4\linewidth]{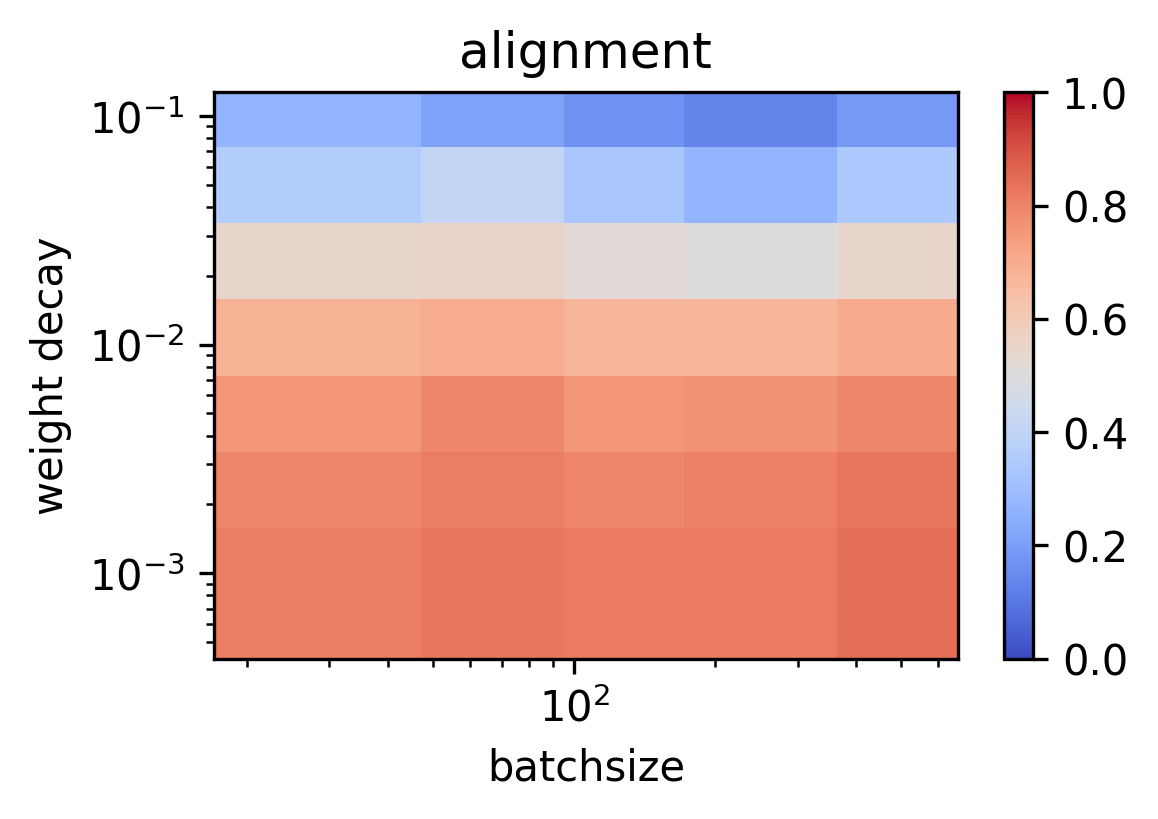}
    \caption{\small The alignment of two $2-$layer ReLU networks independently trained on a teacher-student setting. We measure the alignment for different batchsizes and weight decay. \textbf{Left}: SGD. \textbf{Right}: Adam.
    }
    \vspace{-1em}
    \label{fig:representation alignment2}
\end{figure}

\begin{figure}[t!]
    \centering
    \includegraphics[width=0.4\linewidth]{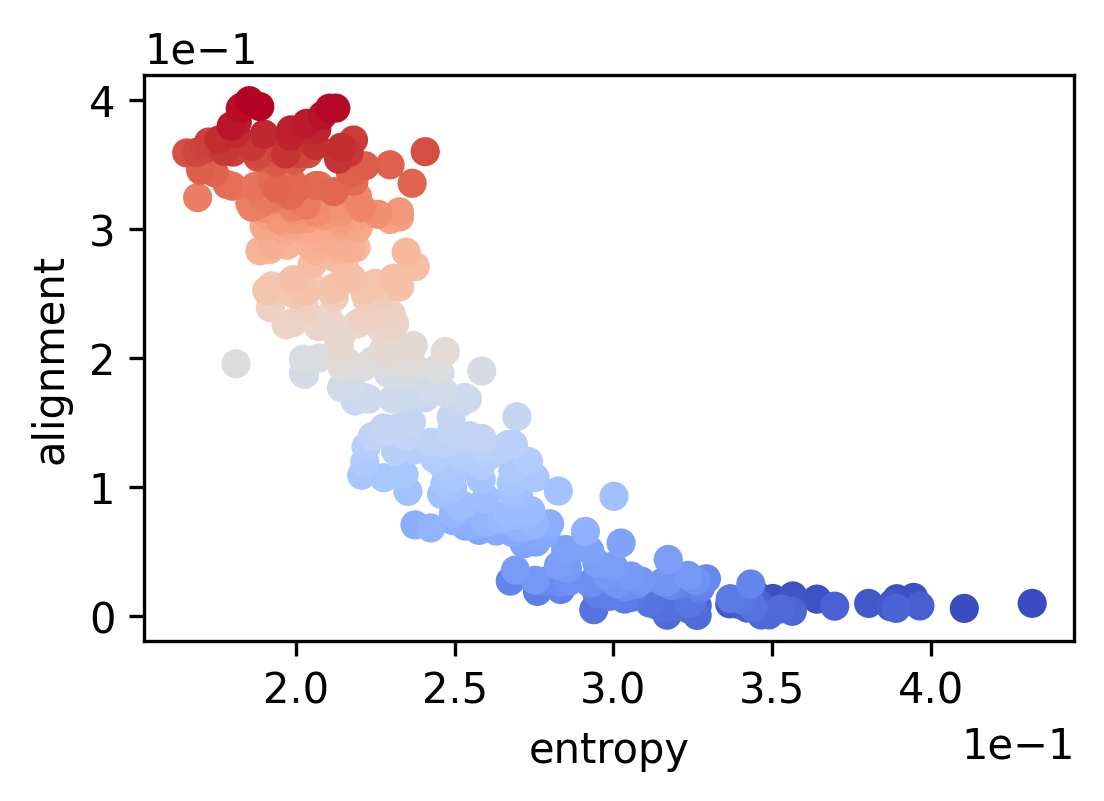} \includegraphics[width=0.4\linewidth]{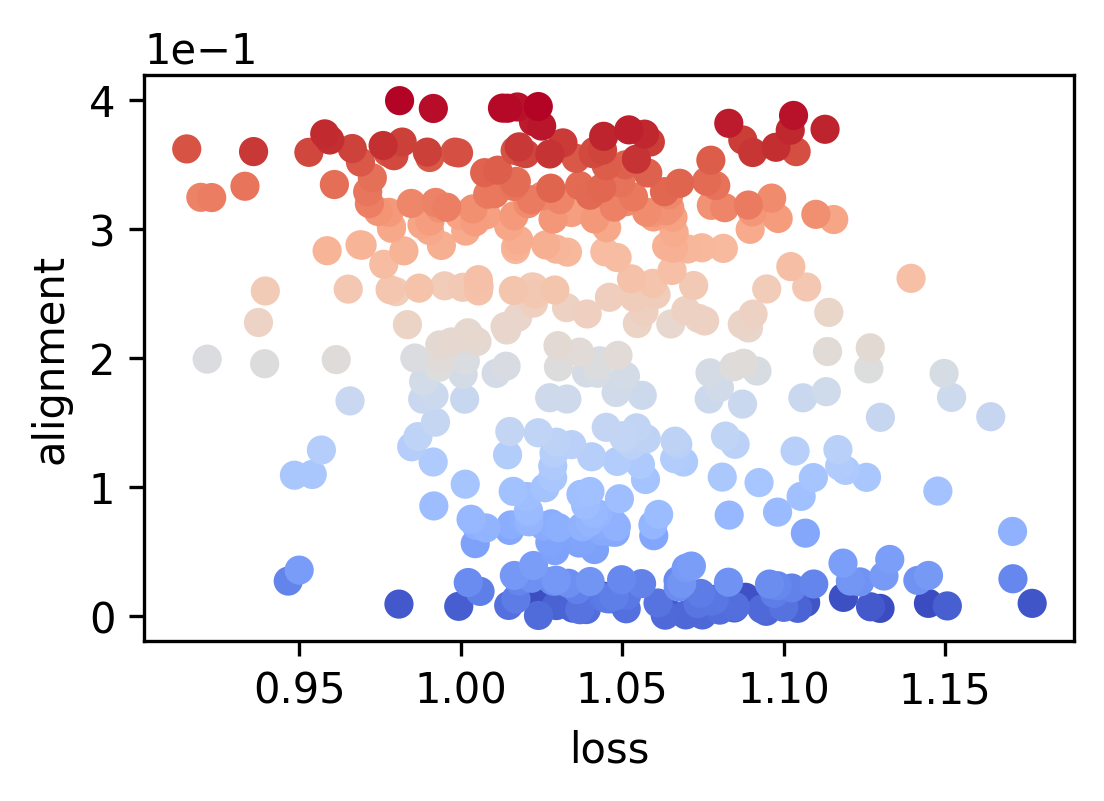}
    \caption{\small Universal alignment is strongly correlated with the entropy but not the loss. The setting is the same as Figure \ref{fig:representation alignment2}, where we use SGD, weight decay $0$ and batchsize $100$.
    }
    \vspace{-1em}
    \label{fig:representation alignment_entropy}
\end{figure}

\begin{figure}[t!]
    \centering
    \includegraphics[width=0.4\linewidth]{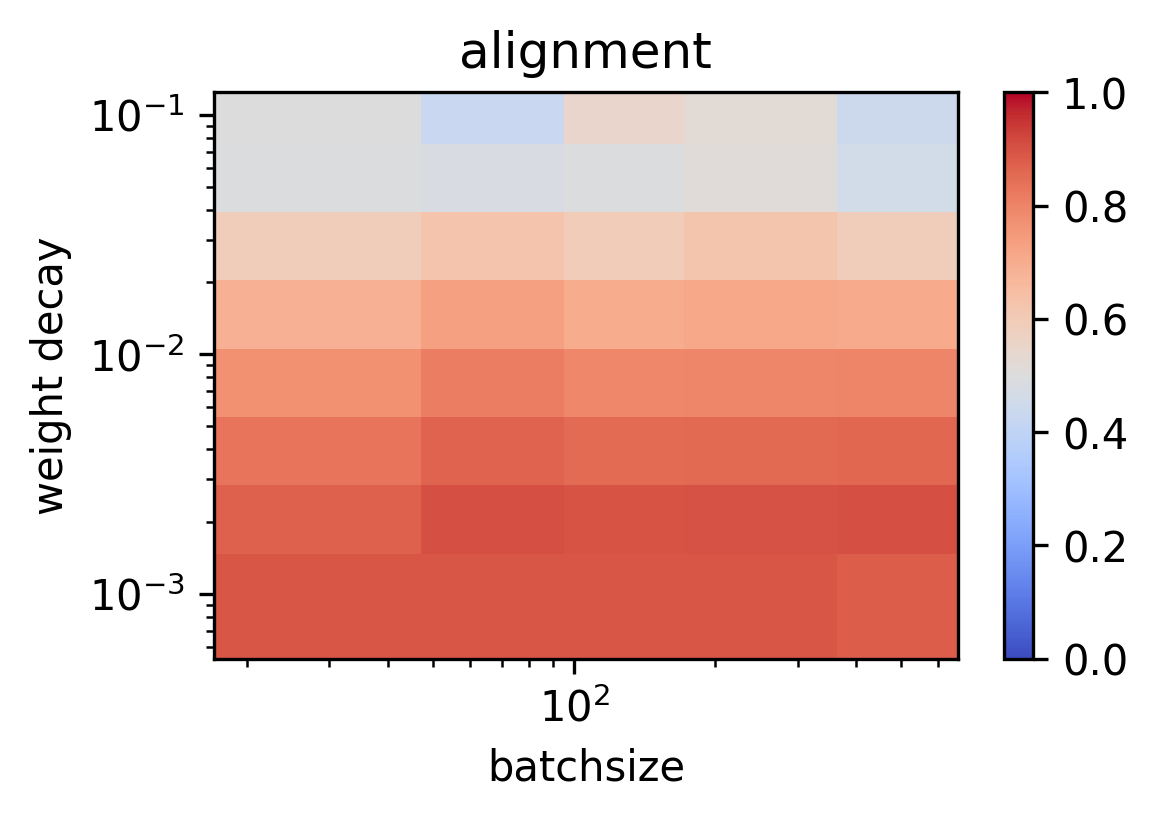} \includegraphics[width=0.4\linewidth]{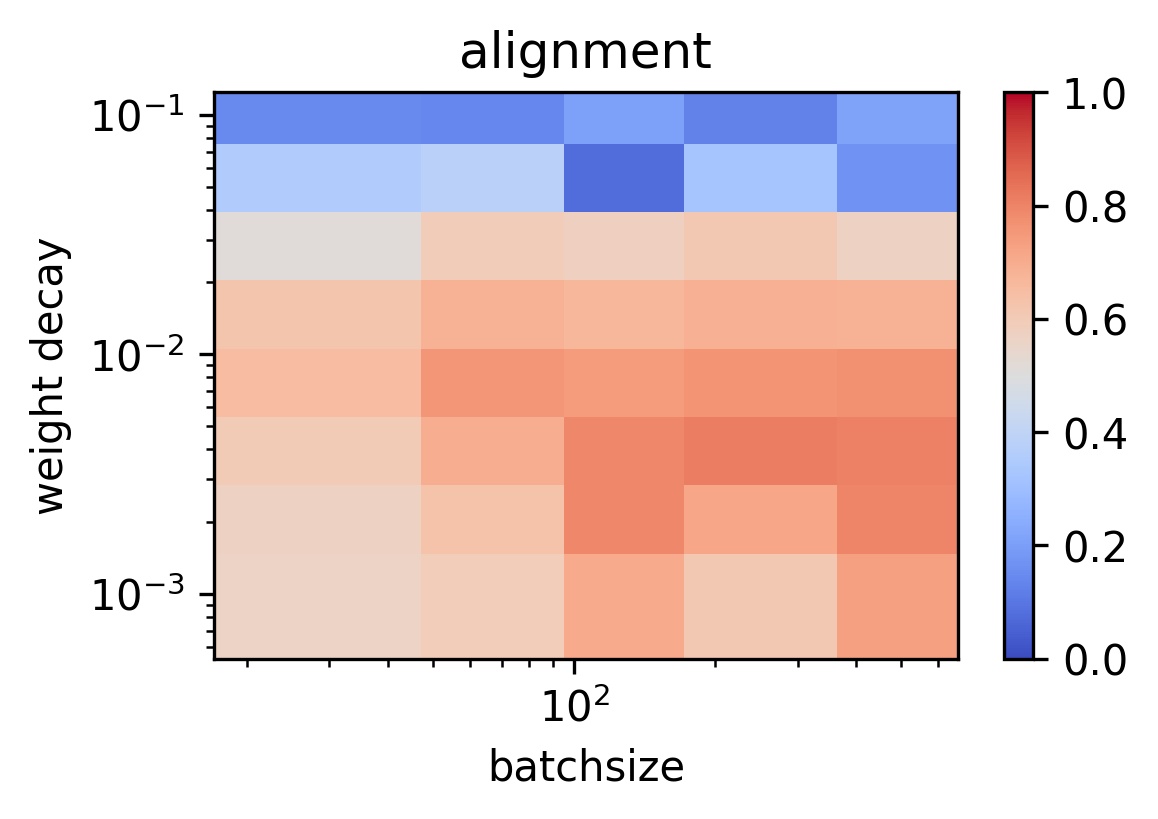}
    \caption{\small The same setting as Figure \ref{fig:representation alignment2}, but for two-layer linear networks. \textbf{Left}: SGD. \textbf{Right}: Adam.
    }
    \vspace{-1em}
    \label{fig:representation alignment3}
\end{figure}

\subsection{Unversal Representation Learning in MLP}
For Figure \ref{fig:representation alignment}, we train two independent 6-layer networks on MNIST. The networks have linear or tanh activation and 128 neurons in each hidden layer, and for the second network, the input MNIST data is transformed by a random Gaussian matrix. We train the networks with Adam optimizer, learning rate $10^{-4}$ for $5$ epochs. During training, we measure the representation alignment between every pair of layers, defined as the cosine similarity between the two sides of \eqref{eq: universal alignment}, averaged over the test set. We then plot the average alignment between the same or different layers of two networks. The input alignment denotes the average alignment between every layer representation and the input data.

In Figures \ref{fig:representation alignment2} and \ref{fig:representation alignment3}, we test the influence of batchsize and weight decay on universal representation in a teacher-student setting. In Figure \ref{fig:representation alignment2}, both the teachers and students are two-layer ReLU networks. In Figure \ref{fig:representation alignment3}, the student is replaced by a linear network. Their hidden dimensions are $100$. Similar to Figure \ref{fig:representation alignment}, we measure the representation alignment between the middle layers of two independently trained networks, and the input of the second network is rotated by a Gaussian matrix. We use random Gaussian data, and the labels are generated by the teacher network. We train the student networks with SGD, learning rate $5\times10^{-2}$ or with Adam, learning rate $10^{-4}$. For both SGD and Adam optimizers, Figures \ref{fig:representation alignment2} and \ref{fig:representation alignment3} suggest that universal alignment does not rely on the batchsize as predicted, but disappears for large weight decay, which verifies Theorem \ref{theo:deep_linear_wd}. In Figure \ref{fig:representation alignment_entropy}, we show that the increase of alignment
is more correlated with the decrease of the entropy rather than the loss.

\clearpage
\begin{figure}[t!]
    \centering
    \includegraphics[width=0.85\linewidth]{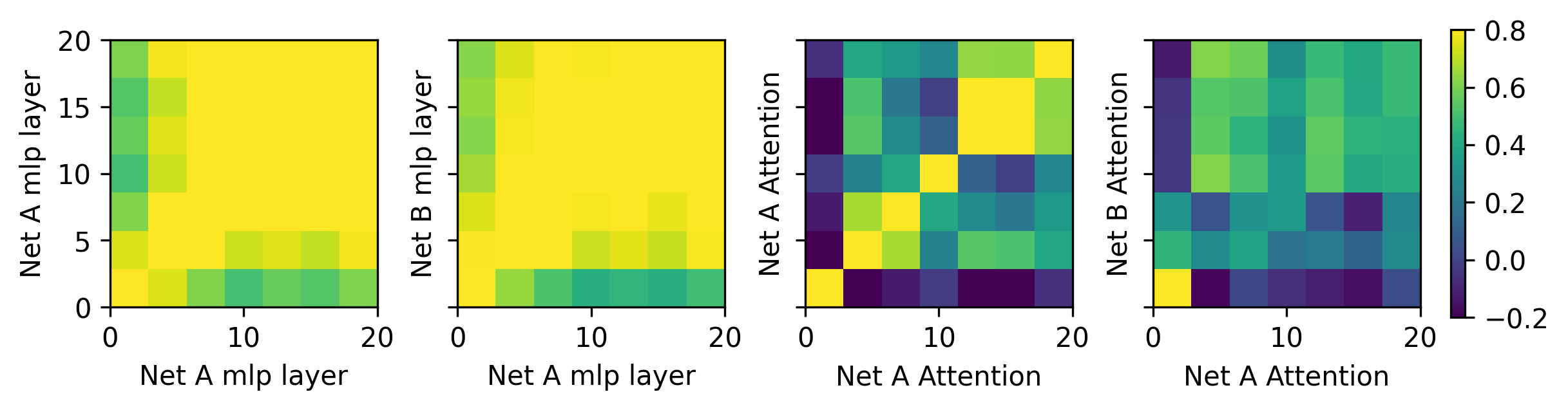}
    \caption{\small Alignment of representations of two larger ViT models pretrained on ImageNet. Net A: ViT-L (\#param: 304M). Net-B: ViT-H (633M). This is similar to Figure~\ref{fig:vit alignment}}
    \label{app fig:vit alignment}
    \vspace{-1em}
\end{figure}
\subsection{Universal Representation in ViT}\label{app sec: vit}
See Figure~\ref{app fig:vit alignment} for the alignment in Vision Transformer. The pretrained weights are taken from \url{https://docs.pytorch.org/vision/main/models.html}. We measure the CKA alignment between the two models or with itself with a minibatch size of 300 images from the ImageNet dataset.

\clearpage

\begin{figure}
    \centering
    \includegraphics[width=0.35\linewidth]{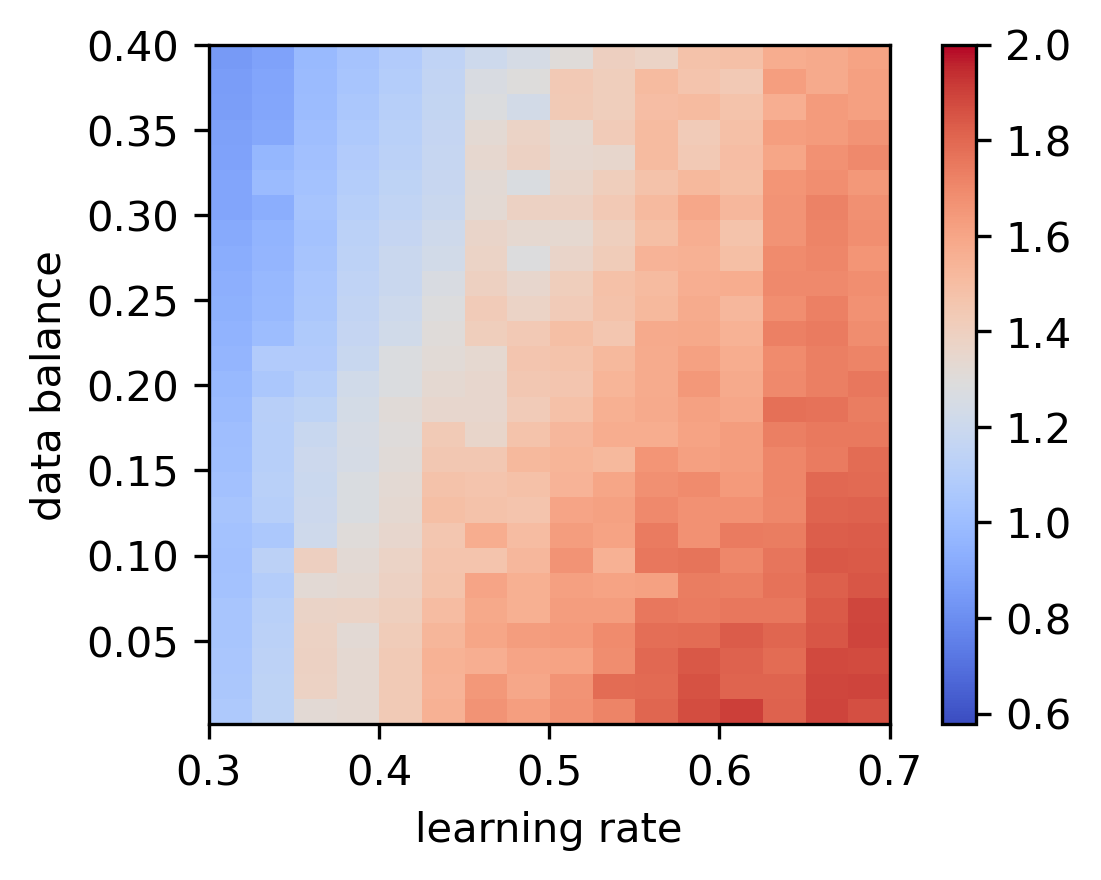}
    \includegraphics[width=0.35\linewidth]{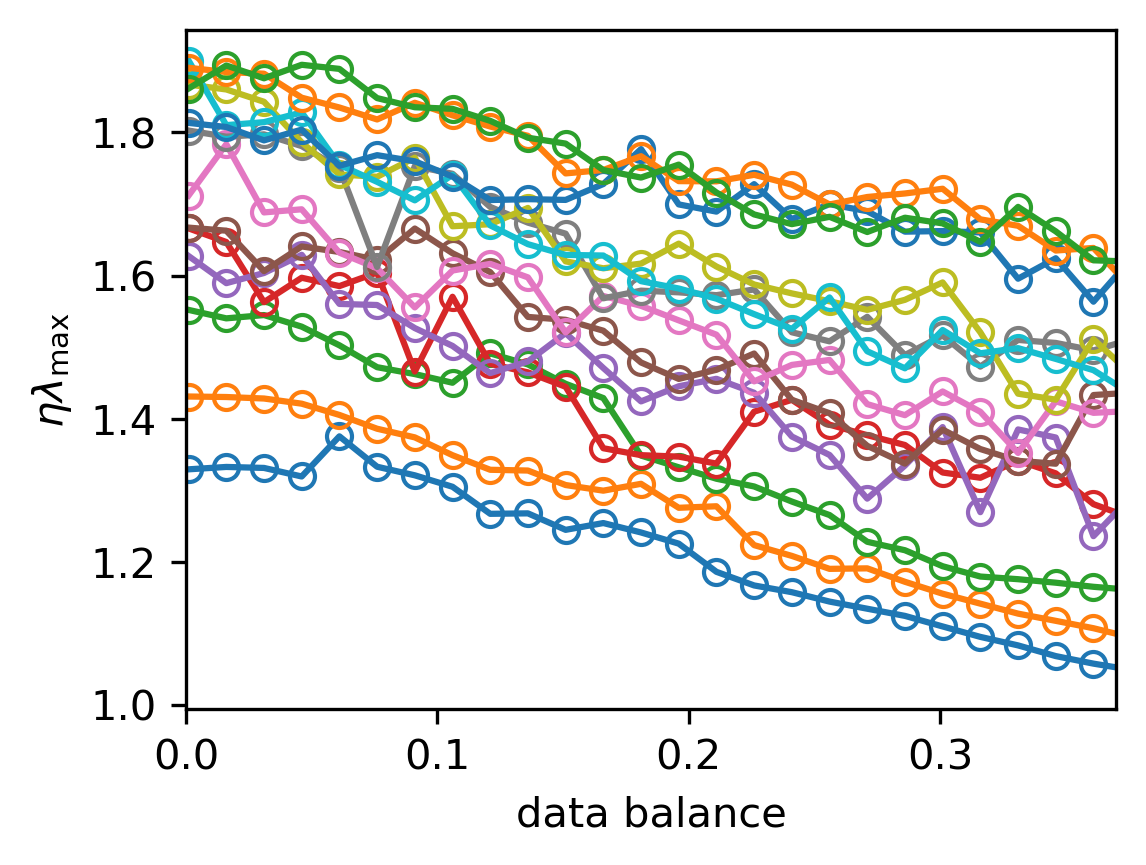}
    \caption{Sharpness at convergence for a two-hidden-layer ReLU network. The setting is identical to that of Figure~\ref{fig:eos phase diagram}. Again, we see that higher imbalance in the label leads to a sharper solution.}
    \label{fig:relu sharpnes}
\end{figure}
\subsection{Edge of Stability}\label{app sec: eos}
See Figure~\ref{fig:eos phase diagram} and \ref{fig:eos dynamics}, where we train a two-layer linear network on a linear regression task with a 2d label $y\in\mathbb{R}^2$. The labels $y= V^*x +\epsilon$, for a ground truth matrix $V^*$ and iid zero-mean noise $\epsilon$ such that $\Sigma_\epsilon={\rm diag}(1, \phi_x)$, where $\phi_x \in (0, 1)$ is called the ``data balance."

We train with different learning rates and data balance. The training proceeds with SGD with a batchsize of 32 for $4\times 10^4$ iterations. Input $x \in \mathbb{R}^2$ is drawn from a standard Gaussian distribution, and the model has dimensions $2\to 10 \to 2$.
 
See Figure~\ref{fig:relu sharpnes} for an experiment with ReLU. Here, the architecture is has dimensions $2\to 10 \to 10 \to 2$.

\clearpage

\section{Theory}
\subsection{Scale Invariance Leads to Flattening}\label{app sec: scale inv}
The symmetry generator is $A=I$. We have
\begin{equation}
\frac{d}{d\lambda}F_{\eta,0}(e^\lambda\theta)=-\eta\mathbb{E}_x(\nabla_\theta\ell(x,\theta)^T\nabla_\theta\ell(x,\theta) = -S<0
\end{equation}
by Theorem \ref{theo:exp symmetry}. As $\frac{d}{d\lambda}F_{\eta,\gamma}(e^\lambda\theta)|_{\lambda=0}=\theta^T\nabla F_{\eta,0}$. Therefore, when we do gradient descent along $-\nabla F_{\eta,0}$, $\lambda$ monotonously increases. Meanwhile, 
\begin{equation}
T(e^\lambda\theta)=\Tr[e^{-2\lambda}\mathbb{E}\nabla^2\ell(x,\theta)]
\end{equation}
decreases with $\lambda$, and thus the sharpness decreases along training.
\subsection{Proof of Theorem~\ref{theo: effective loss}}
\begin{proof}
For notational simplicity we drop the subscript $\gamma$ in the proof. When running gradient descent on $\ell(x,\theta)$, we have
\begin{equation}
    \theta_1=\theta_t-\Lambda \nabla\ell(x,\theta_0).
\end{equation}
When running gradient descent on $\phi_\Lambda$, we have
\begin{equation}
    \theta_1'=\theta_0-\frac{\Lambda}{n}(\nabla \ell(x,\theta_0)+\nabla\phi_{1\Lambda}(x,\theta_0)+\nabla\phi_{2\Lambda}(x,\theta_0))+O(||\Lambda||^3)
\end{equation}
and
\begin{equation}
\begin{aligned}
\theta_{2}'&=\theta_1'-\frac{\Lambda}{n}(\nabla \ell(x,\theta_{1}')+\nabla\phi_1(x,\theta'_{1})+\nabla\phi_2(x,\theta_1'))+O(||\Lambda||^4)\\
&=\theta_0-\frac{2\Lambda}{n}\nabla\ell(x,\theta_0)+\frac{\Lambda}{n^2}\nabla^2\ell(x,\theta_0)\Lambda\nabla\ell(x,\theta_0)-\frac{\Lambda}{n}\nabla\phi_1(x,\theta_0)\\&+\frac{\Lambda}{n^2}\nabla^2\phi_1(x,\theta_0)\Lambda\nabla\ell(x,\theta_0)-\frac{\Lambda}{n}\nabla\phi_2(x,\theta_0)+O(||\Lambda||^4).
\end{aligned}
\end{equation}
Similarly we can obtain
\begin{equation}
\begin{aligned}
\theta_{n}'
&=\theta_0-\Lambda\nabla\ell(x,\theta_0)+\frac{\Lambda}{2}\nabla^2\ell(x,\theta_0)\Lambda\nabla\ell(x,\theta_0)-\Lambda\nabla\phi_1(x,\theta_0)\\&+\frac{\Lambda}{2}\nabla^2\phi_1(x,\theta_0)\Lambda\nabla\ell(x,\theta_0)-\Lambda\nabla\phi_2(x,\theta_0)+O(||\Lambda||^4+||\Lambda||^2/n+||\Lambda||^3/n).
\end{aligned}
\end{equation}
Therefore, we have $\theta_n'=\theta_1+O(||\Lambda||^3+||\Lambda||^2/n)$ if we choose $\phi_1(x,\theta) = \frac{1}{4} \nabla\ell(x,\theta)^T\Lambda\nabla\ell(x,\theta)$.

For small $\nabla^3\ell(x,\theta)$, we have
\begin{equation}
\begin{aligned}
\nabla^2\phi_1(x,\theta_0)\Lambda\nabla\ell(x,\theta_0)&=\nabla^2\ell(x,\theta_0)\Lambda\nabla^2\ell(x,\theta_0)\Lambda\nabla\ell(x,\theta_0)+O(||\Lambda||^2||\nabla^3\ell(x,\theta)||)\\&=\frac{1}{2}\nabla[\nabla\ell(x,\theta_0)^T\Lambda\nabla^2\ell(x,\theta_0)\Lambda\nabla\ell(x,\theta_0)]+O(||\Lambda||^2||\nabla^3\ell(x,\theta)||).
\end{aligned}
\end{equation}
Therefore, we can choose
\begin{equation}
\phi_2(x,\theta):=\frac{1}{2}\nabla\ell(x,\theta)^T\Lambda\nabla^2\ell(x,\theta)\Lambda\nabla\ell(x,\theta)
\end{equation}
to obtain $\theta_n'=\theta_1+O(||\Lambda||^4+||\Lambda||^2/n+||\Lambda||^3/n+||\Lambda||^3||\nabla^3\ell(x,\theta)||)$.
\end{proof}

\subsection{Proof of Theorem~\ref{theo:exp symmetry}}

\begin{proof}

By the definition of the exponential symmetry,
\begin{equation}
\ell(x,e^{\lambda A}\theta)=\ell(x,\theta),\label{eq:symmetry}
\end{equation}
Taking derivative w.r.t. $\lambda$ on \eqref{eq:symmetry}, we have that $\nabla_\theta\ell(x,\theta)^TA\theta=0$. Then taking derivative w.r.t. $\theta$, we have that $A^T\nabla_\theta\ell(x,\theta)+\nabla^2_\theta\ell(x,\theta)A\theta$=0.
 
Let $I(\lambda):=\frac{d}{d\lambda}F_{\eta,\gamma}(e^{\lambda A}\theta^*)$ and $\theta_\lambda:=e^{\lambda A}\theta^*$. Then we have
\begin{equation}
\begin{aligned}
I(\lambda)&=\frac{\eta}{2}(\theta_\lambda)^TA^T\mathbb{E}_\mathcal{B
}[\mathbb{E}_{x\in\mathcal{B}}\nabla^2\ell(x,\theta_\lambda)][\mathbb{E}_{x\in\mathcal{B}}\nabla \ell(x,\theta_\lambda)]+2\gamma(\theta_\lambda)^TA\theta_\lambda\\
&=-\frac{\eta}{2}\mathbb{E}_\mathcal{B}(\mathbb{E}_{x\in\mathcal{B}}\nabla_\theta\ell(x,\theta_\lambda))^TA\mathbb{E}_{x\in\mathcal{B}}\nabla_\theta\ell(x,\theta_\lambda)+2\gamma(\theta_\lambda)^TA\theta_\lambda\\
&=-\frac{\eta}{2}\Tr[\Sigma(\theta_\lambda)A]+2\gamma(\theta_\lambda)^TA\theta_\lambda\\
&=-\frac{\eta}{2}\Tr[\Sigma(\theta_\lambda)\tilde{A}]+2\gamma(\theta_\lambda)^T\tilde{A}\theta_\lambda,
\end{aligned}
\end{equation}
where $\Sigma(\theta_\lambda):=\mathbb{E}_\mathcal{B}[\mathbb{E}_{x\in\mathcal{B}}\nabla_\theta\ell(x,\theta_\lambda)][\mathbb{E}_{x\in\mathcal{B}}\nabla_\theta\ell(x,\theta_\lambda)]^T$ is positive semi-definite. We also use $\Tr[\Sigma(\theta_\lambda)\frac{A-A^T}{2}]=0$ and $(\theta_\lambda)^T\frac{A-A^T}{2}\theta_\lambda=0$. By \cite[Lemma B.1]{ziyin2024parameter}, we have $\Tr[\Sigma(\theta_\lambda)\tilde{A}]=\Tr[e^{-2\lambda}\Sigma(\theta^*)\tilde{A}]$, and thus
\begin{equation}
\begin{aligned}
I(\lambda)&=-\frac{\eta}{2}\Tr[\tilde{A}e^{-2\lambda \tilde{A}}\Sigma(\theta^*)]+2\gamma(\theta^*)^Te^{2\lambda \tilde{A}}\tilde{A}\theta^*\\
&=\sum_i-\frac{\eta}{2}\mu_ie^{-2\lambda\mu_i}(n_i^T\Sigma(\theta^*)n_i)+2\gamma\mu_ie^{2\lambda\mu_i}(n_i^T\theta_i^*)^2,
\end{aligned}
\end{equation}
where $\mu_i,n_i$ are eigenvalues of eigenvectors of the symmetric matrix $\tilde{A}$. Therefore,
\begin{equation}
I'(\lambda)=\sum_i\mu_i^2(\eta e^{-2\lambda\mu_i}n_i^T\Sigma(\theta^*)n_i+4\gamma e^{2\lambda\mu_i}(n_i^T\theta_i^*)^2)\geq0.
\end{equation}
We have $I(\lambda)\equiv0$ (which happens only if $\tilde{A}$ is not full rank) or $I(\lambda)$ strictly monotonic. As $\theta^*$ is a local minimum, we have $I(0)=0$, which gives \eqref{eq:local_minimum}. Then we have $I(\lambda)\equiv0$ or $I(\lambda)=0$ iff $\lambda=0$, which finishes the proof.
\end{proof}

\subsection{Proof of Theorem~\ref{theo: symmetry breaking}}

\begin{proof}

By the definition of $K$-invariance,
and taking derivative $\nabla_\theta$ of both sides of $\ell(x,K(\theta,\lambda)) = \ell(x,\theta)$, we have
\begin{equation}
\nabla_\theta K(\theta,\lambda)^T\nabla_{K(\theta,\lambda)}\ell(x,K(\theta,\lambda))=\nabla_\theta\ell(x,\theta),
\end{equation}
where the l.h.s. follows from the chain rule. This imlies that 
\begin{equation}
    (I + \lambda \nabla Q + O(\lambda^2))\nabla_{K(\theta,\lambda)}\ell(x,K(\theta,\lambda))=\nabla_\theta\ell(x,\theta),
\end{equation}
and so
\begin{equation}
   \nabla_{K(\theta,\lambda)}\ell(x,K(\theta,\lambda))=  (I - \lambda \nabla Q + O(\lambda^2))\nabla_\theta\ell(x,\theta),
\end{equation}

If $F$ is $K$-invariant, the following Equation holds:
\begin{equation}
\ell(x,K(\theta,\lambda))+||\nabla_{K(\theta,\lambda)}\ell(x,K(\theta,\lambda))||^2+||K(\theta,\lambda)||^2=\ell(x,\theta)+||\nabla_\theta\ell(x,\theta)||^2+||\theta||^2.
\label{eq:K-invariant}
\end{equation}

By the assumption, $\ell(x,K(\theta,\lambda))=\ell(x,\theta)$, and \eqref{eq:K-invariant}, we have that 
\begin{align}
||\nabla_\theta\ell(x,\theta)||^2 + ||\theta||^2 & = ||\nabla_{K(\theta,\lambda)}\ell(x,K(\theta,\lambda))||^2 + ||K(\theta,\lambda)||^2\\ 
&= ||\nabla_\theta\ell(x,\theta)||^2 + ||\theta||^2 + 2\lambda ( \nabla^T\ell\nabla Q^T \nabla \ell - \gamma  Q^T\theta)+ O(\lambda^2).
\end{align}
Thus, 
\begin{equation}
    \eta \nabla^T\ell\nabla Q^T \nabla \ell - \gamma  Q^T\theta = 0.
\end{equation}
There are two cases: (1) $\eta \nabla^T\ell\nabla Q^T \nabla \ell - \gamma Q^T\theta=0 $, and (2) $\eta \nabla^T\ell\nabla Q^T \nabla \ell - \gamma Q^T\theta \neq 0 $. 

For case (1), we are done. For case (2), the equation cannot hold for $\gamma + d \gamma$ because the first term is independent of $\gamma$. Thus, we can only have case (1).

This means that for any $\theta$
\begin{equation}
    \| K(\theta, \lambda) \|^2 = \|\theta\|^2 + O(\lambda^2),
\end{equation}
which is only possible if $\| K(\theta, \lambda) \|^2 = \|\theta\|^2$. This completes the proof.
\end{proof}

\subsection{Proof of Theorem \ref{theo:orthogonal}}
\begin{proof}
By definition we have $||O\theta||^2=||\theta||^2$. Take derivative on both sides of $\ell(x,O\theta)=\ell(x,\theta)$, we have
\begin{equation}
O^T\nabla_{O\theta}\ell(x,O\theta)=\nabla_\theta\ell(x,\theta).
\end{equation}
Thus we have
\begin{equation}
\nabla_{O\theta}\ell(x,O\theta)=O^{-T}\nabla_\theta\ell(x,\theta),
\end{equation}
which gives $||\nabla_{O\theta}\ell(x,O\theta)||^2=||O^{-T}\nabla_\theta\ell(x,\theta)||^2=||\nabla_\theta\ell(x,\theta)||^2$. Combining the above results we have $F_{\eta,\gamma}(O\theta)=F_{\eta,\gamma}(\theta)$.
\end{proof}

\subsection{Proof of Theorem \ref{theo:layer_balance}}
\begin{proof}
    Rescaling symmetry implies that if we make the transform
    \begin{equation}
        W_i \to e^{\lambda}W_i,\ W_j \to e^{-\lambda}W_j,
    \end{equation}
    $L$ does not change.

    This corresponds to the choice of 
    \begin{equation}
        A_{klm}^{\tilde{k}\tilde{l}\tilde{m}}=\begin{cases}
1 & \ k=\tilde{k}=i \\
-1 & \ k=\tilde{k}=j\\
0&\ \text{otherwise}
\end{cases}
    \end{equation}in Theorem \ref{theo:exp symmetry}, where the index $klm$ corresponds to the $m-$th element of the $l-$th unit of the $k-$th layer. Then we have
    \begin{equation}
    \eta(\mathbb{E}\Tr[g_ig_i^T-g_jg_j^T])=4\gamma(\Tr[W_iW_i^T-W_jW_j^T]).
    \end{equation}
    This finishes the proof by setting $\gamma=0$.
\end{proof}

\subsection{Proof of Theorem~\ref{theo: neuron balance}}
\begin{proof}
We can choose $A$ to be a rescaling matrix w.r.t. the $j-$th neuron of the $i-$th layer. Specifically, we choose     \begin{equation}
        A_{klm}^{\tilde{k}\tilde{l}\tilde{m}}=\begin{cases}
1 & \ k=\tilde{k}=i,l=\tilde{l}=j \\
-1 & \ k=\tilde{k}=i+1,m=\tilde{m}=j\\
0&\ \text{otherwise}
\end{cases}
    \end{equation} in Theorem \ref{theo:exp symmetry}, which gives
\begin{equation}
\eta\mathbb{E}\Tr[g_{i,j,:}g_{i,j,:}^\top-g_{i+1,:, j}g_{i+1,:,j}^\top]=4\gamma\Tr[w_{i,j,:}w_{i,j,:}^T-w_{i+1,:,j}w_{i+1,:,j}^T].
\end{equation}
\end{proof}

\subsection{Gradient Imbalance in Polynomial Networks}\label{app sec: polynomial net}

\begin{theorem}\label{theo: polynomial net balance}
(Neuron Balance) For all local minima of 
Eq.~\eqref{eq: free energy} and any $i,\ j$, 
\begin{equation}
\eta\mathbb{E}\Tr[g_{i,j,:}g_{i,j,:}^\top-dg_{i+1,:, j}g_{i+1,:,j}^\top]=4\gamma\Tr[w_{i,j,:}w_{i,j,:}^T-dw_{i+1,:,j}w_{i+1,:,j}^T].
\end{equation}
\end{theorem}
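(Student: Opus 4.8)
The plan is to mimic the proof of Theorem~\ref{theo: neuron balance}, but with the generator $A$ adapted to the homogeneity degree $d$ of the polynomial activation. The key observation is that for a polynomial layer $R_{ii}(h) = h^d$, the network output obeys $f(\ldots, e^{\lambda d} W_i, e^{-\lambda} W_{i+1}, \ldots) = f(\ldots)$ when we rescale the incoming weights to the $j$-th neuron of layer $i$ by $e^{\lambda d}$ and the outgoing weights of that neuron by $e^{-\lambda}$: passing $e^{\lambda d} w_{i,j,:}^T x$ through the activation $(\cdot)^d$ yields $e^{\lambda d^2}(\text{something})$... wait — more carefully, the pre-activation of neuron $j$ scales as $e^{\lambda d}\,h_{i,j}$, and after the degree-$d$ activation it scales as $e^{\lambda d^2}\,h_{i,j}^d$; so to cancel, the outgoing weight must scale by $e^{-\lambda d^2}$. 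Rather than re-derive the exact exponent here, I would first pin down the correct compensating scaling so that $L$ is invariant, and I expect it to be of the form $W_i \to e^{\lambda a} W_{i,j\text{-incoming}}$, $W_{i+1} \to e^{-\lambda b} W_{i+1,j\text{-outgoing}}$ with $b/a = d$ (matching the $d$ appearing in the theorem statement), which for the ReLU case $d=1$ recovers Theorem~\ref{theo: neuron balance}.

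Concretely, the first step is to identify the exponential symmetry: write $A$ as the block-diagonal generator that acts as $+a$ on the incoming-weight indices $\{(i,j,m)\}_m$ of neuron $(i,j)$ and as $-b$ on the outgoing-weight indices $\{(i+1,\ell,j)\}_\ell$, with $a,b$ chosen (via the chain of scalings through the degree-$d$ activation) so that $L(x, e^{\lambda A}\theta) = L(x,\theta)$ for all $x,\theta,\lambda$. This is a genuine $A$-exponential symmetry of $\ell$ in the sense defined before Theorem~\ref{theo:exp symmetry}. Second, since $A$ is diagonal hence symmetric, $\tilde A = A$, and I apply the Master Balance Theorem (Theorem~\ref{theo:exp symmetry}, Eq.~\eqref{eq:local_minimum}) directly: at any local minimum $\theta^*$ of Eq.~\eqref{eq: free energy},
\begin{equation}
-\eta\,\mathbb{E}_\mathcal{B}\big[\mathbb{E}_{x\in\mathcal{B}}\nabla_\theta\ell\big]^T A\,\big[\mathbb{E}_{x\in\mathcal{B}}\nabla_\theta\ell\big] + 4\gamma\,(\theta^*)^T A\,\theta^* = 0.
\end{equation}
Third, I expand both quadratic forms by reading off which parameter blocks $A$ acts on: the first term becomes $-\eta\,a\,\mathbb{E}\Tr[g_{i,j,:}g_{i,j,:}^\top] + \eta\,b\,\mathbb{E}\Tr[g_{i+1,:,j}g_{i+1,:,j}^\top]$ and the second becomes $4\gamma\,a\,\Tr[w_{i,j,:}w_{i,j,:}^T] - 4\gamma\,b\,\Tr[w_{i+1,:,j}w_{i+1,:,j}^T]$. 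Dividing through by $a$ and using $b/a = d$ gives exactly
\begin{equation}
\eta\,\mathbb{E}\Tr[g_{i,j,:}g_{i,j,:}^\top - d\,g_{i+1,:,j}g_{i+1,:,j}^\top] = 4\gamma\,\Tr[w_{i,j,:}w_{i,j,:}^T - d\,w_{i+1,:,j}w_{i+1,:,j}^T],
\end{equation}
which is the claimed identity.

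The main obstacle is getting the homogeneity bookkeeping exactly right: I need to track how a rescaling of $w_{i,j,:}$ propagates through the degree-$d$ activation and then through all subsequent (also possibly polynomial) layers, to confirm that a single compensating rescaling of the outgoing weights $w_{i+1,:,j}$ suffices to leave $f$ — and hence $\ell$ — invariant, and to read off the precise ratio of exponents. In a general deep polynomial network, a change at neuron $(i,j)$ only affects the output multiplicatively through that neuron's contribution, so the compensation is local to the pair of adjacent layers; the subtlety is purely the exponent arithmetic $(h \mapsto h^d)$, which fixes the factor $d$ in the statement. Once the generator $A$ is correctly specified, the remainder is a mechanical substitution into Theorem~\ref{theo:exp symmetry}, exactly parallel to the proof of Theorem~\ref{theo: neuron balance}. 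A minor additional check is that the width assumptions are not needed here (unlike the deep-linear theorems), since the argument is purely about symmetry of $\ell$ and holds at every local minimum regardless of rank.
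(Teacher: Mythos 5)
Your proposal is correct and follows essentially the same route as the paper: the paper's proof chooses the diagonal generator with $+1$ on the incoming weights $w_{i,j,:}$ and $-d$ on the outgoing weights $w_{i+1,:,j}$ (i.e.\ your $a=1$, $b=d$, ratio $b/a=d$), then substitutes into Theorem~\ref{theo:exp symmetry} exactly as you describe. Your momentary hesitation over whether the compensating exponent is $d$ or $d^2$ is immaterial, since only the ratio $b/a$ survives the division and you correctly identify it as $d$.
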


This means that unless $d=1$, these networks will either have a noise or weight explosion problem. If $\gamma=0$, the gradient fluctuation grows like $d^D$, exponential in depth $D$. When $d<1$, later layers will have an exploding noise; when $d>1$, earlier layers will have an exploding noise. When both $\eta$ and $\gamma \neq 0$, the sum of the noise and gradient norm will explode exponentially. In some sense, this implies that linear or sublinear types of activations are the only stable activations for deep neural networks.

\begin{proof}
We can still choose $A$ to be a rescaling matrix, but this time we should rescale the $i+1-$th layer more
   \begin{equation}
        w_{i,j,:} \to e^{\lambda}w_{i,j,:},\ w_{i+1,:,j} \to e^{-d\lambda}w_{i+1,:,j}.
    \end{equation}
This corresponds to    
\begin{equation}
        A_{klm}^{\tilde{k}\tilde{l}\tilde{m}}=\begin{cases}
1 & \ k=\tilde{k}=i,l=\tilde{l}=j \\
-d & \ k=\tilde{k}=i+1,m=\tilde{m}=j\\
0&\ \text{otherwise}
\end{cases}
    \end{equation} in Theorem \ref{theo:exp symmetry}, which gives
\begin{equation}
\eta\mathbb{E}\Tr[g_{i,j,:}g_{i,j,:}^\top-dg_{i+1,:, j}g_{i+1,:,j}^\top]=4\gamma\Tr[w_{i,j,:}w_{i,j,:}^T-dw_{i+1,:,j}w_{i+1,:,j}^T].
\end{equation}
This gives
\begin{equation}
 \mathbb{E}\Tr[g_{i,j,:}g_{i,j,:}^\top] = d\mathbb{E}\Tr[g_{i+1,:, j}g_{i+1,:,j}^\top]
\end{equation} by choosing $\gamma=0$.
\end{proof}

\subsection{Proof of Theorem~\ref{theo:attention}}

\begin{proof}
The double rotation symmetry can be written as
\begin{equation}
U\to e^{\lambda A}U,\ W\to e^{-\lambda A}W,
\end{equation}
where $A$ is an arbitrary matrix. We can thus choose the following generator
\begin{equation}
    A_{klm}^{\tilde{k}\tilde{l}\tilde{m}}=\begin{cases}
1 & \ k=\tilde{k}=1,\ l=i,\tilde{l}=j,\ \text{or}\ l=j,\tilde{l}=i \\
-1& \ k=\tilde{k}=2,\ l=i,\tilde{l}=j,\ \text{or}\ l=j,\tilde{l}=i\\
0&\ \text{otherwise}
\end{cases}
\end{equation} in Theorem \ref{theo:exp symmetry}, where $k=1$ corresponds to $W$ and $k=2$ corresponds to $U$. This gives
\begin{equation}
\sum_k\eta [G_{W_{ki}}G_{W_{kj}}-G_{U_{ik}}G_{U_{jk}}]=\sum_k4\gamma[W_{ki}W_{kj}-U_{ik}U_{jk}],
\end{equation}
which finishes the proof.
\end{proof}

\subsection{Proof of Lemma \ref{lemma:sharpness}}
\begin{proof}
By definition, we have
\begin{equation}
   e^{\lambda A} \nabla_{e^{\lambda A}\theta }^2 \ell (x, e^{\lambda A}\theta ) e^{\lambda A}  = \nabla^2 \ell(x, \theta),
\end{equation}
and thus
\begin{equation}
T(e^{\lambda A}\theta)=\Tr[e^{-2\lambda A}\E\nabla^2\ell(x,\theta)].
\end{equation}
Let $A:=\sum_i\mu_in_in_i^T$, and thus
\begin{equation}
T(e^{\lambda A}\theta)=\sum_{i}e^{-2\lambda\mu_i}(n_i^T\E\nabla^2\ell(x,\theta)n_i).
\end{equation}
As $A\E\nabla^2\ell(x,\theta)\neq0$, there exists $i$ such that $\mu_i\neq0$ and  $n_i^T\E\nabla^2\ell(x,\theta)n_i\neq0$.  Therefore, we have $\lim_{\lambda\to+\infty}|T(e^{\lambda A}\theta)|=+\infty$ if $\mu_i<0$, and $\lim_{\lambda\to-\infty}|T(e^{\lambda A}\theta)|=+\infty$ if $\mu_i>0$.
\end{proof}


\subsection{Proof of Theorem~\ref{theo: universal representation}}

We first prove the following theorem, which we will leverage to prove Theorem~\ref{theo: universal representation}.

\begin{theorem}
\label{theo:deep_linear}
Let $V'=\sqrt{\Sigma_\epsilon}V\sqrt{\Sigma_x}$ such that $V'=\tilde{U}S'\tilde{V}$ is its SVD and $\text{rank}(V')=d$. Assume that every layer has more than $d$ hidden units. Then if $\gamma=0$ and $\eta=0^+$, at any global minimum of \eqref{eq: free energy}, we have
\begin{equation}
\sqrt{\Sigma_\epsilon}M_1W_{D}=\tilde{U}\Sigma_DU_{D-1}^T,\ W_i=U_i\Sigma_iU_{i-1}^T,\ W_1M_2M_3\sqrt{\Sigma_x}=U_1\Sigma_1\tilde{V},
\end{equation}
for $i=2,\cdots,D-1$, where $U_i$ are arbitrary matrices satisfying $U_i^TU_i=I_{d\times d}$, and $\Sigma_x=\mathbb{E}[xx^T]$, $\Sigma_\epsilon=\mathbb{E}[\epsilon\epsilon^T]$. Moreover,
\begin{equation}
\begin{aligned}
&\Sigma_1=(\Tr S')^{-\frac{D-2}{2D}}\frac{\Tr[M_2M_3\Sigma_xM_3^TM_2^T]^{\frac{D-1}{2D}}}{\Tr[M_1^T\Sigma_\epsilon M_1]^{\frac{1}{2D}}}\sqrt{S'},\\
&\Sigma_D=(\Tr S')^{-\frac{D-2}{2D}}\frac{\Tr[M_1^T\Sigma_\epsilon M_1]^{\frac{D-1}{2D}}}{\Tr[M_2M_3\Sigma_xM_3^TM_2^T]^{\frac{1}{2D}}}\sqrt{S'},\\
&\Sigma_i=(\Tr S')^{1/D}(\Tr[M_1^T\Sigma_\epsilon M_1]\Tr[M_2M_3\Sigma_xM_3^TM_2^T])^{-\frac{1}{2D}}I_d.
\end{aligned}
\end{equation}
\end{theorem}

\begin{proof}
Consider two consecutive layers $W_i$ and $W_{i+1}$. Using Theorem \ref{theo:attention}, we have
\begin{equation}
\eta\E[G_{W_{i+1}}^T G_{W_{i+1}}-G_{W_i} G_{W_i}^T]=0.
\end{equation}
By the MSE loss $\ell(x,y)=||y-M_1W_D\cdots W_1M_2M_3x||^2$, this gives
\begin{equation}
W_ih_i\mathbb{E}[||\xi_{i+1}^T\tilde{r}||^2\tilde{x}\tilde{x}^T]h_i^TW_i^T=W_{i+1}^T\xi_{i+1}^T\mathbb{E}[||h_i\tilde{x}||^2\tilde{r}\tilde{r}^T]\xi_{i+1}W_{i+1},
\end{equation}
where $\tilde{x}=\E_{x\in\mathcal{B}}x$ and $\tilde{r}:=\E_{x\in\mathcal{B}}[y-M_1W_D\cdots W_1M_2M_3x]$ satisfy $\E\tilde{x}=\E\tilde{r}=0$, and thus $\E\tilde{x}\tilde{x}^T=\frac{\Sigma_x}{|\mathcal{B}|}$ and $\E\tilde{\epsilon}\tilde{\epsilon}^T=\frac{\Sigma_\epsilon}{|\mathcal{B}|}$. We use the fact that at the global minimum we have $M_1W_D\cdots W_1M_2M_3=V$, and thus $y-M_1W_D\cdots W_1M_2M_3x$ is independent of $x$. We denote $\xi_{i+1}:=M_1W_D\cdots W_{i+2}$, $h_i:=W_{i-1}\cdots W_1M_2M_3$ for $i=2,3,\cdots,D-2$, and $\xi_D:=M_1$, $h_1:=M_2M_3$.

Finally denote $W_1'=W_1M_2M_3\sqrt{\Sigma_x}$ and $W_D'=\sqrt{\Sigma_\epsilon}M_1W_D$, which gives $W_D'\cdots W_1'=V'$ and 
\begin{equation}
W_{i+1}^{\top} \frac{W_{i+2}^{\top} \cdots W_D^{\prime \top} W_D^{\prime} \cdots W_{i+2}}{\Tr \left[W_{i+2}^{\top} \cdots W_D^{\prime \top} W_D^{\prime} \cdots W_{i+2}\right]} W_{i+1}=W_i \frac{W_{i-1} \cdots W_1^{\prime} W_1^{\prime \top} \cdots W_{i-1}^{\top}}{\Tr \left[W_{i-1} \cdots W_1^{\prime} W_1^{\prime \top} \cdots W_{i-1}^{\top}\right]} W_i^{\top}
\label{eq:eqi}
\end{equation}
for $i=2,3,\cdots,D-2$. For $i=1$ we have
\begin{equation}
W_{2}^{\top} \frac{W_{3}^{\top} \cdots W_D^{\prime \top} W_D^{\prime} \cdots W_{3}}{\Tr \left[W_{3}^{\top} \cdots W_D^{\prime \top} W_D^{\prime} \cdots W_{3}\right]} W_{2}= \frac{W_1^{\prime} W_1^{\prime \top} }{\Tr \left[M_2M_3\Sigma_x M_2^\top M_3^\top\right]} 
\label{eq:eq1}
\end{equation}
and for $i=D-1$ we have
\begin{equation}
\frac{W_D^{\prime \top} W_D^{\prime}}{\Tr \left[M_1^T\Sigma_\epsilon M_1\right]}=W_{D-1} \frac{W_{D-2} \cdots W_1^{\prime} W_1^{\prime \top} \cdots W_{D-2}^{\top}}{\Tr \left[W_{D-2} \cdots W_1^{\prime} W_1^{\prime \top} \cdots W_{D-2}^{\top}\right]} W_{D-1}^{\top}.
\label{eq:eqD}
\end{equation}
Lemma \ref{lemma:decomposition} proves that we can decompose the matrices $W_1',W_2,\cdots,W_{D-1},W_D'$ as
\begin{equation}
W_D'=U_D\Sigma_DU_{D-1}^T,\ W_{D-1}=U_{D-1}\Sigma_{D-1}U_{D-2}^T,\cdots,\ W_1'=U_1\Sigma_1U_0.
\end{equation}

By minimizing \eqref{eq: free energy} at $\eta=0^+$, we need to minimize
\begin{equation}
\mathbb{E}||y-W_D\cdots W_1x||^2=\mathbb{E}||\epsilon||^2+||(V-W_D\cdots W_1)\sqrt{\Sigma_x}||^2.
\end{equation}
Thus we have
\begin{equation}
(V-W_D\cdots W_1)\sqrt{\Sigma_x}=0,
\end{equation}
which gives
\begin{equation}
W_D'\cdots W_1'=\sqrt{\Sigma_\epsilon}V\sqrt{\Sigma_x}=V'.
\end{equation}
Then we obtain $U_D=\tilde{U}$, $U_0=\tilde{V}$ and
\begin{equation}
\Sigma_D\Sigma_{D-1}\cdots\Sigma_1=S'.
\label{eq:S'}
\end{equation}
We can assume $\Sigma_D,\cdots,\Sigma_1\in\mathbb{R}^{d\times d}$ because their ranks are the same by \eqref{eq:eqi}, \eqref{eq:eq1} and \eqref{eq:eqD}.

\eqref{eq:eqi} gives
\begin{equation}
\frac{\Sigma_{i+1}^2\Sigma_{i+2}^2\cdots\Sigma_D^2}{\Tr [\Sigma_{i+2}^2\cdots\Sigma_D^2]}=\frac{\Sigma_1^2\cdots\Sigma_{i-1}^2\Sigma_i^2}{\Tr[\Sigma_1^2\cdots\Sigma_{i-1}^2]},
\end{equation}
and thus $\Sigma_i=cI_d$ for $i=2,3,\cdots,D-2$ and
\begin{equation}
\frac{\Sigma_1^2}{\Tr[\Sigma_1^2]}=\frac{\Sigma_D^2}{\Tr[\Sigma_D^2]}.
\label{eq:Sigma1D_1}
\end{equation}
\eqref{eq:eq1} and \eqref{eq:eqD} give
\begin{equation}
\frac{\Sigma_{2}^2\Sigma_{3}^2\cdots\Sigma_D^2}{\Tr [\Sigma_{3}^2\cdots\Sigma_D^2]}=\frac{\Sigma_1^2}{\Tr[M_2M_3\Sigma_xM_3^TM_2^T]},\ \frac{\Sigma_D^2}{\Tr[M_1^T\Sigma_\epsilon M_1]}=\frac{\Sigma_1^2\cdots\Sigma_{D-1}^2}{\Tr[\Sigma_1^2\cdots\Sigma_{D-2}^2]},
\end{equation}
and thus
\begin{equation}
c^2\frac{\Sigma_D^2}{\Tr[\Sigma_D^2]}=\frac{\Sigma_1^2}{\Tr[M_2M_3\Sigma_xM_3^TM_2^T]},\ \frac{\Sigma_D^2}{\Tr[M_1^T\Sigma_\epsilon M_1]}=c^2\frac{\Sigma_1^2}{\Tr[\Sigma_1^2]}.
\label{eq:Sigma1D_2}
\end{equation}
Combining \eqref{eq:S'}, \eqref{eq:Sigma1D_1} and \eqref{eq:Sigma1D_2}, we finish the proof.

\end{proof}

Now, we are ready to prove Theorem~\ref{theo: universal representation}.
\begin{proof}
By Theorem \ref{theo:deep_linear}, at any global minimum of \eqref{eq: free energy}, the solution of a $D_A$-layer network for the dataset $\mathcal{D}_M$ is given by
\begin{equation}
\sqrt{\Sigma_\epsilon}M_1W^A_{D_A}=\tilde{U}\Sigma_DU_{D_A-1}^T,\ W^A_i=U_i\Sigma_iU_{i-1}^T,\ W^A_1M_2M_3\sqrt{\Sigma_x}=U_1\Sigma_1\tilde{V}
\end{equation}
for $i=2,\cdots,D-1$, where $U_i$ are arbitrary matrices satisfying $U_i^TU_i=I_{d\times d}$, and
\begin{equation}
\Sigma_1\propto\sqrt{S'},\ \Sigma_D\propto\sqrt{S'},\ \Sigma_i\propto I_d
\end{equation}
for some constants $c_1,c_2,c_3$. The solution suggests that
\begin{equation}
h_A^{L_A}(x)=\Pi_{i=1}^{L_A}W_i^AM_2M_3x\propto U_{L_A}\sqrt{S'}\tilde{V}\Sigma_x^{-1/2}x.
\label{eq:solution_A}
\end{equation}
Similarly
\begin{equation}
h_B^{L_B}(x)=\Pi_{i=1}^{L_B}W_i^BM_2'M_3'x\propto U_{L_B}\sqrt{S'}\tilde{V}\Sigma_x^{-1/2}x.
\label{eq:solution_B}
\end{equation}

The proof is complete by comparing \eqref{eq:solution_A} and \eqref{eq:solution_B}.
\end{proof}

One might also consider the case where the minimal width of network B is $d_B<d$. In this case, we denote $\bar{S}'\in\mathbb{R}^{d_B\times d_B}$ containing top $d_B$ values of $S'$. Then we have
\begin{equation}
h_B^{L_B}(x)=\Pi_{i=1}^{L_B}W_i^BM_2'M_3'\propto U_{L_B}\sqrt{\bar{S}'}\tilde{V}\Sigma_x^{-1/2}x.
\end{equation}
It is now not fully aligned with $h_A(x)$. To calculate the alignment, the corresponding kernels are
\begin{equation}
K_A(x_1,x_2)=h_A^{L_A}(x_1)^Th_A^{L_A}(x_2)=c_1x_1^T\Sigma_x^{-1/2}\tilde{V}^TS'\tilde{V}\Sigma_x^{-1/2}x_2
\end{equation}
and
\begin{equation}
K_B(x_1,x_2)=h_B^{L_B}(x_1)^Th_B^{L_B}(x_2)=c_2 x_1^T\Sigma_x^{-1/2}\tilde{V}^T\bar{S}'\tilde{V}\Sigma_x^{-1/2}x_2
\end{equation}
for some constants $c_1,c_2>0$. We then have
\begin{equation}
\langle K_A,K_A\rangle_F=\mathbb{E}K_A(x_1,x_2)^2=c_1^2\Tr[(S')^2].
\end{equation}
Similarly we have
\begin{equation}
\langle K_B,K_B\rangle_F=c_2^2\Tr[(\bar{S}')^2]
\end{equation}
and
\begin{equation}
\langle K_A,K_B\rangle_F=\mathbb{E}K_A(x_1,x_2)K_B(x_1,x_2)=c_1c_2\Tr[\bar{S}'S'].
\end{equation}
Therefore, the alignment is given by
\begin{equation}
\frac{\langle K_A,K_B\rangle_F}{\sqrt{\langle K_A,K_A\rangle_F\langle K_B,K_B\rangle_F}}=\frac{\Tr[\bar{S}'S']}{\sqrt{\Tr[(S')^2]\Tr[(\bar{S}')^2]}}=\sqrt{\frac{\Tr[(\bar{S}')^2]}{\Tr[(S')^2]}},
\end{equation}
which is some value between $0$ and $1$.

In the end of this section we proves the following technical lemma.
\begin{lemma}
\label{lemma:decomposition}
Suppose that matrices $W_1,W_2,\cdots,W_D$ satisfy
\begin{equation}
W_{i+1}^TW_{i+2}^T...W_D^TW_D...W_{i+2}W_{i+1}=\lambda_iW_iW_{i-1}...W_1W_1^T...W_{i-1}^TW_i
\end{equation}
for some $\lambda_i>0$ and $i=1,2,\cdots,D-1$, then we have write the SVD of $W_1,W_2,\cdots,W_D$ as
\begin{equation}
W_D=U_D\Sigma_DU_{D-1}^T,\ W_{D-1}=U_{D-1}\Sigma_{D-1}U_{D-2}^T,\cdots,\ W_1=U_1\Sigma_1U_0,
\end{equation}
where $\Sigma_D,\cdots,\Sigma_1$ are the singular values.
\end{lemma}
\begin{proof}
Denote $P_i:=(W_i\cdots W_1)(W_i\cdots W_1)^T$. We then have
\begin{equation}
W_{i+1}^TW_{i+2}^T...W_D^TW_D...W_{i+2}W_{i+1}=\lambda_iP_i
\end{equation}
and
\begin{equation}
W_{i+2}^T...W_D^TW_D...W_{i+2}=\lambda_{i+1}W_{i+1}P_iW_i^T,
\end{equation}
which gives
\begin{equation}
P_i=\frac{\lambda_{i+1}}{\lambda_i}S_{i+1}P_iS_{i+1},
\end{equation}
where $S_{i+1}:=W_{i+1}^TW_{i+1}.$

Suppose that $S_{i+1}=V\Lambda V^T$, and thus we have
\begin{equation}
A=c\Lambda A\Lambda,
\end{equation}
where $c:=\frac{\lambda_{i+1}}{\lambda_i}$ and $A:=V^TP_iV$. The $(j,k)$ element gives
\begin{equation}
A_{jk}(1-c\lambda_j\lambda_k)=0.
\end{equation}
If $A_{jk}\neq0$, as $A$ is semi-definite, we also have $A_{jj},A_{kk}\neq0$, which gives $c\lambda_i^2=c\lambda_j^2=1$. Therefore, we have $A\Lambda=\Lambda A$, and thus
\begin{equation}
P_iS_{i+1}=S_{i+1}P_i.
\end{equation}
This shows that $P_i$ and $S_{i+1}$ share the same eigenspace. Moreover, we have $P_i=W_iP_{i-1}W_i^T$. Denote $W_i=U_i\Sigma_iU_{i-1}^T$ and $W_{i+1}=U_{i+1}\Sigma_{i+1}V$. As $P_{i-1}$ share the same eigenspace with $W_i^TW_i$, we have $P_{i-1}=U_{i-1}\Lambda U_{i-1}^T$ for some diagonal matrix $\Lambda$, which gives $P_i=U_i\Sigma_i\Lambda\Sigma_i U_i$. As $P_i$ share the same eigenspace with $W_{i+1}^TW_{i+1}=V^T\Sigma_{i+1}^2V$, we obtain $V=U_i$, which finishes the proof.
\end{proof}

\subsection{Proof of Theorem \ref{theo:deep_linear_wd}}
\begin{proof}
By Theorem \ref{theo:attention} we have
\begin{equation}
W_{i+1}^TW_{i+1}=W_iW_i^T
\end{equation}
for $i=1,\cdots,D-1$, which suggests that $W_i=U_i\Sigma_i U_{i-1}^T$ with $\Sigma_i^2=\Sigma_{i+1}^2$. At the global minimum we have $U_D(\Pi_{i=1}^D\Sigma_i) U_0^T=\Pi_{i=1}^DW_i=M_1^{-1}VM_2^{-1}$, which shows that the left side is the SVD of $M_1^{-1}VM_2^{-1}$. This finishes the proof.
\end{proof}

\subsection{Proof of Theorem~\ref{theo: sharpness}}
\begin{proof}
By Theorem \ref{theo:deep_linear} we have
\begin{equation}
\sqrt{\Sigma_\epsilon} U=\frac{\Tr[\Sigma_\epsilon]^{\frac{1}{4}}}{\Tr[\Sigma_x]^{\frac{1}{4}}}\tilde{U}\sqrt{S'}U_1^T,\ W\sqrt{\Sigma_x}=\frac{\Tr[\Sigma_x]^{\frac{1}{4}}}{\Tr[\Sigma_\epsilon]^{\frac{1}{4}}}U_1\sqrt{S'}\tilde{V}.
\end{equation}
By \cite[Proposition 5.3]{ziyin2025parameter} we have 
\begin{equation}
\begin{aligned}
S(\theta)&=d_y\Tr[W\Sigma_xW^T]+||U||_F^2\Tr[\Sigma_x]\\&=d_y\sqrt{\frac{\Tr[\Sigma_x]}{\Tr[\Sigma_\epsilon]}}\Tr[S']+\sqrt{\Tr[\Sigma_x]\Tr[\Sigma_\epsilon]}\Tr[\Sigma_\epsilon^{-1}\tilde{U}S'\tilde{U}^T]
\end{aligned}
\end{equation}
This finishes the proof of \eqref{eq:S}. 

Meanwhile we can also calculate
\begin{equation}
U^*,W^*=\arg\min_{U,W}d_y||W\Sigma_x^{1/2}||_F^2+||U||_F^2\Tr[\Sigma_x].
\end{equation}
As $UW\Sigma_x^{1/2}=V\Sigma_x^{1/2}:=\hat{U}\hat{S}\hat{V}$, the minimum is given by
\begin{equation}
U^*=\left(\frac{\Tr\Sigma_x}{d_y}\right)^{1/4}\hat{U}\sqrt{\hat{S}}\hat{U}_1,\ W^*\Sigma_x^{1/2}=\left(\frac{\Tr\Sigma_x}{d_y}\right)^{-1/4}\hat{U}_1\sqrt{\hat{S}}\hat{V}
\end{equation}
and
\begin{equation}
\min S(\theta)=2\sqrt{d_y\Tr\Sigma_x}\Tr\hat{S}
\end{equation}
where $\hat{U}_1$ is an arbitrary orthogonal matrix. 
\end{proof}



\newpage
\section*{NeurIPS Paper Checklist}
\begin{enumerate}

\item {\bf Claims}
    \item[] Question: Do the main claims made in the abstract and introduction accurately reflect the paper's contributions and scope?
    \item[] Answer: \answerYes{}
    \item[] Justification: Our abstract explicitly states that out main contribution is a theoretical framework for understanding the learning dynamics and emergent phenomena of neural networks trained with stochastic gradient descent (SGD) based an entropic loss landscape.
    \item[] Guidelines:
    \begin{itemize}
        \item The answer NA means that the abstract and introduction do not include the claims made in the paper.
        \item The abstract and/or introduction should clearly state the claims made, including the contributions made in the paper and important assumptions and limitations. A No or NA answer to this question will not be perceived well by the reviewers. 
        \item The claims made should match theoretical and experimental results, and reflect how much the results can be expected to generalize to other settings. 
        \item It is fine to include aspirational goals as motivation as long as it is clear that these goals are not attained by the paper. 
    \end{itemize}

\item {\bf Limitations}
    \item[] Question: Does the paper discuss the limitations of the work performed by the authors?
    \item[] Answer: \answerYes{} 
    \item[] Justification: We discussed this in the conclusion section.
    \item[] Guidelines:
    \begin{itemize}
        \item The answer NA means that the paper has no limitation while the answer No means that the paper has limitations, but those are not discussed in the paper. 
        \item The authors are encouraged to create a separate "Limitations" section in their paper.
        \item The paper should point out any strong assumptions and how robust the results are to violations of these assumptions (e.g., independence assumptions, noiseless settings, model well-specification, asymptotic approximations only holding locally). The authors should reflect on how these assumptions might be violated in practice and what the implications would be.
        \item The authors should reflect on the scope of the claims made, e.g., if the approach was only tested on a few datasets or with a few runs. In general, empirical results often depend on implicit assumptions, which should be articulated.
        \item The authors should reflect on the factors that influence the performance of the approach. For example, a facial recognition algorithm may perform poorly when image resolution is low or images are taken in low lighting. Or a speech-to-text system might not be used reliably to provide closed captions for online lectures because it fails to handle technical jargon.
        \item The authors should discuss the computational efficiency of the proposed algorithms and how they scale with dataset size.
        \item If applicable, the authors should discuss possible limitations of their approach to address problems of privacy and fairness.
        \item While the authors might fear that complete honesty about limitations might be used by reviewers as grounds for rejection, a worse outcome might be that reviewers discover limitations that aren't acknowledged in the paper. The authors should use their best judgment and recognize that individual actions in favor of transparency play an important role in developing norms that preserve the integrity of the community. Reviewers will be specifically instructed to not penalize honesty concerning limitations.
    \end{itemize}

\item {\bf Theory assumptions and proofs}
    \item[] Question: For each theoretical result, does the paper provide the full set of assumptions and a complete (and correct) proof?
    \item[] Answer: 
    \answerYes{}
    \item[] Justification: All theorems are stated with the full set of assumptions.
    \item[] Guidelines:
    \begin{itemize}
        \item The answer NA means that the paper does not include theoretical results. 
        \item All the theorems, formulas, and proofs in the paper should be numbered and cross-referenced.
        \item All assumptions should be clearly stated or referenced in the statement of any theorems.
        \item The proofs can either appear in the main paper or the supplemental material, but if they appear in the supplemental material, the authors are encouraged to provide a short proof sketch to provide intuition. 
        \item Inversely, any informal proof provided in the core of the paper should be complemented by formal proofs provided in appendix or supplemental material.
        \item Theorems and Lemmas that the proof relies upon should be properly referenced. 
    \end{itemize}

    \item {\bf Experimental result reproducibility}
    \item[] Question: Does the paper fully disclose all the information needed to reproduce the main experimental results of the paper to the extent that it affects the main claims and/or conclusions of the paper (regardless of whether the code and data are provided or not)?
    \item[] Answer: \answerYes{}
    \item[] Justification: We present all experiment details in Appendix \ref{app:exp}.
    \item[] Guidelines:
    \begin{itemize}
        \item The answer NA means that the paper does not include experiments.
        \item If the paper includes experiments, a No answer to this question will not be perceived well by the reviewers: Making the paper reproducible is important, regardless of whether the code and data are provided or not.
        \item If the contribution is a dataset and/or model, the authors should describe the steps taken to make their results reproducible or verifiable. 
        \item Depending on the contribution, reproducibility can be accomplished in various ways. For example, if the contribution is a novel architecture, describing the architecture fully might suffice, or if the contribution is a specific model and empirical evaluation, it may be necessary to either make it possible for others to replicate the model with the same dataset, or provide access to the model. In general. releasing code and data is often one good way to accomplish this, but reproducibility can also be provided via detailed instructions for how to replicate the results, access to a hosted model (e.g., in the case of a large language model), releasing of a model checkpoint, or other means that are appropriate to the research performed.
        \item While NeurIPS does not require releasing code, the conference does require all submissions to provide some reasonable avenue for reproducibility, which may depend on the nature of the contribution. For example
        \begin{enumerate}
            \item If the contribution is primarily a new algorithm, the paper should make it clear how to reproduce that algorithm.
            \item If the contribution is primarily a new model architecture, the paper should describe the architecture clearly and fully.
            \item If the contribution is a new model (e.g., a large language model), then there should either be a way to access this model for reproducing the results or a way to reproduce the model (e.g., with an open-source dataset or instructions for how to construct the dataset).
            \item We recognize that reproducibility may be tricky in some cases, in which case authors are welcome to describe the particular way they provide for reproducibility. In the case of closed-source models, it may be that access to the model is limited in some way (e.g., to registered users), but it should be possible for other researchers to have some path to reproducing or verifying the results.
        \end{enumerate}
    \end{itemize}

\item {\bf Open access to data and code}
    \item[] Question: Does the paper provide open access to the data and code, with sufficient instructions to faithfully reproduce the main experimental results, as described in supplemental material?
    \item[] Answer: \answerNo{}
    \item[] Justification: The experiments are only for demonstration and are straightforward to reproduce following the description.
    \item[] Guidelines:
    \begin{itemize}
        \item The answer NA means that paper does not include experiments requiring code.
        \item Please see the NeurIPS code and data submission guidelines (\url{https://nips.cc/public/guides/CodeSubmissionPolicy}) for more details.
        \item While we encourage the release of code and data, we understand that this might not be possible, so “No” is an acceptable answer. Papers cannot be rejected simply for not including code, unless this is central to the contribution (e.g., for a new open-source benchmark).
        \item The instructions should contain the exact command and environment needed to run to reproduce the results. See the NeurIPS code and data submission guidelines (\url{https://nips.cc/public/guides/CodeSubmissionPolicy}) for more details.
        \item The authors should provide instructions on data access and preparation, including how to access the raw data, preprocessed data, intermediate data, and generated data, etc.
        \item The authors should provide scripts to reproduce all experimental results for the new proposed method and baselines. If only a subset of experiments are reproducible, they should state which ones are omitted from the script and why.
        \item At submission time, to preserve anonymity, the authors should release anonymized versions (if applicable).
        \item Providing as much information as possible in supplemental material (appended to the paper) is recommended, but including URLs to data and code is permitted.
    \end{itemize}

\item {\bf Experimental setting/details}
    \item[] Question: Does the paper specify all the training and test details (e.g., data splits, hyperparameters, how they were chosen, type of optimizer, etc.) necessary to understand the results?
    \item[] Answer: \answerYes{}
    \item[] Justification:  We disclose all hyperparameters of experiments.
    \item[] Guidelines:
    \begin{itemize}
        \item The answer NA means that the paper does not include experiments.
        \item The experimental setting should be presented in the core of the paper to a level of detail that is necessary to appreciate the results and make sense of them.
        \item The full details can be provided either with the code, in appendix, or as supplemental material.
    \end{itemize}

\item {\bf Experiment statistical significance}
    \item[] Question: Does the paper report error bars suitably and correctly defined or other appropriate information about the statistical significance of the experiments?
    \item[] Answer: \answerYes{}
    \item[] Justification: We plot the standard error as the shaded area.
    \item[] Guidelines:
    \begin{itemize}
        \item The answer NA means that the paper does not include experiments.
        \item The authors should answer "Yes" if the results are accompanied by error bars, confidence intervals, or statistical significance tests, at least for the experiments that support the main claims of the paper.
        \item The factors of variability that the error bars are capturing should be clearly stated (for example, train/test split, initialization, random drawing of some parameter, or overall run with given experimental conditions).
        \item The method for calculating the error bars should be explained (closed form formula, call to a library function, bootstrap, etc.)
        \item The assumptions made should be given (e.g., Normally distributed errors).
        \item It should be clear whether the error bar is the standard deviation or the standard error of the mean.
        \item It is OK to report 1-sigma error bars, but one should state it. The authors should preferably report a 2-sigma error bar than state that they have a 96\% CI, if the hypothesis of Normality of errors is not verified.
        \item For asymmetric distributions, the authors should be careful not to show in tables or figures symmetric error bars that would yield results that are out of range (e.g. negative error rates).
        \item If error bars are reported in tables or plots, The authors should explain in the text how they were calculated and reference the corresponding figures or tables in the text.
    \end{itemize}

\item {\bf Experiments compute resources}
    \item[] Question: For each experiment, does the paper provide sufficient information on the computer resources (type of compute workers, memory, time of execution) needed to reproduce the experiments?
    \item[] Answer: \answerYes{}
    \item[] Justification: Personal computers are sufficient for all our experiments.
    \item[] Guidelines:
    \begin{itemize}
        \item The answer NA means that the paper does not include experiments.
        \item The paper should indicate the type of compute workers CPU or GPU, internal cluster, or cloud provider, including relevant memory and storage.
        \item The paper should provide the amount of compute required for each of the individual experimental runs as well as estimate the total compute. 
        \item The paper should disclose whether the full research project required more compute than the experiments reported in the paper (e.g., preliminary or failed experiments that didn't make it into the paper). 
    \end{itemize}
    
\item {\bf Code of ethics}
    \item[] Question: Does the research conducted in the paper conform, in every respect, with the NeurIPS Code of Ethics \url{https://neurips.cc/public/EthicsGuidelines}?
    \item[] Answer: \answerYes{}
    \item[] Justification: We closely follow the NeurIPS Code of Ethics.
    \item[] Guidelines:
    \begin{itemize}
        \item The answer NA means that the authors have not reviewed the NeurIPS Code of Ethics.
        \item If the authors answer No, they should explain the special circumstances that require a deviation from the Code of Ethics.
        \item The authors should make sure to preserve anonymity (e.g., if there is a special consideration due to laws or regulations in their jurisdiction).
    \end{itemize}

\item {\bf Broader impacts}
    \item[] Question: Does the paper discuss both potential positive societal impacts and negative societal impacts of the work performed?
    \item[] Answer: \answerNA{}
    \item[] Justification: There is no societal impact of the work performed.
    \item[] Guidelines:
    \begin{itemize}
        \item The answer NA means that there is no societal impact of the work performed.
        \item If the authors answer NA or No, they should explain why their work has no societal impact or why the paper does not address societal impact.
        \item Examples of negative societal impacts include potential malicious or unintended uses (e.g., disinformation, generating fake profiles, surveillance), fairness considerations (e.g., deployment of technologies that could make decisions that unfairly impact specific groups), privacy considerations, and security considerations.
        \item The conference expects that many papers will be foundational research and not tied to particular applications, let alone deployments. However, if there is a direct path to any negative applications, the authors should point it out. For example, it is legitimate to point out that an improvement in the quality of generative models could be used to generate deepfakes for disinformation. On the other hand, it is not needed to point out that a generic algorithm for optimizing neural networks could enable people to train models that generate Deepfakes faster.
        \item The authors should consider possible harms that could arise when the technology is being used as intended and functioning correctly, harms that could arise when the technology is being used as intended but gives incorrect results, and harms following from (intentional or unintentional) misuse of the technology.
        \item If there are negative societal impacts, the authors could also discuss possible mitigation strategies (e.g., gated release of models, providing defenses in addition to attacks, mechanisms for monitoring misuse, mechanisms to monitor how a system learns from feedback over time, improving the efficiency and accessibility of ML).
    \end{itemize}
    
\item {\bf Safeguards}
    \item[] Question: Does the paper describe safeguards that have been put in place for responsible release of data or models that have a high risk for misuse (e.g., pretrained language models, image generators, or scraped datasets)?
    \item[] Answer: \answerNA{} 
    \item[] Justification: Our paper poses no such risks.
    \item[] Guidelines:
    \begin{itemize}
        \item The answer NA means that the paper poses no such risks.
        \item Released models that have a high risk for misuse or dual-use should be released with necessary safeguards to allow for controlled use of the model, for example by requiring that users adhere to usage guidelines or restrictions to access the model or implementing safety filters. 
        \item Datasets that have been scraped from the Internet could pose safety risks. The authors should describe how they avoided releasing unsafe images.
        \item We recognize that providing effective safeguards is challenging, and many papers do not require this, but we encourage authors to take this into account and make a best faith effort.
    \end{itemize}

\item {\bf Licenses for existing assets}
    \item[] Question: Are the creators or original owners of assets (e.g., code, data, models), used in the paper, properly credited and are the license and terms of use explicitly mentioned and properly respected?
     \item[] Answer: \answerYes{} 
    \item[] Justification: We use pytorch. We use MNIST and CIFAR datasets, and the publically available pretrained weights from the pytorch website.
    \item[] Guidelines:
    \begin{itemize}
        \item The answer NA means that the paper does not use existing assets.
        \item The authors should cite the original paper that produced the code package or dataset.
        \item The authors should state which version of the asset is used and, if possible, include a URL.
        \item The name of the license (e.g., CC-BY 4.0) should be included for each asset.
        \item For scraped data from a particular source (e.g., website), the copyright and terms of service of that source should be provided.
        \item If assets are released, the license, copyright information, and terms of use in the package should be provided. For popular datasets, \url{paperswithcode.com/datasets} has curated licenses for some datasets. Their licensing guide can help determine the license of a dataset.
        \item For existing datasets that are re-packaged, both the original license and the license of the derived asset (if it has changed) should be provided.
        \item If this information is not available online, the authors are encouraged to reach out to the asset's creators.
    \end{itemize}

\item {\bf New assets}
    \item[] Question: Are new assets introduced in the paper well documented and is the documentation provided alongside the assets?
    \item[] Answer: \answerNA{} 
    \item[] Justification: Our paper does not release new assets.
    \item[] Guidelines:
    \begin{itemize}
        \item The answer NA means that the paper does not release new assets.
        \item Researchers should communicate the details of the dataset/code/model as part of their submissions via structured templates. This includes details about training, license, limitations, etc. 
        \item The paper should discuss whether and how consent was obtained from people whose asset is used.
        \item At submission time, remember to anonymize your assets (if applicable). You can either create an anonymized URL or include an anonymized zip file.
    \end{itemize}

\item {\bf Crowdsourcing and research with human subjects}
    \item[] Question: For crowdsourcing experiments and research with human subjects, does the paper include the full text of instructions given to participants and screenshots, if applicable, as well as details about compensation (if any)? 
    \item[] Answer: \answerNA{} 
    \item[] Justification: Our paper does not release new assets.
    \item[] Guidelines:
    \begin{itemize}
        \item The answer NA means that the paper does not involve crowdsourcing nor research with human subjects.
        \item Including this information in the supplemental material is fine, but if the main contribution of the paper involves human subjects, then as much detail as possible should be included in the main paper. 
        \item According to the NeurIPS Code of Ethics, workers involved in data collection, curation, or other labor should be paid at least the minimum wage in the country of the data collector. 
    \end{itemize}

\item {\bf Institutional review board (IRB) approvals or equivalent for research with human subjects}
    \item[] Question: Does the paper describe potential risks incurred by study participants, whether such risks were disclosed to the subjects, and whether Institutional Review Board (IRB) approvals (or an equivalent approval/review based on the requirements of your country or institution) were obtained?
    \item[] Answer: \answerNA{} 
    \item[] Justification: Our paper does not involve crowdsourcing nor research with human subjects.
    \item[] Guidelines:
    \begin{itemize}
        \item The answer NA means that the paper does not involve crowdsourcing nor research with human subjects.
        \item Depending on the country in which research is conducted, IRB approval (or equivalent) may be required for any human subjects research. If you obtained IRB approval, you should clearly state this in the paper. 
        \item We recognize that the procedures for this may vary significantly between institutions and locations, and we expect authors to adhere to the NeurIPS Code of Ethics and the guidelines for their institution. 
        \item For initial submissions, do not include any information that would break anonymity (if applicable), such as the institution conducting the review.
    \end{itemize}

\item {\bf Declaration of LLM usage}
    \item[] Question: Does the paper describe the usage of LLMs if it is an important, original, or non-standard component of the core methods in this research? Note that if the LLM is used only for writing, editing, or formatting purposes and does not impact the core methodology, scientific rigorousness, or originality of the research, declaration is not required.
    \item[] Answer: \answerNA{}
    \item[] Justification: This research does not involve LLMs as any important, original, or non-standard components.
    \item[] Guidelines:
    \begin{itemize}
        \item The answer NA means that the core method development in this research does not involve LLMs as any important, original, or non-standard components.
        \item Please refer to our LLM policy (\url{https://neurips.cc/Conferences/2025/LLM}) for what should or should not be described.
    \end{itemize}

\end{enumerate}

\end{document}